\definecolor{bleu}     {RGB}{ 49,140,231}
\definecolor{cardinal} {RGB}{196, 30, 58}
\definecolor{emerald}  {RGB}{ 80,200,120}
\definecolor{lightgrey}{RGB}{230,230,230}
\definecolor{myorange}{RGB}{230,159,0}
\definecolor{myskyblue}{RGB}{86,180,233}
\definecolor{mybluegreen}{RGB}{0,158,115}
\definecolor{myyellow}{RGB}{240,228,66}
\pgfplotsset{%
  compat                 = 1.17,
  filter discard warning = false, 
  %
  discard if not/.style 2 args={
    x filter/.append code={
      \edef\tempa{\thisrow{#1}}
      \edef\tempb{#2}
      \ifx\tempa%
        \tempb%
      \else%
        
      \fi
    }
  },
}
\pgfplotsset{%
  lineplot/.style = {%
    axis x line*      = bottom,
    axis y line*      = left,
    tick align        = outside,
    legend cell align = left,
    legend pos        = south east,
    legend columns    = -1,
    legend style      = {%
      draw         = lightgrey,
      font         = \fontsize{4}{5}\selectfont,
      at           = {(0.01, 0.02)},
      anchor       = south west,
    },
    legend image post style = {%
      scale = 0.5
    },
    grid              = none,
    major grid style  = {%
      lightgrey,
      thin
    },
    yticklabel style  = {
      /pgf/number format/precision=2,
      /pgf/number format/fixed,
      /pgf/number format/fixed zerofill,
    },
    thick,
    enlarge x limits  = 0.02,
    enlarge y limits  = 0.02,
    xmin              = 0.0,
    xmax              = 1.0,
    xlabel            = {$t$},
    height            = 3.75cm,
    width             = 0.90\linewidth,
    cycle list/Dark2,
    /tikz/font = {\small},
  }
}
\DeclarePairedDelimiter{\norm}{\lVert}{\rVert} 
\newcommand{\Pt}{\mathbf{P}_{t}}
\newcommand\review[1]{\textcolor{black}{#1}} 
\newcommand\reviewtwo[1]{\textcolor{black}{#1}} 
\newcommand{\betti}[1]                  {\ensuremath{\beta_{#1}}}
\newcommand{\boundaryop}[1]             {\ensuremath{\partial_{#1}}\xspace}
\newcommand{\chaingroup}[1]             {\ensuremath{C_{#1}}\xspace}
\newcommand{\diagram}                   {\ensuremath{\mathcal{D}}\xspace}
\newcommand{\homologygroup}[1]          {\ensuremath{H_{#1}}\xspace}
\newcommand{\naturals}                  {\ensuremath{\mathds{N}}\xspace}
\newcommand{\persistentbetti}        [2]{\ensuremath{\betti{#1}^{(#2)}}}
\newcommand{\persistenthomologygroup}[2]{\ensuremath{\homologygroup{#1}^{(#2)}}}
\newcommand{\reals}                     {\ensuremath{\mathds{R}}\xspace}
\newcommand{\simplicialcomplex}         {\ensuremath{\mathrm{K}}\xspace}
\DeclareMathOperator*{\argmin}          {arg\,min}
\DeclareMathOperator{\conv}             {conv}
\DeclareMathOperator{\diam}             {diam}
\DeclareMathOperator{\dist}             {d}
\DeclareMathOperator{\bottleneck}       {\dist_{b}}  
\DeclareMathOperator{\dhausdorff}       {\dist_{H}}
\DeclareMathOperator{\dgromovhausdorff} {\dist_{GH}}
\DeclareMathOperator{\im}               {im}
\DeclareMathOperator{\persistence}      {pers}
\DeclareMathOperator{\rank}             {rank}
\DeclareMathOperator{\vietoris}         {\mathcal{V}}
\newcommand{\mP}{\mathbf{P}}
\newcommand{\mK}{\mathbf{K}}
\newcommand{\mA}{\mathbf{A}}
\newcommand{\mD}{\mathbf{D}}
\newcommand{\X}{\mathsf{X}}
\newcommand{\Xt}{\mathsf{X}_t}
\newcommand{\Kt}{\mathbf{K}_t}
\newcommand{\inP}[1]{\mathbf{P}^{(#1)}}
\newcommand{\interior}{\mathrm{int}}
\setlist[enumerate]{leftmargin=.5in}
\setlist[itemize]{leftmargin=.5in}
\crefname{hypothesis}{Hypothesis}{Hypotheses}
\title{Time-inhomogeneous diffusion geometry and topology\thanks{Submitted to the editors March 28, 2022. $^\star$ Equal contribution. $^{\star\star}$ Equal senior contribution.
\funding{This work was partially funded by IVADO Professor funds, CIFAR AI Chair, and NSERC Discovery grant 03267 [\emph{G.W.}]; NSF grant DMS-1845856~[\emph{M.H.}]; and NIH grant NIGMS-R01GM135929~[\emph{M.H.,G.W.,S.K.}]. The content provided here is solely the responsibility of the authors and does not necessarily represent the official views of the funding agencies.}}}
\author{
Guillaume Huguet$\,^{\star,}$\thanks{Dept. of Math. \& Stat., Universit\'{e} de Montr\'{e}al; Mila - Quebec AI Institute, Montreal, QC, Canada}
\and Alexander Tong$\,^{\star,}$\thanks{Dept. of Comp.\ Sci.\ \& Oper.\ Res., Universit\'{e} de Montr\'{e}al; Mila - Quebec AI Institute, Montreal, QC, Canada}
\and Bastian Rieck$\,^{\star,}$\thanks{Institute of AI for Health, Helmholtz Munich \& Technical University of Munich, Munich, Germany}
\and Jessie Huang$\,^{\star,}$\thanks{Depts. of Comp. Sci. \& Genetics, Yale University, New Haven, CT, USA}
\and Manik Kuchroo$\,$\footnotemark[5]
\and 
Matthew Hirn$\,^{\star\star,}$\thanks{Depts. of CMSE \& Mathematics, Michigan State University, East Lansing, MI, USA}
\and Guy Wolf$\,^{\star\star,}$\footnotemark[2]$\;\,^{,}$\thanks{Correspondence to \email{guy.wolf@umontreal.ca} and \email{smita.krishnawamy@yale.edu}}
\and Smita Krishnaswamy$\,^{\star\star,}$\footnotemark[5]$\;\,^{,}$\footnotemark[7]}
\DeclarePairedDelimiter\abs{\lvert}{\rvert}
\let\oldabs\abs
\def\abs{\@ifstar{\oldabs}{\oldabs*}}
\begin{document}

\maketitle

\begin{abstract}
    Diffusion condensation is a dynamic process that yields a sequence of multiscale data representations that aim to encode meaningful abstractions. It has proven effective for manifold learning, denoising, clustering, and visualization of high-dimensional data. Diffusion condensation is constructed as a time-inhomogeneous process where each step first computes and then applies a diffusion operator to the data. We theoretically analyze the convergence and evolution of this process from geometric, spectral, and topological perspectives. From a geometric perspective, we obtain convergence bounds based on the smallest transition probability and the radius of the data, whereas from a spectral perspective, our bounds are based on the eigenspectrum of the diffusion kernel. Our spectral results are of particular interest since most of the literature on data diffusion is focused on homogeneous processes. From a topological perspective, we show diffusion condensation generalizes centroid-based hierarchical clustering. We use this perspective to obtain a bound based on the number of data points, independent of their location. To understand the evolution of the data geometry beyond convergence, we use topological data analysis. We show that the condensation process itself defines an intrinsic condensation homology. We use this intrinsic topology as well as the ambient persistent homology of the condensation process to study how the data changes over diffusion time. We demonstrate both types of topological information in well-understood toy examples. Our work gives theoretical insights into the convergence of diffusion condensation, and shows that it provides a link between topological and geometric data analysis. 
\end{abstract}

\begin{keywords}%
  diffusion, time-inhomogeneous process, topological data analysis, persistent homology, hierarchical clustering
\end{keywords}

\begin{AMS}
  57M50, 57R40, 62R40, 37B25, 68xxx
\end{AMS}


\section{Introduction}

Graph representations of high-dimensional data have proven useful in
many applications such as visualization, clustering, and denoising.
Typically, a set of data points is described by a graph using a pairwise
affinity measure, stored in an affinity matrix. With this
matrix, one can define the random walk operator or the graph Laplacian,
and use numerous tools from graph theory to characterize the input
data. Diffusion operators are closely related to random walks on
a graph, as they describe how heat (or gas) propagates across the vertices.
Using powers of this
operator yields a time-homogeneous Markov process that has been extensively studied. Most notably, Coifman et
al.~\cite{coifman_diffusion_2006} proved that, under specific conditions, this operator converges to
the heat kernel on an underlying continuous manifold. Manifold learning methods like diffusion maps \cite{coifman_diffusion_2006} define an embedding via
the eigendecomposition of the diffusion operator. Other methods, such as PHATE
\cite{moon_visualizing_2019}, embed a diffusion-based distance by
multidimensional scaling. Various clustering algorithms rely on the
eigendecomposition of this operator (or the resulting Laplacian)
\cite{maggioni2019learning,von2007tutorial}. However, this homogeneous process requires a bandwidth in order to fix and determine the scale of the captured data manifold. If we are interested in considering multiple scales of the data~\cite{brugnone_coarse_2019,kuchroo_multiscale_2020}, or if the data is sampled from a time-varying manifold~\cite{marshall2018time}, we need a time-inhomogenous process.

In this paper, we focus on the time-inhomogeneous diffusion process for
a given initial set of data points. This process is known as
\emph{Diffusion Condensation} \cite{brugnone_coarse_2019} and yields a
representation of the data by a sequence of datasets, each at
a different granularity. This sequence is obtained by iteratively
applying a diffusion operator. It has proven effective for tasks such as
denoising, clustering, and manifold learning
\cite{brugnone_coarse_2019,kuchroo_multiscale_2020,marshall2018time,szlam2008regularization,van2018recovering}.
In this work, we study theoretical questions of diffusion
condensation. Thus, we define conditions on the
diffusion operators such that the process converges to a \emph{single}
point. 
\review{The convergence to a point is a valuable characteristic, as it is a necessary condition for any process that sweeps a complete
range of granularities of the data.} We present this analysis from
a geometric and a spectral perspective, addressing different families
of operators. We also study how the intrinsic shape of the condensed datasets evolves through
condensation time using tools from topological data analysis. In
particular, we define an intrinsic filtration based on the condensation
process, resulting in the notions of persistent and condensation homology, for studying individual condensation steps or for summarizing
the entire process, respectively. Making use of a topological
perspective, we also prove the relation between diffusion condensation
and types of hierarchical clustering algorithms. 

The paper is organized as follows. In \cref{sec:Diffusion Condensation}, we present an overview of diffusion condensation. In \cref{sec:geometric}, we develop a geometric analysis of the process, most importantly we prove its convergence to a point. In \cref{sec:spectral}, we study the convergence of the process from a spectral perspective. In \cref{sec:topological}, we present a topological analysis of the process and relate diffusion condensation to existing hierarchical clustering algorithms.


\section{Diffusion condensation}\label{sec:Diffusion Condensation}



In order to establish the setup and scope for our work, we first formalize here the diffusion condensation framework, and provide a unifying view of design choices and algorithms used to empirically evaluate its efficacy in previous and related work.


\subsection{Notations and setup}\label{sec:dc:notation}

Let $\X = \{x(j) : j=1,\ldots,N\} \subset \reals^d$ be an input dataset of $N$ data points in $d$ dimensions. Given a symmetric nonnegative affinity kernel $k: \reals^d \times \reals^d \rightarrow \reals$, with $0 \le k(x,y) = k(y,x) \le 1$, $x,y \in \reals^d$, we define an $N \times N$ kernel matrix $\mK$ with entries $\mK(i,j) := k(x(i),x(j))$, which can be regarded as a weighted adjacency matrix of a graph capturing the intrinsic geometry of the data. Furthermore, the kernel and resulting graph are often considered as providing a notion of locality in the data, which can be tuned by a kernel bandwidth parameter $\epsilon$. We defer discussion of specific $k$ dependent on $\epsilon$ to \cref{sec:dc:parameters}, but mention that it can be regarded as a proxy for the size or (local) radius of the neighborhoods defined by the kernel.
The diffusion framework for manifold learning~\cite{coifman_diffusion_2006,moon_visualizing_2019} uses this construction to define a Markov process over the intrinsic structure of the data by normalizing the kernel matrix with a diagonal degree matrix $\mD := \mathrm{diag}(d(1), d(2), \ldots, d(N))$ where $d(i) := \sum_j \mK(i,j)$, resulting in a row stochastic Markov matrix $\mP := \mD^{-1} \mK$, known as the (discrete) \emph{diffusion operator}. Traditionally, time-homogeneous diffusion processes leverage powers $\mP^\tau$ of this diffusion operator, for diffusion times $\tau\in\naturals$, to capture underlying data-manifold structure in $\X$ and to organize the data along this structure~\cite{coifman_diffusion_2006, moon_visualizing_2019}. 

Here, on the other hand, we follow the diffusion condensation
approach~\cite{brugnone_coarse_2019} and use a time-inhomogeneous
process, where the diffusion operator (and underlying finite dataset)
vary over time. We consider a sequence of
datasets $\Xt = \{x_t(j) : j=1,\ldots,N\}$, ordered along diffusion
condensation time $t\in\naturals$, with corresponding diffusion
operators $\mP_t$, each constructed over the corresponding $\Xt$. With
a slight abuse of notation we often refer to $\X_t$ as a set or as
an $N\times d$ matrix, where $x_{t}(j)$ is the j-th row or equivalently
the j-th element of the set. At time $t = 0$ we consider the input
dataset, with its (traditional) diffusion operator, while for each $t
> 0$ we take $\Xt := \mP^\tau_{t-1} \X_{t-1}$, with the usual matrix
multiplication. Then, instead of powers of a single diffusion operator,
the $t$-step condensation process is defined via $\mP^{(t-1)} :=
\mP^\tau_{t-1}\dotsm \mP^\tau_0$, and thus we can also directly write
$\X_{t} = \mP^{(t-1)}\X_0$. \review{Note that $\mP^{(t-1)}$ is constructed from a collection of operators based on different datasets, and potentially different bandwidth parameters or kernels, therefore making the process time-inhomegenous.} For simplicity, we keep the diffusion time
$\tau$, but it could also depend on the condensation time $t$. Finally,
we use the notation $\X^{(T)} := \X_0, \X_1, \dots, \X_T$ for
a sequence of datasets up to finite time $T$, and denote the diameter of
the dataset at time $t$ as $\diam(\Xt) := \max_{x,y \in \Xt} \|x
- y\|_2$.

\subsection{Related work using diffusion condensation for data analysis and open questions}\label{sec:dc:related}

The diffusion condensation algorithm first proposed in Brugnone et
al.~\cite{brugnone_coarse_2019} has been applied for data analysis in
a number of areas. Moyle et al.~\cite{moyle_structural_2021} applied
diffusion condensation to study neural connectomics between species and
identify biologically meaningful substructures. Kuchroo et
al.~\cite{kuchroo_multiscale_2020} applied diffusion condensation to
embed and visualize single-cell proteomic data to explore the effect of
COVID-19 on the immune system. Kuchroo et
al.~\cite{kuchroo_topological_2021} applied diffusion condensation on
single-nucleus RNA sequencing data from human retinas with age-related
macular degeneration (AMD) and found a potential drug target by
exploring the topological structure of the resulting diffusion
condensation process. van Dijk et al.~\cite{van2018recovering} applied
one step of diffusion condensation ($T=1$) with high $\tau$ to
single-cell RNA sequencing data to impute gene expression. They showed
that high $\tau$ improves the quality of downstream tasks such as
gene-gene relationships and visualization. These works demonstrate the
empirical utility of diffusion condensation in a number of settings,
specifically when multiscale clustering and visualization is
needed and the data lies on a manifold.

The diffusion condensation process is a particular type of
time-inhomogeneous diffusion process. General time-inhomogeneous
diffusion processes over time-varying data were studied in
\cite{marshall2018time}, where it was proposed
to use the singular value decomposition of the operator $\mP^{(t)}$ to embed an arbitrary
sequence of datasets $\X^{(T)}$ according to their space-time geometry.
Additionally, if those datasets $\X^{(T)}$ were sampled from a manifold
$(\mathcal{M}, g(t))$ with time-varying metric tensor $g(t)$, it was shown in~\cite{marshall2018time} that as $N, T \rightarrow \infty$,
the operator $\mP^{(t)}$ converges to the heat kernel of $(\mathcal{M},
g(t))$. 
We also note a resemblance to the mean shift algorithm
\cite{fukunaga1975estimation,cheng1995mean}, which relies on
a kernel-based estimation of $\nabla \log p(x)$, where $p(x)$ is the
unknown density from which the points are sampled. The processed dataset is 
recursively updated via $x_{t+1}(i) = x_t(i) + \epsilon\nabla \log
p(x)$, which effectively moves all points toward a mode of the
distribution, hence creating clusters. 

Motivated by these empirical successes and inspired by the general theoretical results on time-inhomogeneous diffusion processes, we consider two open questions specific to the diffusion condensation process. Under what conditions does the diffusion condensation algorithm converge? How can the topology of the diffusion condensation process be understood?

\subsection{Theoretical contributions}\label{sec:dc:contributions}

The main contribution of this paper is to
address these open questions and establish the
underpinnings of diffusion condensation. 
Our investigation is divided into three perspectives. First, we
investigate the convergence properties of diffusion condensation under
various parameter regimes from a geometric perspective in
\cref{sec:geometric}, i.e., arrangement of data points in spatial
coordinates. This geometric perspective gives an intuitive sense of
convergence for a large family of kernels with minimum tail bounds.
Next, in \cref{sec:spectral}, we investigate convergence from a spectral
graph theory perspective and prove convergence in terms of the spectral
properties of the kernel, viewing diffusion condensation as
a non-stationary Markov process. A spectral perspective gives bounds in
terms of the eigenvalues of the kernel, which can give better rates of
convergence depending on considered data. Finally, in
\cref{sec:topological}, we investigate the topological characteristics
of diffusion condensation. \review{%
Here we describe both the structure of the dataset
at each condensation step individually via its persistent homology, 
as well as the topology of the condensation
process itself, which we refer to as \emph{condensation homology}.}
Additionally, we link the topology of the diffusion condensation process
to hierarchical clustering and prove how it generalizes centroid
linkage.

\subsection{Algorithm}\label{sec:dc:algorithm}

\begin{algorithm}
\caption{Diffusion Condensation}
\label{alg:condensation}
\begin{algorithmic}[1]
\STATE{Input: Dataset $\X_0$, initial kernel parameter $\epsilon_0$, diffusion time $\tau$, and merge radius $\zeta$}
\STATE{Output: Condensed datasets $\X^{(T)}$}
\FOR{$t \in \{0,1,\dotsc,T-1\}$}
    
    \STATE{$\Kt \gets \textup{kernel}(\Xt, \epsilon_t)$}
    \STATE{$\Pt \gets \mD^{-1}_t \mK_t$}
    \STATE{$\X_{t+1} \gets \Pt^\tau \Xt$}
    \STATE{$\epsilon_{t+1} \gets \textup{update}(\epsilon_t, \X_{t+1})$}
    \FOR{$x_t(i), x_t(j) \in \Xt$}
            \STATE{$\textup{merge}(x_t(i), x_t(j)) \text{ if } \|x_t(i) - x_t(j)\|_2 < \zeta$}\label{lst:Merge event}
    \ENDFOR
\ENDFOR
\STATE{$\X^{(T)} \gets \{\X_0, \X_1, \ldots, \X_T\}$}
\end{algorithmic}
\end{algorithm}

The diffusion condensation algorithm summarizes input data with a series of representations, organized by condensation time, with earlier representations providing low level, microscopic details and later representations providing overall, macroscopic summarizations. Each time step of diffusion condensation can be broken up into five main steps. 
\begin{enumerate}
    \item Construct a kernel matrix $\Kt$ summarizing similarities between points.
    \item Construct a Markov normalized diffusion operator $\Pt$.
    \item Diffuse the data coordinates $\tau$ steps using $\Pt^\tau$.
    \item Update the kernel bandwidth $\epsilon$ according to some $\texttt{update}$ function.
    \item (Optionally) merge points within distance $\zeta$.
\end{enumerate}
\cref{alg:condensation} shows pseudocode for this process.
At each time step, the positions of points are updated based on the
predefined kernel through $\tau$ steps of diffusion. Intuitively, this
can be thought of as moving each point to a kernel-weighted average of
its neighbors, 
The condensation process will behave differently depending on the choice
of kernel, the kernel bandwidth, the diffusion time, and the merging
threshold. 


\review{\cref{fig:tau-motivation} depicts the differences between time-homogenous condensation, time inhomogenous condensation, and a mixture between the two. Greater values of $\tau$ encourage the process to condense along the manifold, in contrast with other hierarchical clustering algorithms that are not able to do so. Comparing only inhomogenous condensation $\mP_{3i}$ (top) with a mixture of homogenous and inhomogenous condensation $\mP^3_i$ (middle) we see that the mixture condenses the moon structures along the manifold rather than shattering them. Both of these are able to separate out the two clusters. In contrast, the fully time-homogenous condensation process $\mP^{3i}$ (bottom) mixes eventually mixes the two moons.} For the rest of the paper, we let $\tau = 1$, but our results are valid for any $\tau\in\naturals$. Only for the spectral part, we need to consider a slight nuance, which we discuss in \cref{rem: spec_tau}.

\begin{figure}[ht]
\begin{center}
\centerline{\includegraphics[width=\columnwidth]{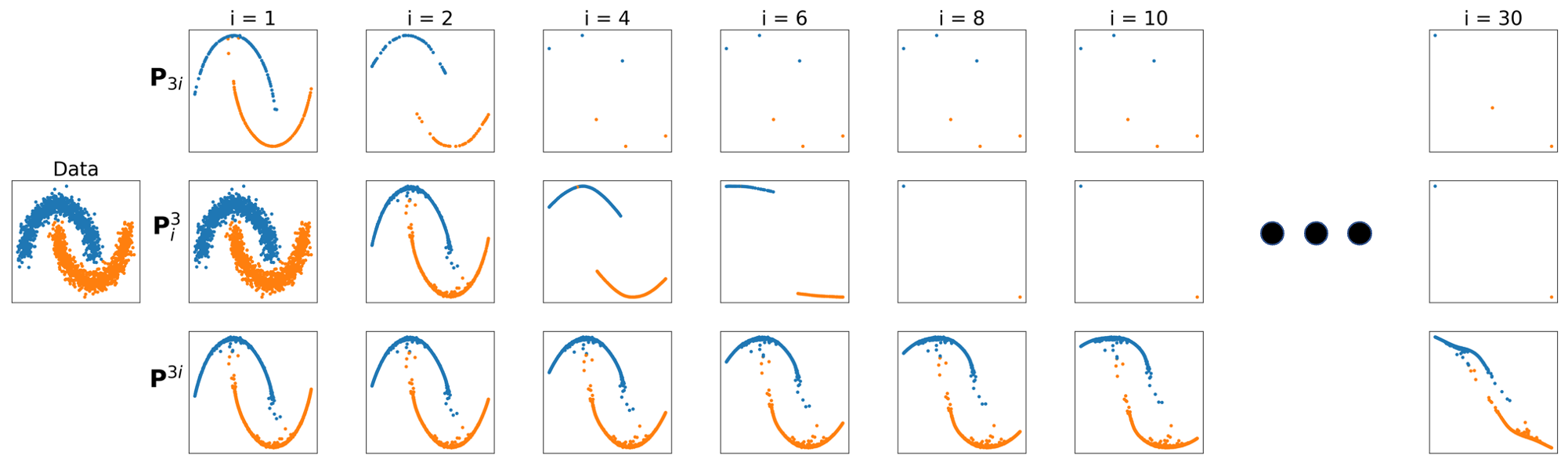}}
\caption{\review{Shows the effects of powering the diffusion operator with a power $\tau$ before condensing on the moons dataset. Shows from top to bottom the time-inhomogenous process with $\tau=1$, with $\tau=3$, and the time-homogenous process with $T = 1$ while varying $\tau$. For step $i$ this corresponds to comparing the applications of $\mP_{3i}$, $\mP^3_{i}$, and $\mP^{3i}$ to the data. $\mP_{3i}$ eventually merges, but has a semi-stable state of 6 points after shattering the moons. $\mP^3_i$ correctly identifies the two clusters of the data efficiently by first condensing along the moons individually into points. Time-homogenous condensation $\mP^{3t}$ mixes the two moons.}}
\label{fig:tau-motivation}
\end{center}
\end{figure}


\begin{remark}
  \review{The \emph{time-homogeneous} equivalent of the condensation process would be to recursively apply the same diffusion operator $\mP_0$ on the initial dataset $\X_0$. After $t$ iterations, the new dataset would simply be $\mP_0^t\X_0$. This constrasts with the \emph{time-inhomegeneous} version where we create a new diffusion operator at each iteration. Concequently, after $t$ iterations, the new dataset is $\mP_{t-1}\dotsc\mP_1\mP_0\X_0$. By using a time-inhomogeneous process, we gain more control over the convergence behavior. Indeed, since we allow for modifications of the diffusion probabilities, we can define a schedule for the parameters that could either promote or slow down the convergence of the process. Here, for simplicity, we assumped $\tau=1$, but the same remark follows for any diffusion time $\tau\in\naturals$.}
\end{remark}

\subsection{Kernels for diffusion condensation}\label{sec:dc:parameters}
Here, we review specific kernel constructions used to study condensation properties in later sections. \review{In \cref{fig:Example kernels}, we present a few iterations of the condensation process, depending on the choice of kernel used to construct the diffusion matrix.}  

\begin{definition}[Box Kernel]\label{def:box} The box kernel of bandwidth $\epsilon$ is 
\begin{equation}
    k_{\epsilon}(x,y) = 
\begin{cases}
1 & \text{ if } \|x-y\|_2 \le \epsilon \\ 
0 & \text{ else}.
\end{cases}
\end{equation}
\end{definition}
The box kernel is arguably the simplest and most interpretable kernel,
leading to interesting data summarizations, including an instance of
agglomerate clustering depending on the bandwidth as a function of time.
However, first we note some simple cases of bandwidth settings. Consider
a case where $k_t(x,y) = 1$ for all $x,y \in \X_0$. This can be
thought of as a box kernel with bandwidth greater than $\diam(\X_0)$.
Using this kernel, after a single step of diffusion condensation,
all points converge to the mean data point, $\frac{1}{n} \sum_i x_t(i)$.
This mean data point is a useful, if trivial, summarization of the
data.
Next, consider the opposite extreme, a box kernel with infinitely narrow
bandwidth, $k_t(x,y) = \{1 \text{ if } x = y \text{ else } 0\}$.
In this case, we have $\X_t = \X_0$ for all $t > 0$, resulting in
another trivial result, i.e., \emph{no} data summarization over
diffusion condensation time. Of more interest are bandwidths between
these two extremes, providing hierarchical sets of summarizations. 
%
%
Next, we consider smoother kernels.

\begin{definition}
\label{def: alpha-decay Kernel} The \emph{$\alpha$-decay kernel}~\cite{moon_visualizing_2019} of bandwidth $\epsilon$ is 
$ 
k_{\epsilon, \alpha}(x,y) = \exp(-\| x - y \|_2^\alpha / \epsilon^\alpha).
$ 
\end{definition}
The $\alpha$-decay kernel was used in \cite{kuchroo_multiscale_2020}
along with anisotropic density normalization~(see \cref{def: Anisotropic
Kernel}), which was shown to empirically speed up convergence of
diffusion condensation.

\begin{definition} 
\label{def: Gaussian_kernel} The \emph{Gaussian kernel} of bandwidth $\epsilon$ is 
$ 
k_\epsilon(x,y) = \exp(-\|x - y\|_2^2 / \epsilon).
$ 
\end{definition}
The Gaussian kernel was used in~\cite{brugnone_coarse_2019}, employing
density normalization and a merging threshold of $10^{-4}$, with
a bandwidth of $\epsilon_t$ doubling whenever the change in position of
points between $t-1$ and $t$ dropped below a separate threshold.
This kernel and setting of $\epsilon_t$ ensures that the datasets
converge to a single point in a reasonable amount of time in practice.

Another kernel that exhibits interesting behavior is the Laplace kernel; it is noteworthy since it is
positive definite for all conditionally negative definite
metrics~\cite{Feragen15a}.

\begin{definition}
\label{def: Laplacian Kernel} The \emph{Laplace kernel} of bandwidth $\epsilon$ is
$ 
    k_{\epsilon}(x,y) = \exp(-\| x - y \|_2 / \epsilon).
$ 
\end{definition}
Note that this is the same as the $\alpha$-decay kernel, with $\alpha=1$.
In fact, the Gaussian and Laplace kernels can be generalized to the $\alpha$-decay kernel, which interpolates between the Gaussian kernel when $\alpha=2$ and the box kernel as $\alpha \rightarrow \infty$.

\begin{definition}[Anisotropic Density Normalized Kernel~\cite{coifman_diffusion_2006}]\label{def: Anisotropic Kernel}
For a rotation invariant kernel $k_\epsilon(x,y)$, let
$ 
    q(x) = \int_\X k_\epsilon(x,y) q(y) \mathrm{d}y;
$ 
then a density normalized kernel with normalization factor $\beta$ is given by
$ 
k_{\epsilon, \beta}(x,y) = \frac{k_\epsilon(x,y)}{q^\beta(x) q^\beta(y)}.
$ 
\end{definition}


\begin{figure}[tbp]
  \centering
  \begin{tikzpicture}
    \begin{groupplot}[%
      group style = {%
        group size     = 5 by 4,
        ylabels at     = edge left,
        horizontal sep = 0cm,
        vertical sep   = 0cm,
      },
      unit vector ratio* =  1 1 1,
      %
      xmin = -1.5,
      xmax =  1.5,
      ymin = -1.5,
      ymax =  1.5,
      width = 4cm,
      ticks = none,
    ]

    \pgfplotsset{%
      /pgfplots/group/every plot/.append style = {%
        mark size = 0.5pt,
      },
    }
    \nextgroupplot[ylabel = {Gaussian}]
      \addplot[only marks] table {./Data/Petals/petals_gaussian_n128_t00.txt};

      \nextgroupplot
      \addplot[only marks] table {./Data/Petals/petals_gaussian_n128_t01.txt};

      \nextgroupplot
      \addplot[only marks] table {./Data/Petals/petals_gaussian_n128_t05.txt};

      \nextgroupplot
      \addplot[only marks] table {./Data/Petals/petals_gaussian_n128_t15.txt};

      \nextgroupplot
      \addplot[only marks] table {./Data/Petals/petals_gaussian_n128_t20.txt};

      \nextgroupplot[ylabel = {Laplace}]
      \addplot[only marks] table {./Data/Petals/petals_laplacian_n128_t000.txt};

      \nextgroupplot
      \addplot[only marks] table {./Data/Petals/petals_laplacian_n128_t003.txt};

      \nextgroupplot
      \addplot[only marks] table {./Data/Petals/petals_laplacian_n128_t005.txt};

      \nextgroupplot
      \addplot[only marks] table {./Data/Petals/petals_laplacian_n128_t010.txt};

      \nextgroupplot
      \addplot[only marks] table {./Data/Petals/petals_laplacian_n128_t300.txt};

      \nextgroupplot[ylabel = {Box}]
      \addplot[only marks] table {./Data/Petals/petals_box_n128_t00.txt};

      \nextgroupplot
      \addplot[only marks] table {./Data/Petals/petals_box_n128_t05.txt};

      \nextgroupplot
      \addplot[only marks] table {./Data/Petals/petals_box_n128_t07.txt};

      \nextgroupplot
      \addplot[only marks] table {./Data/Petals/petals_box_n128_t10.txt};

      \nextgroupplot
      \addplot[only marks] table {./Data/Petals/petals_box_n128_t13.txt};

      \nextgroupplot[ylabel = {$\alpha$-decay}]
      \addplot[only marks] table {./Data/Petals/petals_alpha_n128_t00.txt};

      \nextgroupplot
      \addplot[only marks] table {./Data/Petals/petals_alpha_n128_t08.txt};

      \nextgroupplot
      \addplot[only marks] table {./Data/Petals/petals_alpha_n128_t13.txt};

      \nextgroupplot
      \addplot[only marks] table {./Data/Petals/petals_alpha_n128_t20.txt};

      \nextgroupplot
      \addplot[only marks] table {./Data/Petals/petals_alpha_n128_t35.txt};

    \end{groupplot}
  \end{tikzpicture}
  \caption{%
    An example of different kernels~(rows) and how they affect convergence
    behavior for the ``petals'' dataset. Convergence speed is highest in
    the box kernel~(15 iterations), followed by the Gaussian kernel~(25
    iterations), and the $\alpha$-decay kernel~(40 iterations, \review{$\alpha = 10.0$}),
    whereas the Laplace kernel requires 491 iterations to converge.
  }
  \label{fig:Example kernels} 
\end{figure}
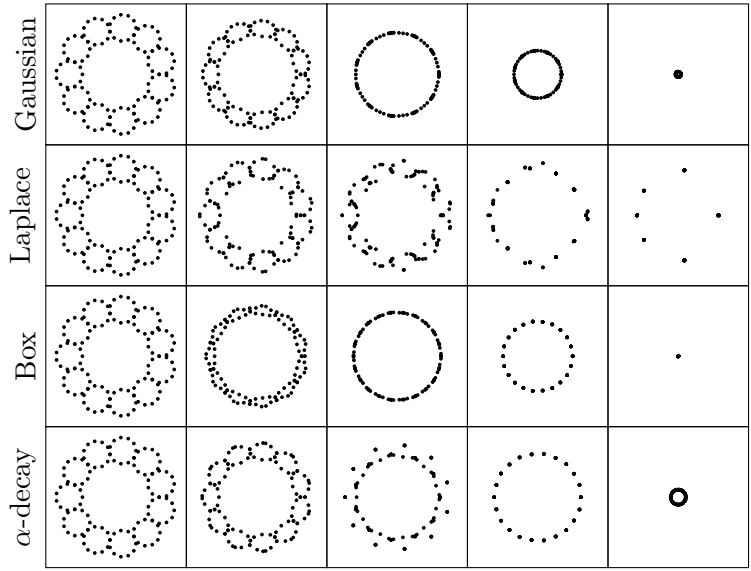

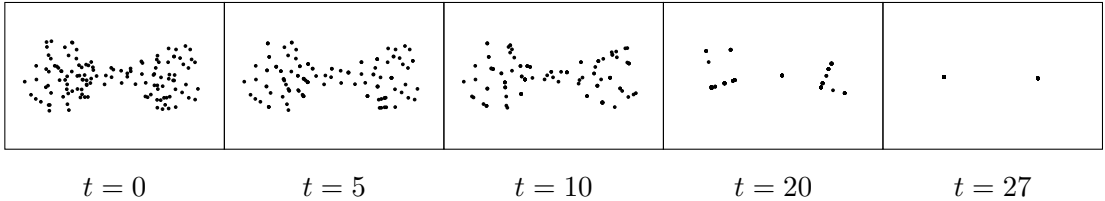
\begin{figure}[tbp]
  \centering
  \begin{tikzpicture}
    \begin{groupplot}[%
      group style = {%
        group size     = 6 by 1,
        ylabels at     = edge left,
        horizontal sep = 0cm,
        vertical sep   = 0cm,
      },
      unit vector ratio* =  1 1 1,
      %
      xmin = -0.25,
      xmax =  2.75,
      ymin = -0.50,
      ymax =  1.50,
      width = 4.5cm,
      ticks = none,
    ]
      \pgfplotsset{%
        /pgfplots/group/every plot/.append style = {%
          mark size = 0.5pt,
        },
        every axis title/.append style ={%
          at = {(0.5,-0.5)},
        },
      }
      \nextgroupplot[title = {$t = 0$}]
      \addplot[only marks] table {./Data/Barbell/barbell_box_n128_t00.txt};
      \nextgroupplot[title = {$t = 5$}]
      \addplot[only marks] table {./Data/Barbell/barbell_box_n128_t05.txt};
      \nextgroupplot[title = {$t = 10$}]
      \addplot[only marks] table {./Data/Barbell/barbell_box_n128_t10.txt};
      \nextgroupplot[title = {$t = 20$}]
      \addplot[only marks] table {./Data/Barbell/barbell_box_n128_t20.txt};
      \nextgroupplot[title = {$t = 27$}]
      \addplot[only marks] table {./Data/Barbell/barbell_box_n128_t27.txt};
    \end{groupplot}
  \end{tikzpicture}
  \caption{%
    \review{%
      Convergence behavior of a ``dumbbell'' dataset for different
      iterations of a box kernel with fixed bandwidth. The process
      converges to two points that are not connected.
    }
  }
  \label{fig:Example dumbbell} 
\end{figure}

\begin{figure}[tbp]
  \centering
  \begin{tikzpicture}
    \begin{groupplot}[%
      group style = {%
        group size     = 6 by 1,
        ylabels at     = edge left,
        horizontal sep = 0cm,
        vertical sep   = 0cm,
      },
      unit vector ratio* =  1 1 1,
      %
      xmin = -1.5,
      xmax =  1.5,
      ymin = -1.5,
      ymax =  1.5,
      width = 4cm,
      ticks = none,
    ]

    \pgfplotsset{%
      /pgfplots/group/every plot/.append style = {%
        mark size = 0.5pt,
      },
    }
    \nextgroupplot
      \addplot[only marks] table {./Data/Hyperuniform_Circle/hyperuniform_circle_gaussian_n128_t01.txt};

      \nextgroupplot
      \addplot[only marks] table {./Data/Hyperuniform_Circle/hyperuniform_circle_gaussian_n128_t10.txt};

      \nextgroupplot
      \addplot[only marks] table {./Data/Hyperuniform_Circle/hyperuniform_circle_gaussian_n128_t20.txt};

      \nextgroupplot
      \addplot[only marks] table {./Data/Hyperuniform_Circle/hyperuniform_circle_gaussian_n128_t30.txt};

      \nextgroupplot
      \addplot[only marks] table {./Data/Hyperuniform_Circle/hyperuniform_circle_gaussian_n128_t40.txt};

      \nextgroupplot
      \addplot[only marks] table {./Data/Hyperuniform_Circle/hyperuniform_circle_gaussian_n128_t50.txt};

    \end{groupplot}
  \end{tikzpicture}
  \caption{%
    The convergence behavior of a hyperuniform circle\review{---a circle with equally spaced points around its circumference---}using a Gaussian
    kernel.
  }
  \label{fig:Hyperuniform circle convergence} 
\end{figure}
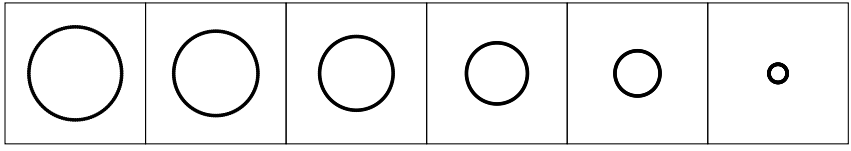

\section{Geometric properties of the condensation process}\label{sec:geometric}

We first examine the time-varying nature of the data geometry along the
diffusion condensation process. Since this process results in a sequence
of finite datasets $\Xt$, organized along condensation time, a pertinent
question is how does their underlying geometric structure change as
local variability is eliminated by the diffusion process, and whether
it eventually converges to a stable one as $t\to\infty$. In this
section, we study this question by considering two geometric
characteristics (namely, convex hull and diameter) of the data,
establish their monotonic convergence, and its relation to the tail
behavior of the kernel utilized in the construction of the diffusion
process.


Our main result here is that with appropriate kernel choice the
condensation process converges to a point, in time dependent on the
shape of the kernel. That is, for all $\zeta > 0$, there exists a $M \in
\naturals$ such that for all $t\geq M$, we have $\norm{x-y}<\zeta$ for
all $x,y \in \Xt$. Intuitively, we can make all the points arbitrarily
close by iterating the process. It is important to note that this result
requires some assumptions on the kernel in order to avoid pathological
cases where the process may converge, but not to a single point. For
instance, using the box kernel on a dumbbell dataset, each sphere would
converge to a point, but for a certain threshold these points would not
be connected. Hence, the process would reach a stable state, i.e., there
exists $M\in\naturals$ such that $\mP_M\X_M = \X_M$, but it would not
converge to a point~\review{(see \cref{fig:Example dumbbell} for an
illustration)}.
One of our goals is to define the conditions on the kernels for the process to converge to a single point.



\subsection{Diameter and convex hull convergence}

Intuitively, one can consider each diffusion condensation iteration as
eliminating local variability in
data~\cite{brugnone_coarse_2019,kuchroo_multiscale_2020}, and while
empirical results presented in previous work indicate the condensation
process can accentuate separation between weakly connected data regions,
they also indicate that the process has a global contraction property due to
the elimination of variability in the data. Thus, it appears that diffusion
condensation coarse-grains data by sweeping through granularities, from
each point being a separate entity to all data points being in a single
cluster. To establish this contractive property, and formulate a notion
of data geometry (monotone) convergence associated with it, we
characterize the geometry of each $\Xt$ via its diameter and convex
hull, whose convergence under the condensation process is shown in the
following theorem.

\begin{remark}
The diffusion condensation process is not a contractive mapping in the strict sense. During individual iterations, distances are not generally \emph{all} decreasing, i.e., there exist points $x_t(i)$ and $x_t(j)$ such that $\|x_t(i)- x_t(j)\|_2 < \|x_{t+1}(i) - x_{t+1}(j)\|_2$.
\end{remark}


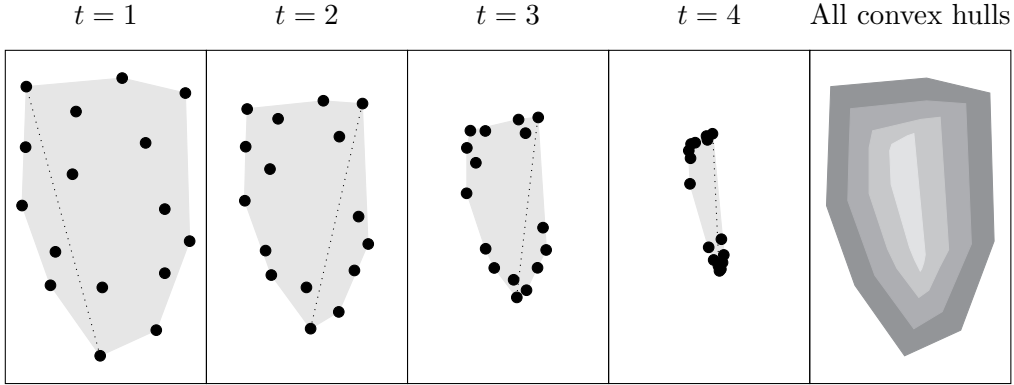
\begin{figure}[tbp]
  \centering
  \begin{tikzpicture}
    \begin{groupplot}[%
     group style = {%
        group size     = 5 by 1,
        horizontal sep = 0.0cm,
        vertical sep   = 0.0cm,
      },
      ticks              = none,
      unit vector ratio* = 1 1 1,
      xmin               = 0.49,
      xmax               = 2.85,
      ymin               = 0.60,
      ymax               = 4.50,
      height             = 6.0cm,
      enlargelimits      = true,
    ]
      \nextgroupplot[title = {$t = 1$}]

      \addplot[only marks] table{Data/Point_Cloud/point_cloud_t00.txt};

      \filldraw[lightgrey] (0.55,4.38) -- (0.49,2.71) -- (0.89,1.59) -- (1.59,0.60) -- (2.38,0.96) -- (2.85,2.21) -- (2.79,4.29) -- (1.90,4.50) -- cycle;
      \draw[dotted] (1.59,0.60) -- (0.55,4.38);

      \nextgroupplot[title = {$t = 2$}]

      \addplot[only marks] table{Data/Point_Cloud/point_cloud_t01.txt};

      \filldraw[lightgrey] (1.90,4.18) -- (0.83,4.07) -- (0.81,3.54) -- (0.79,2.78) -- (1.17,1.73) -- (1.72,0.98) -- (2.12,1.22) -- (2.53,2.17) -- (2.45,4.14) -- cycle;
      \draw[dotted] (1.72,0.98) -- (2.45,4.14);

      \nextgroupplot[title = {$t = 3$}]

      \addplot[only marks] table{Data/Point_Cloud/point_cloud_t02.txt};

      \filldraw[lightgrey] (2.21,2.09) -- (2.09,3.95) -- (1.82,3.92) -- (1.14,3.76) -- (1.09,3.52) -- (1.09,2.88) -- (1.35,2.10) -- (1.48,1.83) -- (1.79,1.42) -- (1.93,1.52) -- (2.08,1.84) -- cycle;
      \draw[dotted] (1.79,1.42) -- (2.09,3.95);

      \nextgroupplot[title = {$t = 4$}]

      \addplot[only marks] table{Data/Point_Cloud/point_cloud_t03.txt};

      \filldraw[lightgrey] (1.88,2.02) -- (1.72,3.72) -- (1.63,3.69) -- (1.41,3.57) -- (1.38,3.48) -- (1.40,3.01) -- (1.66,2.12) -- (1.73,1.95) -- (1.81,1.79) -- (1.83,1.82) -- (1.86,1.91) -- cycle;
      \draw[dotted] (1.81,1.79) -- (1.72,3.72);

      \nextgroupplot[title = {All convex hulls}]

      \filldraw[gray!100] (0.55,4.38) -- (0.49,2.71) -- (0.89,1.59) -- (1.59,0.60) -- (2.38,0.96) -- (2.85,2.21) -- (2.79,4.29) -- (1.90,4.50) -- cycle;
      \filldraw[gray!75 ] (1.90,4.18) -- (0.83,4.07) -- (0.81,3.54) -- (0.79,2.78) -- (1.17,1.73) -- (1.72,0.98) -- (2.12,1.22) -- (2.53,2.17) -- (2.45,4.14) -- cycle;
      \filldraw[gray!50 ] (2.21,2.09) -- (2.09,3.95) -- (1.82,3.92) -- (1.14,3.76) -- (1.09,3.52) -- (1.09,2.88) -- (1.35,2.10) -- (1.48,1.83) -- (1.79,1.42) -- (1.93,1.52) -- (2.08,1.84) -- cycle;
      \filldraw[gray!25 ] (1.88,2.02) -- (1.72,3.72) -- (1.63,3.69) -- (1.41,3.57) -- (1.38,3.48) -- (1.40,3.01) -- (1.66,2.12) -- (1.73,1.95) -- (1.81,1.79) -- (1.83,1.82) -- (1.86,1.91) -- cycle;

    \end{groupplot}
  \end{tikzpicture}
  \caption{%
    Illustration of \cref{thm:diameter_conv}. We depict $4$ time steps
    of the condensation process of a simple dataset. The convex hull
    of the points is shown in gray, with the diameter of $\Xt$ being
    shown as dotted line. As $t$ progresses, convex hulls shrink, with 
    $\conv (\X_{t+1}) \subsetneq \conv(\X_t)$. The rightmost figure
    shows all convex hulls of all time steps with lighter shades
    indicating later condensation time steps.
  }
  \label{fig:Convex hull}
\end{figure}

\begin{theorem}
\label{thm:diameter_conv}
Let $(\Xt)_{t\in\naturals}$ and $(\Pt)_{t\in\naturals}$ be respectively the sequence of datasets and diffusion operators generated by diffusion condensation. If the kernel used to construct each $\Pt$ is strictly (pointwise) positive, then:
\begin{enumerate}
\item Their convex hulls form a nested sequence with 
\[
\lim_{t \to \infty} \conv(\Xt) = \bigcap\limits_{t=1}^{\infty} \conv(\Xt) \neq \emptyset \text{ and convex}.
\]
\item The diameters form a convergent monotonically decreasing sequence with
\[
\lim_{t \to \infty} \diam(\Xt) = \inf_{t\geq1} \diam(\Xt) \geq 0.
\]
Further, $\diam(\X_{s+1})=\diam(\X_s)$ if and only if $\diam(\X_s) = 0$, i.e., for $s\in\naturals$ such that $\diam(\X_s)>0$, we have $\diam(\X_{s+1})<\diam(\X_{s})$.
\item If there exists $k\in\naturals$ such that $\mP_k \X_k = \X_k$, then $\conv(\X_k) = \{x_k\}$, i.e., the process converged to a single point. 
\end{enumerate}
\end{theorem}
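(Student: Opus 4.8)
The plan is to reduce all three parts to a single elementary observation: since $\Pt$ is row-stochastic and $\X_{t+1}=\Pt\Xt$, every point $x_{t+1}(i)=\sum_j \Pt(i,j)\,x_t(j)$ is a convex combination of the points of $\Xt$, and strict (pointwise) positivity of the kernel forces \emph{all} the weights $\Pt(i,j)$ to be strictly positive.

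For part~(1) I would note that $x_{t+1}(i)\in\conv(\Xt)$ for every $i$ gives $\X_{t+1}\subseteq\conv(\Xt)$, hence $\conv(\X_{t+1})\subseteq\conv(\Xt)$ by convexity of the latter. So $(\conv(\Xt))_t$ is a decreasing chain of nonempty compact convex sets (each the convex hull of a finite point set). Cantor's intersection theorem yields $\bigcap_t\conv(\Xt)\neq\emptyset$, it is convex as an intersection of convex sets, and a routine compactness argument shows that a nested chain of compacta converges to its intersection in the Hausdorff metric. Strict nesting $\conv(\X_{t+1})\subsetneq\conv(\Xt)$ whenever $\diam(\Xt)>0$ then drops out of part~(2), since equal hulls would share the same diameter.

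For part~(2), monotonicity is immediate from $\diam(\X_{t+1})=\diam(\conv(\X_{t+1}))\le\diam(\conv(\Xt))=\diam(\Xt)$, using that a bounded set and its convex hull have equal diameter together with the nesting from part~(1); being bounded below by $0$, the sequence converges to its infimum. For the equality case I would fix $s$, set $D:=\diam(\X_s)$, and expand, for arbitrary $i,j$,
\[
x_{s+1}(i)-x_{s+1}(j)=\sum_{k,l}\mathbf{P}_s(i,k)\,\mathbf{P}_s(j,l)\,\bigl(x_s(k)-x_s(l)\bigr).
\]
Applying the triangle inequality, bounding each $\norm{x_s(k)-x_s(l)}$ by $D$, and discarding the (vanishing) diagonal terms $k=l$ gives
\[
\norm{x_{s+1}(i)-x_{s+1}(j)}\le D\Bigl(1-\textstyle\sum_k\mathbf{P}_s(i,k)\,\mathbf{P}_s(j,k)\Bigr).
\]
If $D>0$, strict positivity makes $\delta:=\min_{i,j}\sum_k\mathbf{P}_s(i,k)\mathbf{P}_s(j,k)$ a minimum of finitely many positive numbers, hence positive, so $\diam(\X_{s+1})\le(1-\delta)D<D$; and if $D=0$ then $\X_s$ is a single point, which $\mathbf{P}_s$ fixes, so $\diam(\X_{s+1})=0$. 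This proves $\diam(\X_{s+1})=\diam(\X_s)$ if and only if $\diam(\X_s)=0$, and in particular the claimed strict decrease. Part~(3) is then a one-liner: $\mathbf{P}_k\X_k=\X_k$ gives $\X_{k+1}=\X_k$, so $\diam(\X_{k+1})=\diam(\X_k)$, which by the dichotomy just proved forces $\diam(\X_k)=0$, i.e.\ $\conv(\X_k)$ is a single point. (Equivalently, an extreme point of $\conv(\X_k)$ must be a strictly convex combination of points of $\X_k$, so by extremality they all coincide with it.)

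The only step needing genuine care is the strict decrease of the diameter: the whole content is that the mass retained on the diagonal, $\sum_k\mathbf{P}_s(i,k)\mathbf{P}_s(j,k)$, is bounded away from $0$ uniformly in $i,j$, which is precisely where strict positivity of the kernel enters and where the pathological stable configurations (such as the box kernel on a dumbbell, \cref{fig:Example dumbbell}) get excluded. The Hausdorff-convergence assertion in part~(1) is standard but should be spelled out with the usual extraction-of-a-convergent-subsequence argument rather than waved through.
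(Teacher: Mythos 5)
Your proof is correct, but the engine driving it is different from the paper's. The paper establishes the strict statements geometrically: it first shows that, when $\diam(\X_t)>0$, every point of $\X_{t+1}$ is a \emph{strictly} positive convex combination of the points of $\X_t$ and hence lies in the interior of $\conv(\X_t)$; combined with \cref{lem:geo_general_conv} (extremal points of the hull belong to the point set) this gives $\conv(\X_{t+1})\subseteq\interior(\conv(\X_t))$, from which both the strict diameter decrease and part~(3) are read off, and the nonempty intersection is obtained via Helly's theorem. You instead keep only the non-strict nesting for part~(1) (using Cantor's intersection theorem in place of Helly — both are fine for a decreasing chain of compacta) and obtain the strict decrease from the explicit product-coupling identity $x_{s+1}(i)-x_{s+1}(j)=\sum_{k,l}\mathbf{P}_s(i,k)\mathbf{P}_s(j,l)(x_s(k)-x_s(l))$, yielding $\diam(\X_{s+1})\le(1-\delta_s)\diam(\X_s)$ with $\delta_s=\min_{i,j}\sum_k\mathbf{P}_s(i,k)\mathbf{P}_s(j,k)>0$. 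This is essentially a self-contained, slightly weaker variant of the coupling argument the paper defers to \cref{lemma:shrinkrate} (your independent coupling gives a retained mass $\sum_k\mathbf{P}_s(i,k)\mathbf{P}_s(j,k)\le\sum_k\min\{\mathbf{P}_s(i,k),\mathbf{P}_s(j,k)\}$, hence a looser contraction factor, but strict decrease is all that is needed here). What your route buys: a quantitative per-step rate for free, no reliance on the extremal-point lemma, and it sidesteps the paper's implicit assumption that $\interior(\conv(\X_t))$ is nonempty whenever $\diam(\X_t)>0$ (which requires reading ``interior'' as relative interior when the points are affinely degenerate in $\reals^d$). What it gives up: the paper's argument delivers the strict inclusion $\conv(\X_{t+1})\subsetneq\conv(\X_t)$ as a primary geometric fact (the picture in \cref{fig:Convex hull}), whereas you recover it only as a corollary of the diameter dichotomy. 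Your derivations of parts (2) and (3) from the dichotomy are sound, and your flagged caveats (the Hausdorff-limit assertion, the uniform positivity of $\delta_s$) are exactly the right places to be careful.
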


\noindent Prior to proving \cref{thm:diameter_conv}, we first require
the following technical lemma about polytopes, which can also be found
in standard literature~\cite{lay2007convex}.
Its proof is provided in the supplementary material for completeness.
\begin{lemma}
  \label{lem:geo_general_conv}
  Let $\X \subset \reals^d$ be a set of points and $C :=
  \conv(\X)$ their convex hull. Then every \emph{extremal
  point} $v_j \in C$ satisfies $v_j \in \X$. Thus, the extremal points
  of~$C$ are a subset of $\X$.
  \label{lem:Vertices of a polytope}
\end{lemma}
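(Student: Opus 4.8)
The plan is to argue directly from the definition of an \emph{extremal} (extreme) point of a convex set: $v \in C$ is extremal if whenever $v = \lambda a + (1-\lambda) b$ with $a,b \in C$ and $\lambda \in (0,1)$, one necessarily has $a = b = v$; equivalently, $v$ is not a proper convex combination of two distinct points of $C$. So the whole proof reduces to taking an extremal point, writing it as a convex combination of points of $\X$, and using extremality to collapse that combination to a single vertex.

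First I would recall that, since $C = \conv(\X)$, every $w \in C$ admits a representation $w = \sum_{i} \lambda_i\, x(i)$ with $\lambda_i \ge 0$ and $\sum_i \lambda_i = 1$, the sum running over the (finitely many) points of $\X$. Apply this to an extremal point $v$ to obtain such coefficients $(\lambda_i)$. If some $\lambda_j = 1$ — and hence all other coefficients vanish — then $v = x(j) \in \X$ and we are done. Otherwise every nonzero coefficient lies strictly in $(0,1)$ and at least two of them are nonzero; fix an index $j$ with $\lambda_j \in (0,1)$ and set
\[
  u := \frac{1}{1 - \lambda_j} \sum_{i \ne j} \lambda_i\, x(i),
\]
which is again a convex combination of points of $\X$, hence $u \in C$. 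By construction $v = \lambda_j\, x(j) + (1-\lambda_j)\, u$ with $\lambda_j \in (0,1)$ and $x(j), u \in C$, so extremality of $v$ forces $x(j) = u = v$; again $v = x(j) \in \X$. This shows every extremal point of $C$ lies in $\X$, and since $\X$ is finite the extremal points form a finite subset of $\X$.

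The argument is essentially bookkeeping, so there is no serious obstacle; the only points that require care are (i) ensuring $u$ is well defined, i.e. $1 - \lambda_j \ne 0$, which is exactly why the index $j$ is chosen with $\lambda_j < 1$, and (ii) disposing of the degenerate situations ($\X$ a single point, or $v$ already equal to some $x(i)$), which are all subsumed by the ``$\lambda_j = 1$'' branch. One could alternatively invoke Carath\'{e}odory's theorem first to reduce to an affinely independent representation, but this is not needed for the statement as phrased, and for completeness I would keep the elementary version above.
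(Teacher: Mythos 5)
Your proof is correct and follows essentially the same route as the paper's: write the point as a convex combination of $\X$, split off one term to express it as a proper convex combination of a data point and a remaining point of $C$, and invoke the definition of extremality. The only difference is that you argue directly (extremality forces $v = x(j)$) whereas the paper argues contrapositively (a point of $C \setminus \X$ is not extremal), and your explicit handling of the $\lambda_j = 1$ case and the choice of an index with $\lambda_j \in (0,1)$ is, if anything, slightly more careful than the paper's version.
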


\noindent With \cref{lem:geo_general_conv}, we are now ready to prove
\cref{thm:diameter_conv} as follows~(see \cref{fig:Convex hull} for
an illustration of the arguments in the proof).

\begin{proof}[Proof of \cref{thm:diameter_conv}]
\review{
Denote the interior of the convex hull of $\X$ as $\interior(\conv(\X))$.} We start by proving
\begin{enumerate}[(a)]
    \item \review{ if $\diam(\X_t)>0$, then $\conv(\X_{t+1})\subseteq \interior( \conv(\X_t))$, }
    \item $\diam(\X_t)=0$ if and only if $\mP_t \X_t = \X_t$.
\end{enumerate}
To prove (a), we note that since the entries of $\Pt$ are positive and its rows sum to~$1$, each element of $\X_{t+1}$ is a \emph{convex combination} of the original data points. That is, $x_{t+1}(i) = \mP_t(i,\cdot)\X_t$, with $\mP_t(i,j) > 0$ for all $j\in\{1,\dotsc,N\}$, and $\sum_j \mP_t(i,j) = 1$. As a consequence, $\X_{t+1}$ will be formed by convex combinations, so all points in $\X_{t+1}$ lie in the \emph{interior} of $\conv(\X_t)$, which is not empty since $\diam(\X_t)>0$. From \cref{lem:Vertices of a polytope}, we know that the extremal points of $\conv(\X_{t+1})$ also lie in the interior of $\conv(\X_t)$. Hence $\conv (\X_{t+1}) \subseteq \interior(\conv(\X_t))$, which proves (a). To prove (b), we assume $\diam(\X_t)=0$. By construction of $\mP_t$, we get $\mP_t\X_t = \X_t$. Now if we assume $\mP_t\X_t = \X_t$,  we have $\conv(\X_{t+1}) = \conv(\mP_t\X_t) = \conv(\X_t)$. If $\diam(\X_t)>0$, this would contradict (a), hence $\diam(\X_t)=0$. 
Steps (a) and (b) show that the convex hulls are a nested sequence, so
$\conv(\X_t) \to \cap_{t=1}^\infty \conv(\X_t)$. Since the intersection
of convex sets is convex, the limiting set is also convex.
Finally, we use Helly's theorem \cite{lay2007convex}, \review{which
states that if an infinite collection of compact convex subsets in $\reals^d$
has a nonempty intersection for every~$d+1$ subsets, then
the collection of \emph{all} subsets has a nonempty intersection.}
Here, because of the nesting property, every subcollection has nonempty
intersection. Moreover, the convex hulls of finite sets are compact,
hence we conclude that $\cap_{t=1}^\infty\conv(\X_t)$ is not empty.

Finally, (a) and (b) imply $\diam(\X_{t+1})<\diam(\X_t)$ if
$\diam(\X_t)>0$, and $\diam(\X_{t+1})=\diam(\X_t)$ if $\diam(\X_t)=0$.
Thus, the diameters form a monotonically decreasing sequence, which
converges since the sequence is nonnegative. 
\end{proof}


\subsection{Convergence rates}\
\cref{thm:diameter_conv} applies for all strictly positive kernels.
While it is only established in terms of the diameter and convex hull of
the data, this result extends to show pointwise convergence of the
diffusion condensation process if we make further assumptions on the
rate at which the diameter sequence decreases, or establish bounds on
this rate based on the specific kernel used in the diffusion
construction. We next proceed with such an in-depth analysis,
focusing on strictly positive kernels, while noting that later, in
\cref{sec:clustering}, we will also show a convergence result for
a kernel with finite support (i.e., where the discussion here is not
valid). 
We begin with the following result relating the rate of convergence to the minimum value of the kernel over the data.
\begin{lemma}\label{lemma:shrinkrate}
If there exists a nonnegative constant $\delta$, such that $0< \delta \le \mathbf{K}_t(i,j) \le 1$ for all $ t\in\naturals$, then the diameter sequence $(\diam(\X_t))_{t\in\naturals}$ decreases at a speed of at least $1-\delta$, i.e., $\diam(\X_{t+1}) \le (1 - \delta) \diam(\X_t)$. 
\end{lemma}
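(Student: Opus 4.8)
The plan is to use the Doeblin/Dobrushin-type decomposition of the stochastic rows of $\Pt$ that is forced by the uniform lower bound $\delta$: every row of $\Pt$ contains a fixed multiple of the uniform distribution, and subtracting this common part kills differences up to the factor $1-\delta$. After that the estimate reduces to the elementary fact that any convex combination of the points $x_t(j)$ stays inside $\conv(\Xt)$, whose diameter equals $\diam(\Xt)$ by \cref{lem:geo_general_conv}.

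First I would turn the pointwise kernel bound into an entrywise bound on $\Pt$. Since $0<\delta\le \Kt(i,j)\le 1$, the degree $d_t(i)=\sum_{j=1}^N\Kt(i,j)$ satisfies $d_t(i)\le N$, hence $\Pt(i,j)=\Kt(i,j)/d_t(i)\ge \delta/N$ for all $i,j$. Assuming $\delta<1$, I would then write each row as the convex combination $\Pt(i,\cdot)=\delta\,u+(1-\delta)\,r_i$, where $u=\tfrac1N(1,\dots,1)$ is the uniform probability vector and $r_i:=(1-\delta)^{-1}\big(\Pt(i,\cdot)-\delta u\big)$; the entrywise bound makes $r_i$ nonnegative, and since $\Pt(i,\cdot)$ and $u$ are probability vectors its entries sum to $1$, so $r_i$ is again a probability vector. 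Applying both sides to $\Xt$ gives
\[
  x_{t+1}(i)=\Pt(i,\cdot)\,\Xt=\delta\,\bar x_t+(1-\delta)\,y_i,\qquad \bar x_t:=\tfrac1N\textstyle\sum_j x_t(j),\quad y_i:=\textstyle\sum_j r_i(j)\,x_t(j),
\]
with $\bar x_t,y_i\in\conv(\Xt)$.

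The common term $\delta\bar x_t$ cancels in any difference, so for all $i,j$ one has $\|x_{t+1}(i)-x_{t+1}(j)\|_2=(1-\delta)\,\|y_i-y_j\|_2\le(1-\delta)\,\diam(\conv(\Xt))$. Since $p\mapsto\|p-q\|_2$ is convex it is maximized over the polytope $\conv(\Xt)$ at an extremal point, which lies in $\Xt$ by \cref{lem:geo_general_conv}, so $\diam(\conv(\Xt))=\diam(\Xt)$; maximizing over $i,j$ then yields $\diam(\X_{t+1})\le(1-\delta)\,\diam(\Xt)$. The degenerate case $\delta=1$ forces $\Kt(i,j)=1$ for all $i,j$, hence $\Pt(i,\cdot)=u$ for every $i$ and $\diam(\X_{t+1})=0$, so the bound holds in all cases.

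I do not expect a genuine obstacle; the only things to watch are (i) using $d_t(i)\le N$ so that the pointwise bound $\delta$ becomes the entrywise bound $\delta/N$ on $\Pt$ — this is exactly what makes the uniform mass extracted from each row equal to $N\cdot(\delta/N)=\delta$, and hence produces the contraction factor $1-\delta$ rather than something weaker — and (ii) checking that the residual vectors $r_i$ really are probability vectors, so that the $y_i$ genuinely lie in $\conv(\Xt)$.
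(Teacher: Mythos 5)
Your proof is correct, and it takes a genuinely different route from the paper's. The paper bounds the total variation distance between any two rows of $\mathbf{P}_t$ by $1-\delta$ and then invokes the coupling lemma to produce a joint distribution $\xi$ with $\sum_{z_1\neq z_2}\xi(z_1,z_2)=d_{TV}(\mathbf{P}_t(i,\cdot),\mathbf{P}_t(j,\cdot))$, from which the diameter contraction follows. You instead use the Doeblin minorization directly: the entrywise bound $\mathbf{P}_t(i,j)\geq\delta/N$ lets you peel off a common $\delta$-multiple of the uniform distribution from every row, so that row differences carry only the factor $1-\delta$, and the residual convex combinations stay in $\conv(\mathsf{X}_t)$, whose diameter equals $\diam(\mathsf{X}_t)$. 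The two arguments are two faces of the same contraction principle---the minorization decomposition is the standard device for proving the Dobrushin/coupling bound---but yours is entirely self-contained and algebraic, needing no probabilistic machinery, and it makes the mechanism transparent (the shared uniform mass is what cancels). The paper's coupling formulation is the more standard Markov-chain idiom and would generalize more readily to sharper bounds via the actual Dobrushin coefficient, which can be strictly smaller than $1-\delta$. Your handling of the edge cases (verifying that the residual vectors $r_i$ are probability vectors, and treating $\delta=1$ separately) is careful and complete.
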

\begin{proof}
Here we present the key ideas of the proof, and we refer to the supplementary material for the detailed version. 
The assumption on $\mK_t$ gives the element-wise lower bound $\mP_t \geq \delta/N$, and we show that $d_{TV}(\mP_t(i,\cdot),\mP_t(j,\cdot))\leq 1-\delta$, where $d_{TV}$ is the total variation distance. Next, using a coupling $\xi$ with marginals $\mP_t(i,\cdot)$ and $\mP_t(j,\cdot)$, we can write
\begin{align*}
       \| x_{t+1}(i) - x_{t+1}(j)\|_2 & = \|(\Pt(i,\cdot)-\Pt(j,\cdot)) \Xt\|_2 =\| \sum_{i, j} \xi(i, j)(x_t(i) -x_t(j) )\|_2 \\
       & \leq\sum_{i, j} \xi(i, j) \|x_t(i) -x_t(j)\|_2 \le \sum_{i \neq j} \xi(i, j) \diam(\X_t). 
\end{align*}
We conclude with the coupling lemma, which guarantees the existence of a coupling such that $\sum_{i\neq j}\xi(i,j) = d_{TV}(\mP_t(i,\cdot),\mP_t(j,\cdot))$, thus $\diam(\X_{t+1}) \le (1 - \delta) \diam(\X_t)$.
\end{proof}
Given this result for general kernels whose tails can be lower bound by some constant, we can further state a union bound result on kernels that maintain this lower bound over the entire diffusion condensation process. 
We recall at this point the $\epsilon$ update step typically used to
expedite the condensation process when it reaches a slow contraction
meta-stable state. Previous work implemented this step with heuristics
for updating the $\epsilon$ meta-parameter. Here, we provide further
insights into the impact of this update step, to both justify it and
suggest an update schedule that provides certain convergence guarantees
via the following theorem.
\begin{theorem}
\label{thm:geo_rate}
  For some nonnegative constant $\delta$, if there exists an $\epsilon_t$ schedule such that $0< \delta\le \mathbf{K}_t(i,j) \le 1$ for all $ t\in\naturals$, then for any merge threshold $\zeta > 0$, diffusion condensation converges to a single point in $t^* = \left \lceil \frac{\log(\zeta) - \log(\diam(\X_0))}{\log (1 - \delta)} \right \rceil$ steps. 
\end{theorem}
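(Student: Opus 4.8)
The plan is to iterate Lemma~\ref{lemma:shrinkrate} and then solve a one-variable inequality. First I would note that the hypothesis — a single lower bound $0<\delta\le\mathbf{K}_t(i,j)$ holding for \emph{every} $t\in\naturals$ — is precisely what Lemma~\ref{lemma:shrinkrate} requires at each individual step, so applying that lemma repeatedly telescopes to
\[
\diam(\X_t)\;\le\;(1-\delta)\,\diam(\X_{t-1})\;\le\;\dots\;\le\;(1-\delta)^t\,\diam(\X_0)
\qquad\text{for all }t\in\naturals.
\]
If $\diam(\X_0)=0$ the dataset is already a single point and there is nothing to prove, so I would assume $\diam(\X_0)>0$; likewise $0<\delta\le 1$ forces $0<1-\delta<1$, hence $\log(1-\delta)<0$, which is what makes the logarithm manipulation below legitimate.

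Next I would identify the smallest number of steps needed for the right-hand side above to drop below the merge threshold. The condition $(1-\delta)^t\,\diam(\X_0)<\zeta$ becomes, after taking logarithms and dividing through by the \emph{negative} number $\log(1-\delta)$ (which reverses the inequality), $t>\bigl(\log\zeta-\log\diam(\X_0)\bigr)/\log(1-\delta)$, and therefore holds for every integer $t\ge t^\ast:=\bigl\lceil(\log\zeta-\log\diam(\X_0))/\log(1-\delta)\bigr\rceil$. Combining this with the telescoped bound gives $\diam(\X_{t^\ast})<\zeta$, i.e.\ $\|x_{t^\ast}(i)-x_{t^\ast}(j)\|_2<\zeta$ for every pair of indices $i,j$.

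Finally I would invoke the merge step (line~\ref{lst:Merge event} of Algorithm~\ref{alg:condensation}): since every pair of points of $\X_{t^\ast}$ lies within distance $\zeta$, they are all identified with one another, collapsing $\X_{t^\ast}$ to a single point — so the process has converged in $t^\ast$ steps. Equivalently, once the diameter is zero the dataset is a fixed point of its diffusion operator, matching part~3 of Theorem~\ref{thm:diameter_conv}.

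The argument is routine once Lemma~\ref{lemma:shrinkrate} is available; the only place calling for a little care is the bookkeeping around the ceiling and the strict-versus-non-strict inequalities — one must check that $t^\ast$ really is the \emph{first} step at which the guaranteed bound falls below $\zeta$ (the telescoped estimate is non-strict, so a borderline rational value of $(\log\zeta-\log\diam(\X_0))/\log(1-\delta)$ is the delicate case), and that the degenerate regime $\diam(\X_0)\le\zeta$, where $t^\ast\le 0$, is handled separately as an immediate consequence of the merge step applied at $t=0$. I do not expect any substantive obstacle beyond this.
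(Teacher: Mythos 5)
Your proposal is correct and follows essentially the same route as the paper: iterate \cref{lemma:shrinkrate} to get $\diam(\X_t)\le(1-\delta)^t\diam(\X_0)$, then solve $(1-\delta)^{t}\diam(\X_0)<\zeta$ for the integer $t^*$ via the ceiling. The extra care you flag about degenerate cases and the strictness of the inequality at the boundary is reasonable but goes beyond what the paper's own (terser) proof addresses.
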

\begin{proof}
  Repeated application of \cref{lemma:shrinkrate} yields $\diam(\X_{t+1}) \le (1 - \delta)^t \diam(\X_0)$. Solving for the $t^*$ such that $\diam(\X_{t^*}) < \zeta$, we have that $\diam(\X_{t^*}) \le (1 - \delta)^{t^*} \diam(\X_{0}) < \zeta$. Since $t^*$ is an integer, the ceiling suffices.
\end{proof}

\begin{remark}
  \review{\cref{thm:geo_rate} shows one advantage of using a time-inhomogeneous process, since for a given $\delta$, we can find a schedule for $\epsilon_t$ such that $0< \delta\le \mathbf{K}_t(i,j) \le 1$, and thus controlling the rate of convergence. This is in contrast with the time-homogeneous process, where condensation would be defined using the same kernel, hence losing the benefit of an adaptive $\epsilon$. }
\end{remark}


For specific forms of kernels, this result can be translated to suggest concrete ways of setting the kernel parameters at each time step such that this bound holds, and diffusion condensation achieves well behaved linear convergence. 

\begin{proposition}\label{prop:kernel_specific}
For the following kernels the specific bandwidth update suffices for the result in \cref{thm:geo_rate} to hold.
\begin{enumerate}
    \item For the $\alpha$-decay kernel, $\exp(-\| x - y \|_2^\alpha / \epsilon^\alpha)$, $\epsilon_t \ge -\diam(\X_t)^\alpha / \log(\delta)$ suffices. For $\alpha = 2$, this defines the scheduling for the Gaussian kernel. For $\alpha = 1$, this defines a scheduling for the Laplace kernel.
    \item For the density normalized kernel $k_{\epsilon, \beta}(x,y)$ combined with the $\alpha$-decay, we define $\epsilon_t^\alpha \geq -\diam(\X_t)^\alpha/\log(N^{2\beta}\delta)$, and $\epsilon_t \geq -\diam(\X_t)^2/\log(N^{2\beta}\delta)$ for the Gaussian kernel. Then, \cref{thm:geo_rate} holds, for all $\delta\in(0, 1/N^{2\beta})$. The same holds if we replace $N$ with $q_{max,t}:=\max q(i)$.
\end{enumerate}
\end{proposition}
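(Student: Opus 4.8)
The plan is to reduce every case to \cref{thm:geo_rate}: once we produce, for a given kernel, a schedule $\epsilon_t$ (allowed to depend on data already computed at step $t$, in particular on $\diam(\X_t)$, which is known once $\X_t$ is formed) that guarantees $0<\delta\le\mathbf{K}_t(i,j)\le 1$ for all $i,j$ and all $t$, the convergence statement and the step count $t^*$ follow directly. So I would just verify this two-sided bound kernel by kernel. In each case, since $0\le\|x_t(i)-x_t(j)\|_2\le\diam(\X_t)$, monotonicity of the kernel in the pairwise distance reduces the lower bound to controlling the kernel value at the maximal separation $\diam(\X_t)$; the separation $0$ only produces diagonal entries, which equal $1$ for all these kernels and hence never threaten the upper bound.

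First I would treat the $\alpha$-decay kernel $k_{\epsilon,\alpha}(x,y)=\exp(-\|x-y\|_2^\alpha/\epsilon^\alpha)$. Its exponent is nonnegative, so $\mathbf{K}_t(i,j)\le 1$ automatically. For the lower bound, $\mathbf{K}_t(i,j)\ge\exp(-\diam(\X_t)^\alpha/\epsilon_t^\alpha)$, and this is $\ge\delta$ iff $\diam(\X_t)^\alpha/\epsilon_t^\alpha\le-\log\delta$. The one thing to watch is that $\delta<1$, so $\log\delta<0$ and $-\log\delta>0$; dividing by it preserves the inequality and yields $\epsilon_t^\alpha\ge-\diam(\X_t)^\alpha/\log\delta$, the stated schedule. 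The case $\alpha=1$ is the Laplace kernel; the case $\alpha=2$, after absorbing the square into the bandwidth (the Gaussian of \cref{def: Gaussian_kernel} is $\exp(-\|x-y\|_2^2/\epsilon)$, i.e.\ the $\alpha=2$ kernel with $\epsilon$ playing the role of $\epsilon^\alpha$), gives the Gaussian schedule $\epsilon_t\ge-\diam(\X_t)^2/\log\delta$. Because $\diam(\X_t)$ is nonincreasing by \cref{thm:diameter_conv}, one may take $\epsilon_t$ nonincreasing, so the schedule is concrete and implementable.

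Next I would handle the anisotropic density-normalized kernel $k_{\epsilon,\beta}(x,y)=k_\epsilon(x,y)/(q^\beta(x)q^\beta(y))$ layered on the $\alpha$-decay kernel, where in the discrete setting $q(i)=\sum_j k_{\epsilon_t,\alpha}(x_t(i),x_t(j))$. Since the base kernel has unit diagonal and entries in $[0,1]$, we get $1\le q(i)\le N$, hence $q^\beta(i)\in[1,N^\beta]$ for $\beta\ge 0$. The bound $q^\beta\ge 1$ gives $k_{\epsilon,\beta}\le k_\epsilon\le 1$, so the upper-bound hypothesis of \cref{thm:geo_rate} still holds (equivalently, the degree bound $d(i)\le N$ used inside \cref{lemma:shrinkrate} is preserved); the bound $q^\beta\le N^\beta$ gives $k_{\epsilon,\beta}(x,y)\ge k_\epsilon(x,y)/N^{2\beta}\ge\exp(-\diam(\X_t)^\alpha/\epsilon_t^\alpha)/N^{2\beta}$. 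Demanding the right-hand side be $\ge\delta$ forces $\exp(-\diam(\X_t)^\alpha/\epsilon_t^\alpha)\ge N^{2\beta}\delta$, which is possible only if $N^{2\beta}\delta<1$, i.e.\ $\delta\in(0,1/N^{2\beta})$, exactly the stated restriction; under it $\log(N^{2\beta}\delta)<0$ and the same rearrangement as before gives $\epsilon_t^\alpha\ge-\diam(\X_t)^\alpha/\log(N^{2\beta}\delta)$ (and $\epsilon_t\ge-\diam(\X_t)^2/\log(N^{2\beta}\delta)$ for the Gaussian). Replacing the coarse bound $q^\beta(i)\le N^\beta$ by $q^\beta(i)\le q_{max,t}^\beta$ with $q_{max,t}:=\max_i q(i)$ yields the sharper version with $N$ replaced by $q_{max,t}$.

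I do not anticipate a real obstacle here — the argument is a sequence of elementary estimates — but the points needing care are exactly the two sign/normalization bookkeeping issues: keeping track that $\log\delta$ (resp.\ $\log(N^{2\beta}\delta)$) is negative when clearing denominators, which is precisely what pins down the admissible range of $\delta$ in the density-normalized case, and checking that the anisotropic normalization keeps $\mathbf{K}_t(i,j)\le 1$ so that the appeal to \cref{lemma:shrinkrate}/\cref{thm:geo_rate} is legitimate. With the bound $\delta\le\mathbf{K}_t\le 1$ verified, \cref{thm:geo_rate} applies verbatim and delivers convergence in $t^*$ steps.
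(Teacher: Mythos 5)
Your proposal is correct and follows essentially the same route as the paper's proof: bound the minimum kernel entry by its value at the diameter, rearrange (minding the sign of the logarithm) to obtain the $\epsilon_t$ schedule, and in the density-normalized case use $1\le q(i)\le q_{max,t}\le N$ to reduce to the base kernel at the cost of the factor $N^{2\beta}$, which forces $\delta<1/N^{2\beta}$. Your additional check that $q^\beta\ge 1$ preserves the upper bound $\mathbf{K}_t(i,j)\le 1$ is a detail the paper leaves implicit but is a welcome verification rather than a departure.
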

\begin{proof}
  To show (1) we need to define a scheduling of $\epsilon_t$ such that $\min_{x,y \in \mathsf{X}_t} k_\epsilon(x,y) \ge \delta > 0$. For the $\alpha$-decay kernel we have 
$\min_{x,y \in \mathsf{X}_t} k_\epsilon(x,y) = \exp(-(\diam(\X_t)/\epsilon_t)^\alpha) \geq \delta \implies \epsilon_t^\alpha \geq -\diam(\X_t)^\alpha/\log(\delta)$.
To show (2), we need $\epsilon_t$ such that $\min_{x,y}
k_{\epsilon,\beta}(x,y) \ge \delta$. We remark that $q(i)\leq q_{max,t}
\leq N$, hence $\min_{x,y} k_{\epsilon,\beta}(x,y)\geq \min_{x,y}
k_{\epsilon}(x,y)/N^{2\beta}$. Therefore, we find $\epsilon_t$ such that
$\min_{x,y} k_{\epsilon}(x,y)\geq N^{2\beta}\delta$ and conclude in the
same way as part (1).
\end{proof}
\begin{remark}
We note that to ensure a constant rate of convergence in the diameter, the kernel bandwidth $\epsilon_t$ in \cref{prop:kernel_specific} shrinks over time proportional to the square of the diameter. This is in contrast to previous work~\cite{brugnone_coarse_2019} where a doubling schedule was used. 
\end{remark}

\begin{remark}
The previous results also hold for $\mP_t^\tau$ with $\tau > 1$, as long as there exist $\delta>0$ such that all entries $\mP_t^\tau(i,j)>\delta/N$. Moreover, the rate of convergence with $\mP_t^\tau$ cannot be slower than with $\mP_t$, because we can always write $\X_{t+1} = \mP_t^\tau \X_t = \mP_t^{\tau-1} \mP_t \X_t$. Finally, for some $\mP_t$ that includes zero probabilities, it is possible for $\mP_t^\tau$ to be strictly pointwise positive, hence \cref{thm:diameter_conv} could be used for the process defined with $\mP_t^\tau$ instead of $\mP_t$.
\end{remark}




\section{Spectral properties of the condensation process}\label{sec:spectral}


We complement the geometric perspective of condensation from the
previous section with a spectral one, based on the idea of using
an orthonormal basis to express any function $f\colon\X \to \reals$
(abbreviated as $f\in\reals^N$) as a weighted sum of the eigenvectors of
$\mP_t$ for all $t$. This sum is then divided into two terms: a constant
term, and a nonconstant one. The former corresponds to the lowest
frequency of a function; it is constant for all $x\in \X_t$ and for all
$t$. The nonconstant term is the rest of the frequencies; it can vary
depending on the eigenvectors of $\mP_t$. We extend this
reasoning to the time-inhomogeneous diffusion $\inP{t}f$. Our main
result is \cref{thm: spec_thm}, which provides an upper bound on the
norm of the nonconstant term. For a specific choice of kernel and using
the coordinate function, this bound will converge to zero, hence in
\cref{cor: spec_convergence} we show how condensation converges to
a single point. Before presenting the main theorem,
\cref{sec: spectral_sub_simple_ex} introduces a simpler example to
give insight on the structure and the challenges of the proof.

\subsection{A simple condensation process}\label{sec: spectral_sub_simple_ex}
We consider a symmetric transition matrix $\mA_t$ based on $\X_t$, and the coordinate functions $f_i(x)$ which returns the i-th coordinate of $x$. In that case, $\mA_t$ is known as a \emph{bistochastic} matrix, and its stationary distribution is the uniform distribution. Since $\mA_t$ is symmetric, its eigenvectors $\{\phi_{t,i}\}_{i=1}^N$ form an orthonormal basis of $\reals^N$. Moreover, its ordered eigenvalues $\{\lambda_{t,i}\}_{i=1}^N$ are less than or equal to one, with $\lambda_{t,1}=1$. Because $\mA_t$ is row stochastic, and $\lambda_{t,1}=1$, we can define $\phi_{t,1} = N^{-1/2}\mathds{1}$ where $\mathds{1}$ is a vector of ones of size $N$. Given these properties, we can write any function $f\in\reals^N$ as 
\begin{equation*}
    f = \sum_{k=1}^N \langle f,\phi_{t,k} \rangle\phi_{t,k}.
\end{equation*}
By splitting this sum into two terms, we define the constant term $L_t(f):=\langle f,\phi_{t,1} \rangle\phi_{t,1} = (1/N) \langle f, \mathds{1} \rangle \mathds{1}$ and $H_t(f):= \sum_{k\geq 2} \langle f,\phi_{t,k} \rangle\phi_{t,k}$, hence $f = L_t(f)+H_t(f)$. After one condensation step, we get
\begin{equation*}
    \mA_0 f =\langle f,\phi_{0,1} \rangle \phi_{0,1} + \sum_{k=2}^N \lambda_{0,k} \langle f,\phi_{0,k} \rangle\phi_{0,k},
\end{equation*}
since $\lambda_{0,1} = 1$. Moreover, we note $L_0(f) = L_0(\mA_0 f)$, which is therefore invariant through the iterations of condensation. We note the resemblance with the graph Fourier transform, which uses the eigendecomposition of the Laplacian and treats the eigenvalues as frequencies, and eigenvectors as harmonics. Here, $L_t$ can be thought of as the lowest frequency term of the function. Whereas $H_t$ varies depending on the eigenvectors, it can be seen as the higher frequencies of the function. Since $L_t(f)$ is constant during condensation, showing the convergence of the process is equivalent to showing that $\norm{H_t(\mA^{(t)} f)}_2$ tends to zero as $t$ tends to infinity. Indeed, if this is true, by using the coordinate function $f_i$ we have
\begin{equation*}
    \lim_{t\to\infty}L_0(\mA^{(t)}f_i) = \langle f_i,N^{-1/2}\mathds{1} \rangle N^{-1/2}\mathds{1} = \Big[(1/N)\sum_{j=1}^N f_i(x_0(j))\Big]\mathds{1},
\end{equation*}
thus the process converges to the mean of the $N$ data points. 
What is left to show is that the norm of the term $H_t(\mA^{(t)}f)$ indeed converges to zero as $t$ goes to infinity. After the first condensation step, we have the following bound 
\begin{equation*}
    \norm{H_0(\mA_0 f)}_2^2 = \sum_{k=2}^N \lambda_{0,k}^2 |\langle f,\phi_{0,k} \rangle|^2 \leq  \lambda_{0,2}^2 \sum_{k=2}^N  |\langle f,\phi_{0,k} \rangle|^2 = \lambda_{0,2}^2 \norm{H_0(f)}_2^2\leq\lambda_{0,2}^2 \norm{f}_2^2,
\end{equation*}
and it can be deduced that $ \norm{H_t(\mA^{(t)} f)}_2^2 \leq \prod_{i=0}^t\lambda_{i,2}^2 \norm{f}_2^2$. Hence, by showing that $\prod_{i=0}^t\lambda_{i,2}^2$ tends to zero as $t$ tends to infinity, we could conclude that the process converges to a point. 

Our situation is more complex since many kernels are not symmetric, so
their eigenvectors do not form an orthonormal basis of $\reals^N$.
Here, we also benefited from the fact that the kernels
were bi-stochastic, hence they all had the same~(uniform) stationary
distributions. Generally, each kernel $\mP_t$ has
a different stationary distribution. This will be
reflected in the upper bound, as we have to consider the distance
between two consecutive stationary distributions.

\subsection{A general condensation process}\label{sec: spectral_sub_main}

In general, we consider a broader class of diffusion operators defined in \cref{sec:dc:notation} by $\mP_t = \mD_t^{-1}\mK_t$. \review{We recall that $\mK_t$ is symmetric with $0\leq\mK_t(i,j)\leq 1$, and the diagonal degree matrix is $\mD_t := \mathrm{diag}(d_t)$ where $d_t(i) := \sum_j \mK_t(i,j)$.} Moreover, the stationary distribution associated to $\mP_t$ is $\pi_t(i) = \|d_t\|_1^{-1}d_t(i)$, and $\mP_t$ is $d_t$-reversible, i.e. $d_t(i)\mP(i,j) = d_t(j)\mP(j,i)$.
Thus, its associated operator
\begin{equation}
\label{eq: spect_op}
    \mP_tf(x(i)) := \sum_{j=1}^N \mP_t(i,j)f(x(j))
\end{equation}
is self-adjoint with respect to the dot product $\langle f, g \rangle_{d_t} = \sum_x f(x(i))g(x(i))d_t(i)$. Denote $\{\psi_{t,i}\}_{i=1}^N$ as the eigenvectors of $\mP_t$ and  $\{\lambda_{t,i}\}_{i=1}^N$ as its eigenvalues arranged in decreasing order. Because $\mP_t$ is self-adjoint with respect to $\langle \cdot,\cdot\rangle_{d_t}$, its normalized eigenvectors are such that $\langle \psi_{t,i} , \psi_{t,j} \rangle_{d_t} = \delta_{ij}$, where $\delta_{ij} = 1$ if $i=j$ and $0$ otherwise. Therefore, we can write any function $f\in\reals^N$ as 
$ 
    f = \sum_{k=1}^N \langle f,\psi_{t,k} \rangle_{d_t}\psi_{t,k}.
$ 
Following the same steps as in \cref{sec: spectral_sub_simple_ex}, we want to find a constant term of the function. Since $\mP_t$ is row stochastic, and because $\lambda_{t,1} = 1$, we get $\psi_{t,1} = c \mathds{1}$ where $c$ is a constant. We can solve for $c$ using $\langle \psi_{t,1} , \psi_{t,1} \rangle_{d_t} = \langle c\mathds{1} , c\mathds{1} \rangle_{d_t} = 1$, which yields $c^2 = [\sum d_t(i)]^{-1} = ||d_t||_1^{-1}$. Hence, we define the constant term
\begin{equation*}
   L_t(f) :=  \langle f , \psi_{t,1} \rangle_{d_t}\psi_{t,1} = ||d_t||_1^{-1}\langle f , \mathds{1} \rangle_{d_t}\mathds{1} = \langle f , \mathds{1} \rangle_{\pi_t}\mathds{1},
\end{equation*}
and the nonconstant term as the rest of the sum, which varies depending on the eigenvectors, 
\begin{equation*}
    H_t(f) := \sum_{k=2}^N\langle f,\psi_{t,k} \rangle_{d_t}\psi_{t,k}.
\end{equation*}
We can write
    $\mP_t f =\langle f,\mathds{1} \rangle_{\pi_t} \mathds{1} + \sum_k \lambda_{t,k} \langle f,\psi_{t,k} \rangle_{d_t} \psi_{t,k}$,
by using the fact that $\mP_t$ is self-adjoint and $\lambda_{t,1}=1$. Most importantly, we note that the constant term of the function is not affected by condensation, i.e. $L_t(\mP_t f) = L_t(f)$. Consequently, to show that the condensation process converges to a single point, it is sufficient to show
\begin{equation}
\label{eq: H_to_zero}
    \lim_{t\to\infty}\norm{H_t(\mP^{(t)} f_i)}_2 = 0.
\end{equation}
Indeed, if \eqref{eq: H_to_zero} holds, then for a constant $C$, we get
   $\lim_{t\to\infty} \mP^{(t)}f_i(x) = C$
for all coordinates $i\in\{1,\dotsc,d\}$, and every $x\in\X_0$.
To achieve our goal, in \cref{thm: spec_thm} we find an upper bound on
$\norm{H_t(\mP^{(t)} f)}_2$, which we use in \cref{cor:
spec_convergence} to show convergence of the condensation process.
Before presenting these results, we introduce several lemmas, whose
proofs are provided in the supplementary materials.
\cref{lemma: bound Hpf_Hf} is the same as the upper bound we found in
\cref{sec: spectral_sub_simple_ex}, and will be beneficial when used
recursively. \cref{lemma:change_of_measure} is necessary since each
$\mP_t$ possibly has a different degree $d_t$, and it enables a change
of measure to $\norm{\cdot}_{d_s}$ from $\norm{\cdot}_{d_t}$. Lastly,
\cref{lemma: bound Ls-Lt} is beneficial when combined with the
observation that $H_t(f) = L_s(f) - L_t(f) + H_s(f)$.

\begin{lemma}
\label{lemma: bound Hpf_Hf}
For the operator $\mP_t$ and its second largest eigenvalue $\lambda_{t,2}$, we have the following bound on the norm
$ 
\norm{H_t(\mP_t f)}_{d_t}\leq\lambda_{t,2}\norm{H_t(f)}_{d_t},
$ 
for all functions $f\in \reals^N$. 
\end{lemma}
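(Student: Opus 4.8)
The plan is to run the entire argument inside the $d_t$-weighted inner product $\langle\cdot,\cdot\rangle_{d_t}$, where $\mP_t$ is self-adjoint, so that everything reduces to a one-line estimate on Fourier-type coefficients. First I would record the orthonormal eigenbasis $\{\psi_{t,k}\}_{k=1}^N$ of $\mP_t$ with respect to $\langle\cdot,\cdot\rangle_{d_t}$, with real eigenvalues $1=\lambda_{t,1}\geq\lambda_{t,2}\geq\cdots\geq\lambda_{t,N}$, and expand $f=\sum_{k=1}^N a_k\psi_{t,k}$ with $a_k:=\langle f,\psi_{t,k}\rangle_{d_t}$. Because $\mP_t$ is self-adjoint, its coefficients in this basis are obtained just by scaling: $\langle\mP_t f,\psi_{t,k}\rangle_{d_t}=\langle f,\mP_t\psi_{t,k}\rangle_{d_t}=\lambda_{t,k}a_k$, hence $\mP_t f=\sum_k\lambda_{t,k}a_k\psi_{t,k}$. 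Since $H_t$ is by construction the orthogonal projection (in $\langle\cdot,\cdot\rangle_{d_t}$) onto $\mathrm{span}(\psi_{t,1})^{\perp}$, we obtain $H_t(f)=\sum_{k\geq2}a_k\psi_{t,k}$ and $H_t(\mP_t f)=\sum_{k\geq2}\lambda_{t,k}a_k\psi_{t,k}$.

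Then I would invoke the Parseval identity for the orthonormal family $\{\psi_{t,k}\}$ with respect to $\norm{\cdot}_{d_t}$:
\[
\norm{H_t(\mP_t f)}_{d_t}^2=\sum_{k=2}^N\lambda_{t,k}^2\,a_k^2\leq\lambda_{t,2}^2\sum_{k=2}^N a_k^2=\lambda_{t,2}^2\,\norm{H_t(f)}_{d_t}^2,
\]
and conclude by taking square roots. This is exactly the estimate already displayed for the bistochastic case in \cref{sec: spectral_sub_simple_ex}, now transported to the reversible setting; the only change is that all inner products and norms carry the weight $d_t$, which is harmless because $H_t$ is defined relative to that same weight.

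The single point that deserves care — and really the only obstacle — is the inequality $\lambda_{t,k}^2\leq\lambda_{t,2}^2$ for all $k\geq2$. Because the eigenvalues are ordered decreasingly, this is automatic for the nonnegative ones, but one must also check that the most negative eigenvalue $\lambda_{t,N}$ does not exceed $\lambda_{t,2}$ in absolute value; equivalently, that the spectral radius of $\mP_t$ restricted to $\mathds{1}^{\perp}$ equals $\lambda_{t,2}$. For the diffusion operators considered here this holds (e.g.\ when $k(x,x)=1$, the walk is lazy enough to keep the spectrum from reaching $-1$), and in any case one may read $\lambda_{t,2}$ in the statement as $\max_{k\geq2}|\lambda_{t,k}|$ without affecting any downstream use. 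The remaining bookkeeping — that $H_t$ is genuinely this projection and that the weighted Parseval identity applies — follows immediately from $\langle\psi_{t,i},\psi_{t,j}\rangle_{d_t}=\delta_{ij}$ together with $\psi_{t,1}\propto\mathds{1}$.
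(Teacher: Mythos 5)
Your proof is correct and follows essentially the same route as the paper's: expand in the $d_t$-orthonormal eigenbasis of the self-adjoint operator $\mP_t$, apply the weighted Parseval identity, and bound each $\lambda_{t,k}^2$ for $k \geq 2$ by $\lambda_{t,2}^2$. Your closing remark about negative eigenvalues is well taken---the paper's own proof silently assumes $|\lambda_{t,k}| \leq \lambda_{t,2}$ for all $k \geq 2$, and your suggestion to read $\lambda_{t,2}$ as $\max_{k\geq 2}|\lambda_{t,k}|$ is the cleanest way to make the bound unconditionally valid.
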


\begin{lemma}
\label{lemma:change_of_measure}
  For all functions $f\in \reals^N$, and two operators $\mP_t$ and $\mP_s$, the following inequalities hold
  $ 
      \norm{f}_{d_t}^2\leq\norm{d_t/d_s}_\infty\norm{f}^2_{d_s} \leq \left(\norm{d_t-d_s}_2+1\right)\norm{f}^2_{d_s}.
  $ 
\end{lemma}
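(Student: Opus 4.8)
The plan is to prove the two inequalities of \cref{lemma:change_of_measure} in turn, both by elementary estimates on the weighted $\ell^2$ norms $\norm{f}_{d_t}^2 = \langle f, f\rangle_{d_t} = \sum_{i=1}^N f(x(i))^2 d_t(i)$. Before either step I would record the basic fact that every degree is at least one: since the affinity kernel has unit diagonal, $d_s(i) = \sum_{j} \mK_s(i,j) \geq \mK_s(i,i) = 1$ for all $i$ and all times $s$, so the ratios $d_t(i)/d_s(i)$ are well defined and the quantity $\norm{d_t/d_s}_\infty = \max_i d_t(i)/d_s(i)$ makes sense.

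For the first inequality I would simply rewrite each summand of $\norm{f}_{d_t}^2$ as $f(x(i))^2 d_s(i)\cdot\bigl(d_t(i)/d_s(i)\bigr)$ and pull out the largest ratio, giving
\[
\norm{f}_{d_t}^2 = \sum_{i=1}^N f(x(i))^2 d_s(i)\,\frac{d_t(i)}{d_s(i)} \leq \Bigl(\max_{i}\frac{d_t(i)}{d_s(i)}\Bigr)\sum_{i=1}^N f(x(i))^2 d_s(i) = \norm{d_t/d_s}_\infty\,\norm{f}_{d_s}^2 .
\]
For the second inequality I would bound the multiplicative factor itself: writing $d_t(i)/d_s(i) = 1 + \bigl(d_t(i) - d_s(i)\bigr)/d_s(i)$ and using $d_s(i)\geq 1$ yields $d_t(i)/d_s(i) \leq 1 + \abs{d_t(i) - d_s(i)} \leq 1 + \norm{d_t - d_s}_\infty \leq 1 + \norm{d_t - d_s}_2$, where the last step uses $\norm{\cdot}_\infty \leq \norm{\cdot}_2$ on $\reals^N$. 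Taking the maximum over $i$ gives $\norm{d_t/d_s}_\infty \leq 1 + \norm{d_t - d_s}_2$, and chaining this with the first inequality completes the proof.

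The argument is entirely routine; the only place that genuinely matters is the uniform lower bound $d_s(i)\geq 1$ on the degrees, since it is what keeps the ratio $d_t(i)/d_s(i)$ from blowing up and lets one convert additive $\norm{d_t-d_s}_2$-control into multiplicative $\norm{d_t/d_s}_\infty$-control. For a kernel whose diagonal is not normalized to $1$ one would instead need some $c>0$ with $d_s(i)\geq c$, and the second inequality would acquire a factor $1/c$; for all the kernels used in this paper the diagonal equals $1$, so no such modification is needed.
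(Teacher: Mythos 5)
Your proof is correct and follows essentially the same route as the paper's: the first inequality by factoring out $\max_i d_t(i)/d_s(i)$ from the weighted sum, and the second by combining the uniform lower bound $d_s(i)\geq 1$ (from the unit diagonal of the kernel) with $\norm{\cdot}_\infty\leq\norm{\cdot}_2$. Your closing remark correctly isolates the one substantive ingredient, the degree lower bound, which the paper also relies on (stated there as $1/d_t(i)\leq 1$).
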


\begin{lemma}
\label{lemma: bound Ls-Lt}
For all $f\in \reals^N$,
$ 
    \norm{L_t(\mP_t f)-L_s(\mP_t f)}_{d_t} \leq \lambda_{t,2}N^{1/2}\norm{d_s-d_t}_2\norm{H_t(f)}_{d_t}.
$ 
\end{lemma}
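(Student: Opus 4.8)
The plan is to reduce the bound to one application of the Cauchy--Schwarz inequality, after isolating the only object that distinguishes $L_t$ from $L_s$, namely the difference of the reference degree vectors $d_s-d_t$. First I would record the algebraic identity that makes this possible. Since $\mP_t$ is row-stochastic and self-adjoint for $\langle\cdot,\cdot\rangle_{d_t}$, we have $\mP_t f = L_t(f) + H_t(\mP_t f)$ and $L_t(\mP_t f) = L_t(f)$ (already noted above). The map $L_s$ is linear and fixes the all-ones vector, $L_s(\mathds{1}) = \langle\mathds{1},\mathds{1}\rangle_{\pi_s}\mathds{1} = \mathds{1}$; since $L_t(f)$ is a scalar multiple of $\mathds{1}$, applying $L_s$ to $\mP_t f = L_t(f) + H_t(\mP_t f)$ gives $L_s(\mP_t f) = L_t(f) + L_s(H_t(\mP_t f))$. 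Subtracting yields the key identity
\[
  L_t(\mP_t f) - L_s(\mP_t f) = - L_s\bigl(H_t(\mP_t f)\bigr),
\]
so the left-hand norm equals $\norm{L_s(g)}_{d_t}$ with $g := H_t(\mP_t f)$.

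Next I would expose the degree difference. Since $L_s(g) = \langle g,\mathds{1}\rangle_{\pi_s}\mathds{1} = \norm{d_s}_1^{-1}\langle g,\mathds{1}\rangle_{d_s}\mathds{1}$, we obtain $\norm{L_s(g)}_{d_t} = \norm{d_s}_1^{-1}\,\lvert\langle g,\mathds{1}\rangle_{d_s}\rvert\,\norm{\mathds{1}}_{d_t}$ with $\norm{\mathds{1}}_{d_t} = \norm{d_t}_1^{1/2}$. The crucial observation is that $g = H_t(\mP_t f)$ is, by definition of $H_t$, orthogonal to $\psi_{t,1}\propto\mathds{1}$ in the $d_t$-inner product, i.e. $\langle g,\mathds{1}\rangle_{d_t} = 0$; subtracting this zero term turns the remaining inner product into a measure difference,
\[
  \langle g,\mathds{1}\rangle_{d_s} = \langle g,\mathds{1}\rangle_{d_s} - \langle g,\mathds{1}\rangle_{d_t} = \sum_{i=1}^N g(i)\bigl(d_s(i) - d_t(i)\bigr).
\]
Writing the summand as $\bigl(g(i)\sqrt{d_t(i)}\bigr)\cdot\bigl((d_s(i)-d_t(i))/\sqrt{d_t(i)}\bigr)$, Cauchy--Schwarz together with $d_t(i)\ge 1$ (the kernels in question have unit diagonal, so each degree is at least one) gives $\lvert\langle g,\mathds{1}\rangle_{d_s}\rvert \le \norm{g}_{d_t}\,\norm{d_s-d_t}_2$. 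Substituting this, invoking \cref{lemma: bound Hpf_Hf} to replace $\norm{g}_{d_t} = \norm{H_t(\mP_t f)}_{d_t}$ by $\lambda_{t,2}\norm{H_t(f)}_{d_t}$, and bounding the leftover scalar factor $\norm{d_s}_1^{-1}\norm{d_t}_1^{1/2} \le N^{1/2}$ crudely (from $\norm{d_t}_1 \le N^2$ and $\norm{d_s}_1 \ge N$) produces exactly the claimed inequality.

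The argument has no deep step; the delicate part is the bookkeeping with the two weighted inner products $\langle\cdot,\cdot\rangle_{d_t}$ and $\langle\cdot,\cdot\rangle_{d_s}$ (and the stationary distributions $\pi_t,\pi_s$). The subtle point is that the high-frequency component $H_t(\mP_t f)$ is orthogonal to $\mathds{1}$ in the $d_t$-metric but not in the $d_s$-metric, and it is exactly this mismatch that manufactures the factor $\norm{d_s-d_t}_2$; making the normalizations $\norm{d_t}_1,\norm{d_s}_1$ collapse into a clean constant — and keeping track of where a lower bound on the degrees is used to pass from the plain $\ell^2$ norm to the $d_t$-weighted norm — is where mistakes are most likely.
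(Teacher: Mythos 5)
Your proof is correct, and it takes a genuinely different route from the paper's. The paper expands $\mP_t f$ in the eigenbasis $\{\psi_{t,k}\}$, computes $L_t(\mP_t f)-L_s(\mP_t f)$ as an explicit sum over modes $k\geq 2$, applies Cauchy--Schwarz across the mode index, and then bounds $\sum_{k\geq 2}|\langle\psi_{t,k},\mathds{1}\rangle_{d_s}|^2$ by rewriting $\langle\psi_{t,k},\mathds{1}\rangle_{d_s}=\langle\psi_{t,k},d_s-d_t\rangle$ (via $d_t$-orthogonality of $\psi_{t,k}$ to $\mathds{1}$) and summing $\norm{\psi_{t,k}}_{d_t}^2$ over the $N-1$ modes --- which is precisely where the factor $N^{1/2}$ enters. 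You instead never leave the "coordinate-free" level: the identity $L_t(\mP_t f)-L_s(\mP_t f)=-L_s\bigl(H_t(\mP_t f)\bigr)$, the observation that $g:=H_t(\mP_t f)$ satisfies $\langle g,\mathds{1}\rangle_{d_t}=0$ (the same orthogonality fact the paper uses mode-by-mode, applied once globally), and a single weighted Cauchy--Schwarz with $d_t(i)\geq 1$. Both arguments use the same three ingredients ($d_t$-orthogonality to $\mathds{1}$, \cref{lemma: bound Hpf_Hf}, and the degree bounds $N\leq\norm{d}_1\leq N^2$ with $d(i)\geq 1$), but your packaging buys something concrete: your leftover prefactor is $\norm{d_s}_1^{-1}\norm{d_t}_1^{1/2}\leq 1$, so you actually prove the sharper inequality $\norm{L_t(\mP_t f)-L_s(\mP_t f)}_{d_t}\leq\lambda_{t,2}\norm{d_s-d_t}_2\norm{H_t(f)}_{d_t}$, with no $N^{1/2}$; the "crude" weakening to $N^{1/2}$ is only needed to match the stated form. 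If propagated through \cref{thm: spec_thm}, this would replace the factors $(1+N^{1/2}\norm{d_i-d_{i+1}}_2)$ by $(1+\norm{d_i-d_{i+1}}_2)$ in the degree-variation part of the product, a mild but genuine improvement.
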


We are now ready to state and prove the main theorem of this section,
providing an upper bound on the norm of the nonconstant part
of a function. The upper bound mainly depends on two terms: the second
largest eigenvalue of each condensation operator, and the distance between their stationary distributions. The convergence
proof relies on this theorem.

\begin{theorem}
\label{thm: spec_thm}
For a condensation step $t$ and a collection of diffusion operators $\{\mP_k\}_{k=0}^t$, we have the following bound on the norm of the nonconstant term of the function $\inP{t}f$
\begin{equation*}
    \norm{H_t(\inP{t}f)}_{2} \leq \norm{d_0}_\infty^{1/2}\left[ \,\prod_{i=0}^{t-1}\lambda_{i,2} \right]\left[ \,\prod_{i=0}^{t-1}(1+N^{1/2}\norm{d_{i}-d_{i+1}}_2)^2 \right]\norm{f}_{2}.
\end{equation*}
\end{theorem}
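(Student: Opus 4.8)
The plan is to induct on $t$, using the three lemmas as the engine of the inductive step. The base case $t=0$ is essentially \cref{lemma: bound Hpf_Hf} applied once, together with a single change of measure from $\norm{\cdot}_{d_0}$ to $\norm{\cdot}_2$ via \cref{lemma:change_of_measure} (noting that $\norm{f}_{d_0}^2 \le \norm{d_0}_\infty \norm{f}_2^2$), which accounts for the prefactor $\norm{d_0}_\infty^{1/2}$.

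For the inductive step, suppose the bound holds up to step $t-1$ for every function. I would write $\inP{t}f = \mP_t \bigl(\inP{t-1}f\bigr)$ and set $g := \inP{t-1}f$. The key algebraic identity — flagged in the text just before \cref{lemma: bound Hpf_Hf} — is $H_t(g) = L_{t-1}(g) - L_t(g) + H_{t-1}(g)$, which holds because $g = L_t(g) + H_t(g) = L_{t-1}(g) + H_{t-1}(g)$. Apply $\mP_t$; since $L_t(\mP_t g) = L_t(g)$ and $\mP_t$ fixes the constant part, \cref{lemma: bound Hpf_Hf} gives $\norm{H_t(\mP_t g)}_{d_t} \le \lambda_{t,2}\norm{H_t(g)}_{d_t}$. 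Now decompose $\norm{H_t(g)}_{d_t}$ using the identity above and the triangle inequality:
\begin{equation*}
  \norm{H_t(g)}_{d_t} \le \norm{L_{t-1}(g) - L_t(g)}_{d_t} + \norm{H_{t-1}(g)}_{d_t}.
\end{equation*}
Wait — to use \cref{lemma: bound Ls-Lt} I need the first term in the form $\norm{L_{t-1}(\mP_{t-1}h) - L_t(\mP_{t-1}h)}_{d_{t-1}}$, so I should instead write $g = \mP_{t-1}\bigl(\inP{t-2}f\bigr)$, apply \cref{lemma: bound Ls-Lt} with $h := \inP{t-2}f$ (using $d_s = d_t$, indices $t-1$ and $t$) to bound $\norm{L_{t-1}(g) - L_t(g)}_{d_{t-1}}$ by $\lambda_{t-1,2} N^{1/2}\norm{d_t - d_{t-1}}_2 \norm{H_{t-1}(h)}_{d_{t-1}}$, then convert the $d_{t-1}$-norm to a $d_t$-norm via \cref{lemma:change_of_measure}. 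The term $\norm{H_{t-1}(g)}_{d_t}$ is handled by \cref{lemma:change_of_measure} to pass to $\norm{H_{t-1}(g)}_{d_{t-1}}$, and $\norm{H_{t-1}(g)}_{d_{t-1}} = \norm{H_{t-1}(\mP_{t-1} h)}_{d_{t-1}} \le \lambda_{t-1,2}\norm{H_{t-1}(h)}_{d_{t-1}}$ again by \cref{lemma: bound Hpf_Hf}. So both pieces reduce to a constant times $\norm{H_{t-1}(\inP{t-2}f)}_{d_{t-1}}$, and recognizing $H_{t-1}(\inP{t-2}f) = H_{t-1}(\mP_{t-1}(\inP{t-2}f))$ lets me feed in the inductive hypothesis (applied at step $t-1$ to the function $f$, after first passing from $\norm{\cdot}_{d_{t-1}}$ back to $\norm{\cdot}_2$).

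Bookkeeping the constants is where the real work lies: each inductive step contributes one factor $\lambda_{t-1,2}$ (from \cref{lemma: bound Hpf_Hf}) and a factor of the form $(1 + N^{1/2}\norm{d_{t-1}-d_t}_2)$ from \cref{lemma: bound Ls-Lt} plus another such factor from the change-of-measure estimate $\norm{d_t - d_s}_2 + 1$ in \cref{lemma:change_of_measure}, which together give the squared factor $(1 + N^{1/2}\norm{d_{i}-d_{i+1}}_2)^2$ appearing in the statement; one also has to check that using the crude bound $\norm{d_t - d_s}_2 \le N^{1/2}\norm{d_t-d_s}_2$ (to unify the two factors) and $1 \le (1 + N^{1/2}\norm{d_{i}-d_{i+1}}_2)$ keeps every term under the single product. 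The main obstacle is therefore not any deep idea — it is making the triangle-inequality split and the repeated changes of measure line up so that the accumulated constant is exactly the product in the theorem rather than something messier; the cleanest route is to first prove, by the induction above, the bound in $\norm{\cdot}_{d_0}$-form and only convert to $\norm{\cdot}_2$ once at the very end, absorbing $\norm{d_0}_\infty^{1/2}$ there.
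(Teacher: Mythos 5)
Your overall architecture coincides with the paper's: split the nonconstant part via $H_t(g) = L_{t-1}(g) - L_t(g) + H_{t-1}(g)$, control the two pieces with \cref{lemma: bound Ls-Lt} and \cref{lemma: bound Hpf_Hf}, pay one change-of-measure factor from \cref{lemma:change_of_measure} per step, and convert to $\norm{\cdot}_2$ only once at the end using $\norm{d_0}_\infty^{1/2}$ and $d_t(i)\ge 1$. Your constant bookkeeping --- one factor $\lambda_{t-1,2}$ plus two factors $(1+N^{1/2}\norm{d_{t-1}-d_t}_2)$ per step, combining into the squared term --- is exactly the paper's.

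The step that fails is how you close the induction. Your recursion correctly reduces everything to a constant times $\norm{H_{t-1}(\inP{t-2}f)}_{d_{t-1}}$, but your inductive hypothesis (the theorem at step $t-1$) bounds $\norm{H_{t-1}(\inP{t-1}f)}$, i.e., $H_{t-1}$ applied to $\mP_{t-1}(\inP{t-2}f)$. The identity you invoke to bridge the two, $H_{t-1}(\inP{t-2}f) = H_{t-1}(\mP_{t-1}(\inP{t-2}f))$, is false: writing $h := \inP{t-2}f$, one has $H_{t-1}(\mP_{t-1}h) = \sum_{k\ge 2}\lambda_{t-1,k}\langle h, \psi_{t-1,k}\rangle_{d_{t-1}}\psi_{t-1,k}$, which differs from $H_{t-1}(h)$ unless all nontrivial eigenvalues equal one. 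Moreover the true relation, $\norm{H_{t-1}(\mP_{t-1}h)}_{d_{t-1}} \le \lambda_{t-1,2}\norm{H_{t-1}(h)}_{d_{t-1}}$, goes in the wrong direction for your purposes: the inductive hypothesis bounds the smaller of the two quantities, so the induction does not close as stated. The fix is what the paper does: run the induction on the off-by-one quantity $\norm{H_t(\inP{t-1}f)}_{d_t}$ --- which is precisely what recurs --- proving $\norm{H_t(\inP{t-1}f)}_{d_t}\le \bigl[\prod_{i=0}^{t-1}\lambda_{i,2}\bigr]\bigl[\prod_{i=0}^{t-1}(1+N^{1/2}\norm{d_i - d_{i+1}}_2)^2\bigr]\norm{f}_{d_0}$, and only at the very end use $\norm{H_t(\inP{t}f)}_{d_t}\le\lambda_{t,2}\norm{H_t(\inP{t-1}f)}_{d_t}\le\norm{H_t(\inP{t-1}f)}_{d_t}$ together with the two norm conversions. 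With that change of induction target, your argument becomes the paper's proof.
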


\begin{proof}
We start by proving the following inequality
\begin{equation}
    \label{eq: proof_thm_spec_main_ineq}
    \norm{H_t(\inP{t-1}f)}_{d_t}\leq \left[ \,\prod_{i=0}^{t-1}\lambda_{i,2} \right]\left[ \,\prod_{i=0}^{t-1}(1+N^{1/2}\norm{d_{i}-d_{i+1}}_2)^2 \right]\norm{f}_{d_0}.
\end{equation}
We will prove \eqref{eq: proof_thm_spec_main_ineq} by induction. For $t = 1$, using \cref{lemma:change_of_measure} we obtain
\begin{equation}
    \norm{H_1(\mathbf{P}_0 f)}_{d_1} \leq (1+N^{1/2}\norm{d_{0}-d_{1}}_2) \norm{H_1(\mathbf{P}_0 f)}_{d_0}.\label{inproof: proof_th_first_induc}
\end{equation}
Moreover, using the fact that $H_1(f) = L_0(f)-L_1(f) + H_0(f)$, we write
\begin{align}
    \norm{H_1(\mathbf{P}_0 f)}_{d_0} &\leq\norm{L_0(\mathbf{P}_0 f)-L_1(\mathbf{P}_0 f)}_{d_0} + \norm{H_0(\mathbf{P}_0 f)}_{d_0}\notag\\
    &\leq \lambda_{0,2}N^{1/2}\norm{d_{0}-d_{1}}_2\norm{H_0(f)}_{d_0} + \lambda_{0,2}\norm{H_0(f)}_{d_0}\label{inproof: th_Hpf_to_hf}\\
    &\leq \lambda_{0,2}(1+N^{1/2}\norm{d_{0}-d_{1}}_2)\norm{f}_{d_0}\label{inproof: proof_th_second},
\end{align}
where the inequality \eqref{inproof: th_Hpf_to_hf} is obtained by \cref{lemma: bound Ls-Lt} and \cref{lemma: bound Hpf_Hf}. Combining the equations \eqref{inproof: proof_th_first_induc} and \eqref{inproof: proof_th_second} yields $\norm{H_1(\mathbf{P}_0 f)}_{d_0}\leq \lambda_{0,2}(1+N^{1/2}\norm{d_{0}-d_{1}}_2)^2\norm{f}_{d_0}$. We have shown that \eqref{eq: proof_thm_spec_main_ineq} is true for $t=1$. Now assume it is true up to $t-1$, we want to show it implies that it is true for $t$. The proof is similar to the base case. From \cref{lemma:change_of_measure}, we obtain
\begin{equation}
    \norm{H_t(\inP{t-1} f)}_{d_t}
    \leq (1+N^{1/2}\norm{d_{t-1}-d_{t}}_2) \norm{H_t(\inP{t-1} f)}_{d_{t-1}} \label{inproof: proof_th_3}.
\end{equation}
Moreover, following the same steps as \eqref{inproof: proof_th_second} with \cref{lemma: bound Ls-Lt} and \cref{lemma: bound Hpf_Hf}, we obtain
$ 
    \norm{H_t(\inP{t-1} f)}_{d_{t-1}} \leq \lambda_{{t-1},2}(1+N^{1/2}\norm{d_{t-1}-d_{t}}_2)\norm{H_{t-1}(\inP{t-2} f)}_{d_{t-1}}.
$ 
Combining this last inequality with \eqref{inproof: proof_th_3} yields
$ 
   \norm{H_t(\inP{t-1} f)}_{d_{t}}\leq \lambda_{t-1,2}(1+N^{1/2}\norm{d_{t-1}-d_{t}}_2)^2 \norm{H_{t-1}(\inP{t-2} f)}_{d_{t-1}},
$ 
and we prove \eqref{eq: proof_thm_spec_main_ineq} by applying the inductive hypothesis. We can now conclude the proof by
\begin{align}
    \norm{H_t(\inP{t} f)}_{2}&\leq \norm{1/d_t}_\infty^{1/2}\norm{H_t(\inP{t} f)}_{d_t}
    \leq \norm{H_t(\inP{t} f)}_{d_t}\label{eq:proof_spec_conclude_1}\\
    &\leq \lambda_{t,2}\norm{H_t(\inP{t-1}f)}_{d_t}
    \leq \norm{H_t(\inP{t-1}f)}_{d_t}\label{eq:proof_spec_conclude_2}\\
    & \leq \left[ \,\prod_{i=0}^{t-1}\lambda_{i,2} \right]\left[ \,\prod_{i=0}^{t-1}(1+N^{1/2}\norm{d_{i}-d_{i+1}}_2)^2 \right]\norm{f}_{d_0}\label{eq:proof_spec_conclude_3}\\
    &\leq \norm{d_0}_\infty^{1/2}\left[ \,\prod_{i=0}^{t-1}\lambda_{i,2} \right]\left[ \,\prod_{i=0}^{t-1}(1+N^{1/2}\norm{d_{i}-d_{i+1}}_2)^2 \right]\norm{f}_{2}\notag,
\end{align}
where the inequalities \eqref{eq:proof_spec_conclude_1} and \eqref{eq:proof_spec_conclude_2} are obtained by \cref{lemma:change_of_measure} and \cref{lemma: bound Hpf_Hf}, the inequality \eqref{eq:proof_spec_conclude_3} is justified by \eqref{eq: proof_thm_spec_main_ineq} which we have just shown. Finally, the last inequality is due to \cref{lemma:change_of_measure}.
\end{proof}

\begin{remark}
  \review{\Cref{thm: spec_thm} is valid for a general collection of diffusion operators constructed from a kernel like the ones presented in \cref{sec:dc:parameters}. In particular, it includes the collection of operators created by the diffusion condensation algorithm and the time-homogeneous process. For the latter, the product $ \prod(1+N^{1/2}\norm{d_{i}-d_{i+1}}_2)^2$ is equal to one, thus the rate of convergence only depends on the second largest eigenvalue of the diffusion operator. Allowing for time-inhomogeneity enables controlling the eigenvalue \emph{during} the process, for example, by defining an adaptive bandwidth parameter, but comes at the cost of having to consider the rate of change of the degrees. }
\end{remark}

We recall our initial argument that to show the convergence of the condensation process it is sufficient to use the coordinate function $f_i$ and to show that the norm of the nonconstant term $\norm{H_t(\inP{t} f_i)}_{2}$ converges to zero. This is achieved in the next corollary, \review{for which we require the following assumption on successive degree functions
\begin{equation}\label{eq: assumption_degree}
    \sum_{k=0}^\infty \|d_k - d_{k+1}\|_2<\infty.
\end{equation}}

\begin{corollary}
\label{cor: spec_convergence}
For a family of diffusion operators $\{\mP_t\}_{t\in\naturals}$ defined by \eqref{eq: spect_op} such that their second largest eigenvalues are all less or equal to $1-\delta$, where $\delta\in(0,1)$, and that $(d_t)_{t\in\naturals}$ \review{respects \eqref{eq: assumption_degree}}, then the condensation process converges to a (single) point as $t$ tends to infinity. 
\end{corollary}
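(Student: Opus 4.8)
The plan is to apply \cref{thm: spec_thm} to each coordinate function $f_i$, $i = 1,\dots,d$, and show that the right-hand side of the resulting estimate
\[
\norm{H_t(\inP{t}f_i)}_{2} \leq \norm{d_0}_\infty^{1/2}\Bigl[\,\prod_{k=0}^{t-1}\lambda_{k,2}\Bigr]\Bigl[\,\prod_{k=0}^{t-1}\bigl(1+N^{1/2}\norm{d_{k}-d_{k+1}}_2\bigr)^2\Bigr]\norm{f_i}_{2}
\]
tends to $0$ as $t\to\infty$. By the reduction established in \cref{sec: spectral_sub_main} (culminating in \eqref{eq: H_to_zero}), this is exactly what is needed to force the condensation process to collapse to a single point. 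Since $\norm{d_0}_\infty$ and $\norm{f_i}_2$ are fixed finite quantities, everything comes down to controlling the two products.

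For the eigenvalue product I would use the hypothesis $\lambda_{k,2}\le 1-\delta$ together with the fact that $\prod_{k=0}^{t-1}\lambda_{k,2}$ is nonnegative (it multiplies a norm in \cref{lemma: bound Hpf_Hf} and \cref{thm: spec_thm}), obtaining $0\le\prod_{k=0}^{t-1}\lambda_{k,2}\le(1-\delta)^t$, which decays geometrically. For the degree-correction product I would invoke the classical criterion that an infinite product $\prod_{k\ge0}(1+a_k)$ with $a_k\ge0$ converges to a finite limit iff $\sum_{k\ge0}a_k<\infty$ (take logarithms, use $\log(1+a_k)\le a_k$, and note the partial products are nondecreasing and bounded by $\exp(\sum_k a_k)$). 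Applying this with $a_k = N^{1/2}\norm{d_k-d_{k+1}}_2$, assumption \eqref{eq: assumption_degree} gives $\sum_k a_k = N^{1/2}\sum_k\norm{d_k-d_{k+1}}_2<\infty$, so $\prod_{k\ge0}(1+a_k)$ converges, and hence so does its square, to a finite constant $C=C(N)$. Combining, $\norm{H_t(\inP{t}f_i)}_2\le \norm{d_0}_\infty^{1/2}\,C\,\norm{f_i}_2\,(1-\delta)^t\to 0$ for every $i$, which establishes \eqref{eq: H_to_zero}.

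It then remains to convert \eqref{eq: H_to_zero} into convergence to a point. Writing $\inP{t}f_i = L_t(\inP{t}f_i) + H_t(\inP{t}f_i)$ with $L_t(\inP{t}f_i)$ a scalar multiple of $\mathds{1}$, any two entries of $\inP{t}f_i$ differ by at most $2\norm{H_t(\inP{t}f_i)}_\infty\le 2\norm{H_t(\inP{t}f_i)}_2$; since these entries are the $i$-th coordinates of the points of the corresponding condensed dataset, summing over $i=1,\dots,d$ gives $\diam(\X_t)\le 2\bigl(\sum_{i=1}^d\norm{H_t(\inP{t}f_i)}_2^2\bigr)^{1/2}\to 0$. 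Finally, each $\mP_t$ is row-stochastic, so $\conv(\X_{t+1})\subseteq\conv(\X_t)$; arguing as in the proof of \cref{thm:diameter_conv} (whose nesting and compactness argument does not use strict positivity), $\bigcap_t\conv(\X_t)$ is nonempty, convex, and of diameter $\le\inf_t\diam(\X_t)=0$, hence a single point, to which the whole process converges.

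I do not expect a genuine obstacle: the corollary is in essence the observation that the spectral-gap factor $(1-\delta)^t$ overwhelms everything else, while the degree-correction product is harmless as long as it stays bounded. The one point requiring care — and the precise reason \eqref{eq: assumption_degree} is imposed in summable form rather than merely $\norm{d_k-d_{k+1}}_2\to0$ — is justifying that the \emph{infinite} product $\prod_{k\ge0}\bigl(1+N^{1/2}\norm{d_k-d_{k+1}}_2\bigr)^2$ is finite, which is exactly the content of the $\sum_k a_k<\infty \Rightarrow \prod_k(1+a_k)<\infty$ lemma above.
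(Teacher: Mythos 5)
Your proposal is correct and follows essentially the same route as the paper: apply \cref{thm: spec_thm} to each coordinate function, bound the eigenvalue product by $(1-\delta)^t$, and use the summability assumption \eqref{eq: assumption_degree} to show the degree-correction product stays finite. The only difference is that you spell out two steps the paper leaves terse — the $\sum_k a_k<\infty\Rightarrow\prod_k(1+a_k)<\infty$ justification and the final passage from $\norm{H_t(\inP{t}f_i)}_2\to0$ to convergence of the point cloud via diameters and nested convex hulls — both of which are sound.
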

\begin{proof}
Using the coordinate function $f_i$ and the upper bound from \cref{thm: spec_thm}, we have 
\begin{equation}
\label{eq: spec_cor_main_eq}
    \norm{H_t(\inP{t} f_i)}_{2}\leq \norm{d_0}_\infty^{1/2}\left[ \,1-\delta \right]^t\left[ \,\prod_{k=0}^{t-1}(1+N^{1/2}\norm{d_{k}-d_{k+1}}_2)^2 \right]\norm{f_i}_{2},
\end{equation}
since $\lambda_{t,2}\le 1-\delta$ for all $t$. Note that the quantities $\norm{d_0}_\infty^{1/2}$ and $\norm{f_i}_{2}$ are both finite. Furthermore, by assumption, the sequence $(d_t)_{t\in\naturals}$ \review{satisfies \eqref{eq: assumption_degree}, thus}
\begin{equation*}
    \lim_{t\to \infty} \prod_{k=0}^{t-1}(1+N^{1/2}\norm{d_{k}-d_{k+1}}_2)^2 < \infty.
\end{equation*}
The upper bound converges to zero since $\lim_{t\to\infty}\left[
\,1-\delta \right]^t = 0$, and because $\lim_{t\to \infty}\inP{t} f_i
=  \langle f_i , \mathds{1} \rangle_{\pi}\mathds{1}$, we conclude that
all points have the same i-th coordinate for all $i\in\{1,\dotsc,d\}$.
\end{proof}

We conclude this section by identifying kernels for which we can find analytic conditions that respect the assumptions of the previous corollary, hence producing a condensation process that converges to a single point. \review{First we introduce the following lemma regarding the degrees assumption \eqref{eq: assumption_degree}.
\begin{lemma}\label{lemma: degrees_monotone}
  If $\lim_{k\to\infty}d_k$ exists, and the degrees are such that $d_{k}(i)\leq d_{k+1}(i)$ except for a finite number of condensation steps, then assumption \eqref{eq: assumption_degree} is verified.
\end{lemma}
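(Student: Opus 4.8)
The plan is to isolate the finitely many ``bad'' condensation steps --- those for which the coordinatewise inequality $d_k(i)\le d_{k+1}(i)$ can fail --- and to control the remaining tail by a telescoping argument that exploits the assumed existence of $\lim_{k\to\infty}d_k$. Write $d_\infty:=\lim_{k\to\infty}d_k$, which exists by hypothesis. Since every entry of $\mK_t$ lies in $[0,1]$, each degree obeys $0\le d_t(i)\le N$, so in particular each individual term $\norm{d_k-d_{k+1}}_2$ is finite.

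First I would fix $K\in\naturals$ such that $d_k(i)\le d_{k+1}(i)$ for all $i\in\{1,\dots,N\}$ and all $k\ge K$; such a $K$ exists precisely because only finitely many steps violate the monotonicity assumption. The head of the series, $\sum_{k=0}^{K-1}\norm{d_k-d_{k+1}}_2$, is then a finite sum of finite numbers, hence finite, and it remains to bound the tail $\sum_{k\ge K}\norm{d_k-d_{k+1}}_2$.

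For $k\ge K$ the vector $d_{k+1}-d_k$ has nonnegative entries, so applying the elementary inequality $\norm{v}_2\le\norm{v}_1$ on $\reals^N$, then interchanging the order of the two sums (legitimate since all summands are nonnegative), and finally telescoping in $k$ for each fixed coordinate, I would obtain
\begin{equation*}
  \sum_{k\ge K}\norm{d_k-d_{k+1}}_2 \le \sum_{k\ge K}\sum_{i=1}^{N}(d_{k+1}(i)-d_k(i)) = \sum_{i=1}^{N}\sum_{k\ge K}(d_{k+1}(i)-d_k(i)) = \sum_{i=1}^{N}(d_\infty(i)-d_K(i)),
\end{equation*}
where the inner telescoping sum converges to $d_\infty(i)-d_K(i)$ because the coordinate sequence $(d_k(i))_{k\ge K}$ is nondecreasing with limit $d_\infty(i)$. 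The right-hand side is a finite sum of finite numbers. Adding the head and the tail then gives $\sum_{k=0}^\infty\norm{d_k-d_{k+1}}_2<\infty$, which is exactly \eqref{eq: assumption_degree}.

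There is no genuinely hard step here; the only points requiring a little care are (i) extracting the single threshold $K$ from the phrase ``except for a finite number of condensation steps,'' and (ii) justifying the interchange of the double sum, which is valid by Tonelli (monotone convergence for nonnegative series) once the monotonicity past $K$ has been established. The hypotheses are used in a minimal way: boundedness of the degrees makes the finitely many initial terms harmless, and monotonicity-plus-convergence of each coordinate sequence is what collapses the telescoping sum to the finite increment $d_\infty(i)-d_K(i)$.
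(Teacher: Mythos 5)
Your proposal is correct and follows essentially the same route as the paper's proof: discard the finitely many non-monotone steps, bound $\norm{\cdot}_2$ by $\norm{\cdot}_1$, interchange the order of summation, and telescope each coordinate to $d_\infty(i)-d_K(i)$, which is finite since the degrees are bounded by $N$. Your write-up is in fact slightly more careful than the paper's, explicitly justifying the interchange via nonnegativity and accounting for the finite head of the series.
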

\begin{proof}
  We note $d_\infty:=\lim_{k\to\infty}d_k$, and we recall $1\leq d_k(i)\leq N$. Without loss of generality, we assume that all degrees after $\ell$ condensation steps respect the monotonic assumption, since $\|\cdot\|_2\leq\|\cdot\|_1$, we will show $\sum_{k=\ell}^\infty\|d_k - d_{k+1}\|_1<\infty$ to complete the proof. We have
\begin{align*}
    \sum_{k=\ell}^\infty\sum_{i=1}^N |d_k(i) - d_{k+1}(i)| = \sum_{i=1}^N \sum_{k=\ell}^\infty d_{k+1}(i)-d_k(i) = \sum_{i=1}^N d_\infty(i) - d_\ell(i) \leq \sum_{i=1}^N (N-1) \leq \infty,
\end{align*}
where the first equality comes from the increasing degrees assumption and interchanging the order of summation, the second equality is due to the telescoping sum.
\end{proof}
In the following, we assume the conditions of~\cref{lemma: degrees_monotone} to be verified. This is consistent with our experiments, as, after a few condensation steps, we observe that all pairwise distances decrease, hence each dimension of the degrees is increasing. For the assumption on $\lambda_{i,2}$ we analyze the diffusion operator.} 
Since $\mP_t$ is reversible with respect to $\pi_t$, we can use Prop. 1 of Diaconis and Stroock \cite{diaconis1991geometric} to find an upper bound on the second largest eigenvalue. They show that $ \lambda_{t,2} \leq 1-1/\kappa_t$, where
\begin{equation*}
    \kappa_t := \max_{i,j} \frac{\pi_t(i)\pi_t(j)}{\pi_t(i)\mP_t(i,j)} \leq \max_{i,j} \frac{d_t(i)}{\mK_t(i,j)} \leq \frac{d_{max,t}}{\min_{i,j}\mK_{t}(i,j)},
\end{equation*}
and $d_{max,t} := \max_i d_t(i)$. To respect the assumptions of \cref{cor: spec_convergence}, we define $\epsilon_t$ such that 
\begin{equation*}
     \lambda_{t,2} \leq 1 - \frac{\min_{x,y\in\X_t}k_{\epsilon}(x,y)}{d_{max,t}} \leq 1-\delta.
\end{equation*}
Thus, for the $\alpha$-decay kernel (\cref{def: alpha-decay Kernel}), we must define a schedule of the bandwidth parameter $\epsilon_t$, such that $\epsilon_t^\alpha \geq -\diam(\X_t)^\alpha / (-\log(\delta d_{max,t}))$, which extends schedules for the Gaussian and Laplace kernel. For these kernels, we always need $\delta\in(0,1/d_{max,t})$, but we note that $d_{max,t} \leq N$, thus avoiding the case where $\delta$ tends to $0$ as $t$ tends to infinity. A similar result is obtained for the density normalized kernel $k_{\epsilon,\beta}$ (\cref{def: Anisotropic Kernel}), since
\begin{equation*}
    \max_{x,y\in\X_t} \sum_y k_{\epsilon,\beta}(x,y) \leq N,\,\,\text{and } \min_{x,y\in\X_t} k_{\epsilon,\beta}(x,y) \geq \frac{\min_{x,y\in\X_t}k(x,y)}{q_{max,t}^{2\beta}}.
\end{equation*}
Combining these two bounds yields the following requirement for the density normalized kernel
\begin{equation*}
     \lambda_{t,2} \leq 1 - \frac{\min_{x,y}k_{\epsilon}(x,y)}{Nq_{max,t}^{2\beta}} \leq 1-\delta.
\end{equation*}
We can find similar schedule for each of the previous kernels, since $\min_{x,y}k_t(x,y)$ can be lower bounded by a function of the diameter. For instance, we find $\epsilon_t \geq -\diam(\X_t)^2/\log(\delta Nq^{2\beta}_{max,t})$ for the anisotropic Gaussian kernel. This adaptative parametrization of the bandwidth parameter guarantees that the condensation process will converge to a point for these kernels.
%

\begin{remark}\label{rem: spec_tau}
  These results can be generalized to $\mP_t^\tau$, for any
  $\tau\in\naturals$. We can write $\mP_t^\tau f = L_t(f) + \sum_k
  \lambda^\tau_{t,k} \langle f,\psi_{t,k} \rangle_{d_t} \psi_{t,k}$,
  hence $\norm{H_t(\mP^\tau_t f)}_{d_t}\le\norm{H_t(\mP_t
  f)}_{d_t}\le\lambda_{t,2}\norm{H_t(f)}_{d_t}$, since
  $|\lambda_{t,2}|\le 1$. Thus, \cref{thm: spec_thm} can be used to
  prove convergence of the process.
\end{remark}

\begin{remark}
  Both \cref{thm: spec_thm} and \cref{cor:
  spec_convergence} are valid for a broad class of diffusion operators, in
  particular those with finite support or a wider family of random walks
  on a graph. This differs from the geometric \cref{thm:diameter_conv},
  which is restricted to strictly-positive kernels.
  It is also possible to leverage information from the
  underlying structure of the data to characterize the convergence of
  the condensation process. For example, for a random walk on
  a graph, the second-largest eigenvalue is influenced by the
  connectivity of the graph; a highly-connected graph would yield
  a small eigenvalue, hence converging faster. For the Box kernel,
  assuming monotone convergence of degrees, \cref{cor: spec_convergence} can
  be used to analyze overall convergence by evaluating the
  second largest eigenvalue at different condensation times. 
\end{remark}

\begin{remark}
  The degree convergence assumption we use \eqref{eq: assumption_degree} assumes that the
  process converges to a stable representation $\X_M$, without any
  assumption on $\X_M$. In practice, since transition operators are
  contractive, we observe that this assumption is easily respected (from \cref{lemma: degrees_monotone}).  \review{It is worth noting that \eqref{eq: assumption_degree} can be controlled for random walks on k-nearest-neighbor graphs (since $d_t(i) = k$).} \cref{cor: spec_convergence}, bounding the second largest eigenvalue,
  then guarantees that $\X_M$ is a single point.
\end{remark}

\section{Topological properties of the condensation process}\label{sec:topological}

Having previously proved convergence properties, we now take on a coarser perspective
and characterize topological, i.e., \emph{structural}, properties of the diffusion
condensation process. To this end, we note that the multiresolution structure provided
by diffusion condensation
naturally relates to recent advances in using computational topology to
understand the ``shape'' of data geometry at varying scales. To
elucidate this connection, \cref{sec:int-homology} and \cref{sec:amb-homology}
introduce two perspectives for integrating topological information into
the data geometry uncovered by the diffusion condensation process, i.e.,
\begin{inparaenum}[(i)]
\item \review{\emph{condensation homology}} for describing the topology of the diffusion condensation process itself, and
  \item \review{\emph{persistent homology} based on Vietoris--Rips
    complexes} for describing each step of the diffusion
  condensation process, \review{thus closing the loop to the
  previously-provided geometric notions}.
\end{inparaenum}
We provide a brief review of relevant topological data analysis (TDA)
notions in \cref{sec:summary_homology} and in the supplementary material.
\autoref{fig:Double annulus intrinsic diffusion homology} and \autoref{fig:Double annulus ambient diffusion homology}
depict the two types of topological descriptions.
\review{
  Readers familiar with topological data analysis may recognize that our
  two perspectives may also be seen as slices of
  a special bifiltration, i.e., a filtration with two parameters.
  However, since bifiltrations are known to be computationally more
  challenging~\cite{Lesnick15}, we defer their treatment to future work.
}

\begin{figure}[tbp]
  \centering
  \pgfplotsset{%
    /pgfplots/group/every plot/.append style = {%
      mark size = 0.5pt,
    },
  }
  \begin{tikzpicture}
    \begin{groupplot}[%
      group style = {%
        group size     = 3 by 3,
        xlabels at     = edge bottom,
        xticklabels at = edge bottom,
        ylabels at     = edge left,
        yticklabels at = edge left,
        horizontal sep = 0.5cm,
        vertical sep   = 0cm,
      },
      width  = 4cm,
      height = 4cm,
      ticks  = none,
    ]
      \nextgroupplot[%
        xmin = -0.30,
        xmax =  0.90,
        ymin = -0.10,
        ymax =  1.10,
      ]
        \addplot[only marks] table {./Data/Double_Annulus/double_annulus_gaussian_n128_t000.txt};
      \nextgroupplot
        \addplot[only marks] table {./Data/Hyperuniform_Circle/hyperuniform_circle_gaussian_n128_t00.txt};
      \nextgroupplot
        \addplot[only marks] table {./Data/Petals/petals_gaussian_n128_t00.txt};

      \nextgroupplot[%
        ytick     = \empty,
        ticks     = both,
        xtick pos = left,
      ]
      \addplot[no marks] table {./Data/Double_Annulus/double_annulus_gaussian_n128_intrinsic_diffusion_homology.txt};

      \nextgroupplot[%
        ticks     = both,
        xtick pos = left,
        ytick     = \empty,
      ]
        \addplot[no marks] table {./Data/Hyperuniform_Circle/hyperuniform_circle_gaussian_n128_intrinsic_diffusion_homology.txt};

      \nextgroupplot[%
        ticks     = both,
        xtick pos = left,
        ytick     = \empty,
      ]
        \addplot[no marks] table {./Data/Petals/petals_gaussian_n128_intrinsic_diffusion_homology.txt};

      \nextgroupplot[%
        xlabel    = {$t$},
        ticks     = both,
        xtick pos = left,
        ytick pos = bottom,
      ]
        \addplot[no marks] table {./Data/Double_Annulus/double_annulus_gaussian_n128_total_persistence.txt};

      \nextgroupplot[%
        xlabel    = {$t$},
        ticks     = both,
        xtick pos = left,
        ytick pos = bottom,
      ]
        \addplot[no marks] table {./Data/Hyperuniform_Circle/hyperuniform_circle_gaussian_n128_total_persistence.txt};

      \nextgroupplot[%
        xlabel    = {$t$},
        ticks     = both,
        xtick pos = left,
        ytick pos = bottom,
      ]
        \addplot[no marks] table {./Data/Petals/petals_gaussian_n128_total_persistence.txt};
    \end{groupplot}
  \end{tikzpicture}

  \caption{%
    An illustration of \review{\emph{condensation homology}} for the ``double
    annulus'' dataset, the ``hyperuniform circle'' dataset \review{where $n$ points are evenly spaced around the circle with $\frac{2 \pi}{n}$ radians between them}, and the
    ``petals'' dataset. The upper row depicts the original dataset at $t
    = 0$; the middle row depicts the \review{condensation homology}
    barcode, i.e., a summary of topological activity over all
    condensation iterations; the lower row depicts topological activity
    curves~(cumulative sums of lengths in the \review{condensation homology barcode}).
  }
  \label{fig:Double annulus intrinsic diffusion homology}
\end{figure}
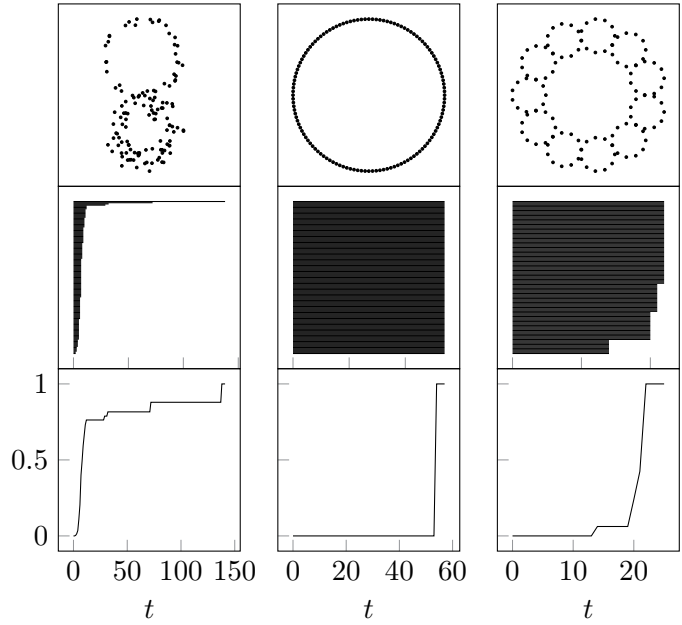

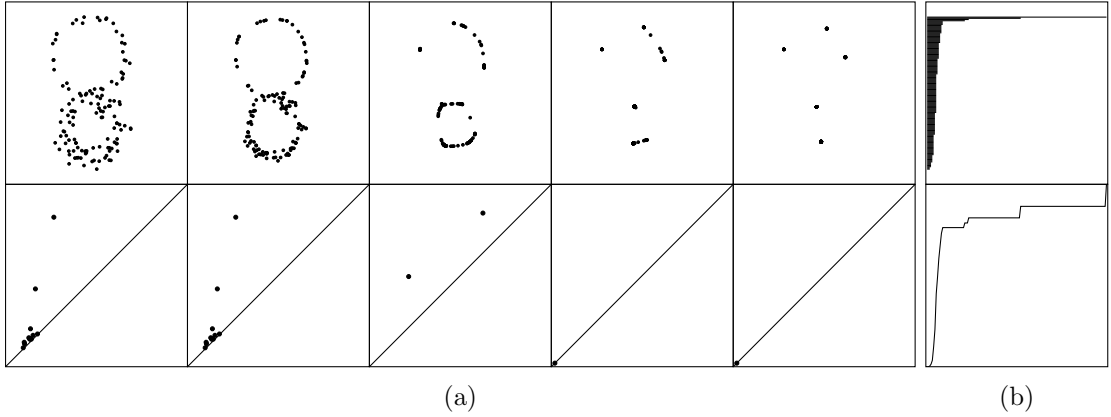
\begin{figure}[tbp]
  \centering
  \subcaptionbox{}{%
    \begin{tikzpicture}
      \begin{groupplot}[%
        group style = {%
          group size     = 5 by 2,
          ylabels at     = edge left,
          horizontal sep = 0cm,
          vertical sep   = 0cm,
        },
        unit vector ratio* = 1 1 1,
        %
        xmin = -0.30,
        xmax =  0.90,
        ymin = -0.10,
        ymax =  1.10,
        width  = 4cm,
        height = 4cm,
        ticks  = none,
      ]

      \pgfplotsset{%
        /pgfplots/group/every plot/.append style = {%
          mark size = 0.5pt,
        },
      }
        \nextgroupplot
        \addplot[only marks] table {./Data/Double_Annulus/double_annulus_gaussian_n128_t000.txt};

        \nextgroupplot
        \addplot[only marks] table {./Data/Double_Annulus/double_annulus_gaussian_n128_t001.txt};

        \nextgroupplot
        \addplot[only marks] table {./Data/Double_Annulus/double_annulus_gaussian_n128_t005.txt};

        \nextgroupplot
        \addplot[only marks] table {./Data/Double_Annulus/double_annulus_gaussian_n128_t008.txt};

        \nextgroupplot
        \addplot[only marks] table {./Data/Double_Annulus/double_annulus_gaussian_n128_t010.txt};

        \nextgroupplot[
            xmin      = -0.01,
            xmax      =  0.50,
            ymin      = -0.01,
            ymax      =  0.50,
            mark size = 0.75pt,
        ]

        \addplot[only marks] table {./Data/Double_Annulus/double_annulus_gaussian_n128_persistence_diagram_t000.txt};
        \addplot[domain = {-0.05:0.50}] {x};

        \nextgroupplot[
            xmin      = -0.01,
            xmax      =  0.50,
            ymin      = -0.01,
            ymax      =  0.50,
            mark size = 0.75pt,
        ]

        \addplot[only marks] table {./Data/Double_Annulus/double_annulus_gaussian_n128_persistence_diagram_t001.txt};
        \addplot[domain = {-0.05:0.50}] {x};

        \nextgroupplot[
            xmin      = -0.01,
            xmax      =  0.50,
            ymin      = -0.01,
            ymax      =  0.50,
            mark size = 0.75pt,
        ]

        \addplot[only marks] table {./Data/Double_Annulus/double_annulus_gaussian_n128_persistence_diagram_t005.txt};
        \addplot[domain = {-0.05:0.50}] {x};

       \nextgroupplot[
            xmin      = -0.01,
            xmax      =  0.50,
            ymin      = -0.01,
            ymax      =  0.50,
            mark size = 0.75pt,
        ]

        \addplot[only marks] table {./Data/Double_Annulus/double_annulus_gaussian_n128_persistence_diagram_t008.txt};
        \addplot[domain = {-0.05:0.50}] {x};

        \nextgroupplot[
            xmin      = -0.01,
            xmax      =  0.50,
            ymin      = -0.01,
            ymax      =  0.50,
            mark size = 0.75pt,
        ]

        \addplot[only marks] table {./Data/Double_Annulus/double_annulus_gaussian_n128_persistence_diagram_t010.txt};
        \addplot[domain = {-0.05:0.50}] {x};

      \end{groupplot}
    \end{tikzpicture}
  }%
  \subcaptionbox{}{%
    \begin{tikzpicture}
      \begin{groupplot}[%
        group style = {%
          group size     = 1 by 2,
          ylabels at     = edge left,
          horizontal sep = 0cm,
          vertical sep   = 0cm,
        },
        width              = 4cm,
        height             = 4cm,
        ticks              = none,
      ]
        \nextgroupplot[
          width  = 4cm,
          height = 4cm,
          xmin   =  -1.0,
          xmax   = 139.0,
        ]
          \addplot[no marks] table {./Data/Double_Annulus/double_annulus_gaussian_n128_intrinsic_diffusion_homology.txt};

        \nextgroupplot[
          xmin   =   -1.0,
          xmax   =  139.0,
          ymin   =    0.0,
          ymax   =    1.0,
        ]
          \addplot[no marks] table {./Data/Double_Annulus/double_annulus_gaussian_n128_total_persistence.txt};
      \end{groupplot}
    \end{tikzpicture}
  }%
  \caption{%
    Left: \review{\emph{Persistent homology}} for the ``double
    annulus'' dataset. Following different steps in the diffusion
    condensation process~(upper row), we obtain a sequence of
    persistence diagrams~(lower row) that summarize the one-dimensional
    topological features, i.e., the cycles, in the dataset. Right: The
    \review{\emph{condensation homology}}~(top) and the topological
    activity curve~(bottom) of the dataset for comparison
    purposes.
  }%
  \label{fig:Double annulus ambient diffusion homology}
\end{figure}

\subsection{A brief summary of persistent homology}\label{sec:summary_homology}
\review{
Persistent homology~\cite{Barannikov94,Edelsbrunner02} is a method
from the field of computational topology, which develops tools for
obtaining and analyzing topological features of datasets. Given its
beneficial robustness properties~\cite{Cohen-Steiner07}, persistent
homology has received a large degree of attention from the
machine learning community~\cite{Hensel21}.
}%

\review{%
We first introduce the underlying concept of simplicial homology.
For a simplicial complex $\simplicialcomplex$, i.e.\
a generalized graph with higher-order connectivity information in the
form of cliques, simplicial homology employs matrix reduction algorithms to
assign $\simplicialcomplex$ a family of groups, the
\emph{homology groups}.
The $d$th homology group
$\homologygroup{d}\left(\simplicialcomplex\right)$ of
$\simplicialcomplex$ contains equivalence classes of
\mbox{$d$-dimensional} topological features, such as connected
components~($d = 0$), cycles/tunnels~($d = 1$), and voids~($d = 2$).
These features are also known as homology classes. 
Homology groups are typically summarized by their ranks, thereby
obtaining a simple invariant ``signature'' of a manifold.
For instance, a circle in $\reals^2$ has one feature with $d = 1$, i.e.,
a cycle, and one feature with $d = 0$, i.e., a connected component.
In practice, we are dealing with a point cloud $\X$ and a metric, such
as the Euclidean distance. In this setting, \emph{persistent homology}
now creates a sequence of nested simplicial complexes, making it
possible to track the changes in homology groups---and thus the changes
in topology---over multiple scales~(with the understanding that
real-world data sets necessitate such a multi-scale perspective,
a single scale being too restrictive).
This is achieved by constructing a special simplicial complex, the
Vietoris--Rips complex~\cite{Vietoris27}.
For $0 \leq \epsilon < \infty$, the Vietoris--Rips complex of
$\X$ at scale $\epsilon$, denoted by
$\vietoris_{\epsilon}\left(\X\right)$, contains all
simplices~(i.e., subsets) of $\X$ whose elements $\{x_0, x_1,
\dots\}$ satisfy $\dist\left(x_i,
x_j\right) \leq \epsilon$ for all $i$, $j$.
Calculating topological features of $\vietoris_{\epsilon}$ results in
a set of tuples of the form $(\epsilon_i, \epsilon_j, d)$, where
$\epsilon_i \in \reals$ refers to a threshold at which a topological
feature was ``created'', i.e., the threshold at which it occurred for
the first time in $\vietoris_\epsilon$. Likewise, $\epsilon_j \in
\reals$ refers to the threshold at which the feature was destroyed.
Last, $d$ indicates the dimension of the respective feature. Together,
the features of dimension~$d$ form the $d$-dimensional \emph{persistence
diagram}, a topological descriptor containing the point~$(\epsilon_i,
\epsilon_j)$ for every such tuple above.
For example, when~$d= 0$, the threshold $\epsilon_j$ denotes at which
distance two connected components in a dataset are merged into one.
}

\subsection{Condensation homology}\label{sec:int-homology}

The formulation of the diffusion condensation process, with its merge
step for close points,  induces changes in the topological structure of
the datasets. This will result in \emph{one} topological descriptor
summarizing them.
We first define a filtration, \review{i.e., an ordering of subsets of
the data, such that we obtain a sequence of nested simplicial complexes},
that is intrinsic to the diffusion
condensation process, being compatible with the algorithm in \cref{sec:dc:algorithm}.
\review{The filtration is based on the idea of first extracting subsets
  of the data that satisfy a pairwise distance requirement---similar to
  the Vietoris--Rips filtration, which we shall describe in
  \cref{sec:amb-homology}---and assign them a weight based on the
condensation time~$t$. This weight is used to track topological changes
during the condensation process.}
\begin{definition}[Condensation homology filtration]
  Given a merge threshold $\zeta\in\reals_{> 0}$, we define the \emph{intrinsic
  condensation filtration} for $t \in\naturals$ \review{as the filtration
  arising from the sequence of simplicial complexes}
  \begin{equation}
  \vietoris_t(\X, \zeta) := \left\{ \sigma \subseteq \Xt \mid
  \dist\left(x_t\left(i\right), x_t\left(j\right)\right) \leq \zeta \text{
  for all $x(i), x(j) \in \sigma$} \right\} \bigcup_{t' = 0}^{t - 1}
  \vietoris_{t'}(\X, \zeta),
    \label{eq:Condensation homology filtration}
  \end{equation}
  with $\vietoris_0(\X, \zeta) := \left\{ \sigma \subseteq \X \mid
  \dist\left(x\left(i\right), x\left(j\right)\right) \leq \zeta
  \right\}$. \review{The weight function $\mathrm{w}\colon 2^{\X} \to
  \naturals$ for each $\vietoris_t(\X, \zeta)$ is defined by setting}
  $\mathrm{w}(\{i\}) := 0$ for a \mbox{0-simplex} $\{i\}$, and by
  \review{setting}
  $\mathrm{w}(\{i, j\}) := \min\left\{t \mid \left\{i, j\right\} \in
  \vietoris_t(\X, \zeta) \right\}$
  for each \mbox{1-simplex} $\{i, j\}$, i.e., we use the first~$t$ such
  that the two points are in a $\zeta$-neighborhood. The weight function
  can be extended to higher-dimensional simplices inductively by taking the
  maximum.
\end{definition}

%
\begin{lemma}
  Using \cref{eq:Condensation homology filtration} results in a nested sequence
  of simplicial complexes. \review{We thus obtain a valid filtration from which
  we may calculate topological features.}
\end{lemma}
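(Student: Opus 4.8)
The plan is to verify, directly from \cref{eq:Condensation homology filtration}, the two defining features of a filtration: that every $\vietoris_t(\X,\zeta)$ is an abstract simplicial complex, and that $\vietoris_{t}(\X,\zeta)\subseteq\vietoris_{t+1}(\X,\zeta)$ for all $t$; then the weight function upgrades this to a filtration usable for persistent homology. Throughout I would treat each $\vietoris_t(\X,\zeta)$ as a complex on the fixed ground set $\{1,\dots,N\}$ (equivalently, on $\X$ via $j\leftrightarrow x(j)$), since the point \emph{identities} $j$ are tracked across condensation time while only the positions $x_t(j)$ move; this is what makes the unions in \cref{eq:Condensation homology filtration} meaningful and what lets $\mathrm w$ be defined consistently for all $t$.

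First I would observe that the ``current layer'' $C_t:=\{\sigma\subseteq\Xt \mid \dist(x_t(i),x_t(j))\le\zeta \text{ for all } x(i),x(j)\in\sigma\}$ is downward closed: a face $\tau\subseteq\sigma$ imposes a subset of the pairwise-distance constraints satisfied by $\sigma$, so $\tau\in C_t$; hence $C_t$ is a simplicial complex. Then I would prove by induction on $t$ that $\vietoris_t(\X,\zeta)$ is a simplicial complex. The base case $t=0$ is exactly $C_0$. For the inductive step, $\vietoris_t(\X,\zeta)=C_t\cup\bigcup_{t'=0}^{t-1}\vietoris_{t'}(\X,\zeta)$ is a finite union of simplicial complexes on the common ground set (the $\vietoris_{t'}$ by the inductive hypothesis, $C_t$ by the previous remark), and a union of simplicial complexes is again a simplicial complex, because a face of a simplex lying in one summand again lies in that same summand.

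Nestedness is then immediate: for $t\ge 1$ the complex $\vietoris_{t-1}(\X,\zeta)$ is the $t'=t-1$ term of the union $\bigcup_{t'=0}^{t-1}\vietoris_{t'}(\X,\zeta)\subseteq\vietoris_t(\X,\zeta)$, so $(\vietoris_t(\X,\zeta))_{t\in\naturals}$ is a nested sequence of complexes. To see that the weight function makes this a bona fide filtration, note that every vertex gets weight $0\le\mathrm w(\sigma)$ for all $\sigma$; for an edge $\{i,j\}$ the set $\{t\mid\{i,j\}\in\vietoris_t(\X,\zeta)\}$ is either empty (the edge never occurs, so no weight is needed) or, by nestedness, equals $\{t\mid t\ge t_0\}$ for a well-defined $t_0=\mathrm w(\{i,j\})$; and since a higher simplex is assigned the maximum weight over its edges, any face of it ranges over a subset of those edges and so has weight at most that of the simplex. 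Thus $\mathrm w$ is monotone under face inclusion, and its sublevel sets $\{\sigma\mid\mathrm w(\sigma)\le t\}$ form a nested sequence of simplicial complexes---a valid filtration.

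The argument is essentially bookkeeping; the only points requiring care are the two highlighted above---fixing the common vertex set $\{1,\dots,N\}$ so that the unions are well defined, and checking the face-monotonicity of $\mathrm w$ given that higher-simplex weights are defined as maxima over edges. It is also worth noting in passing that the merge step of \cref{alg:condensation} is consistent with this construction, since identifying two indices keeps their mutual distance at $0\le\zeta$ at all later times and hence cannot break nestedness.
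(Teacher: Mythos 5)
Your proof is correct and follows essentially the same route as the paper's, which simply observes that nestedness holds by construction because each $\vietoris_t(\X,\zeta)$ contains the union of all earlier complexes. You additionally spell out the details the paper leaves implicit (downward closure of each layer, unions of complexes on the fixed index set $\{1,\dots,N\}$ being complexes, and face-monotonicity of the weight function), which is sound bookkeeping; the only point the paper raises that you omit is that ties in the weights are broken by point indices to get a consistent ordering.
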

\begin{proof}
  The nesting property is achieved by taking the union in
  \eqref{eq:Condensation homology filtration}. Hence, $\vietoris_t(\X, \zeta)$
  can only grow, which ensures that consecutive complexes are nested. The
  weights are not guaranteed to be unique, but we obtain
  a consistent ordering by using the indices of the respective points.
\end{proof}

Intuitively, the condensation homology filtration measures at which
iteration step~$t$ two points move into their $\zeta$-neighborhood for
the first time.
\review{There are two differences to a traditional Vietoris--Rips filtration as
used in \cref{sec:amb-homology}}.
First, we enforce the nesting condition of a filtration by taking the
union of all simplicial complexes for previous time steps; this is
necessary because, depending on the threshold $\zeta$, we cannot
guarantee that points remain within a $\zeta$-neighborhood.\footnote{%
  \review{For readers familiar with computational topology, we want to
    remark that using zigzag persistent homology~\cite{Carlsson09a},
    which does not require stringent nesting conditions for filtrations,
  would also be a possibility. We will consider such a perspective in
future work.}
}
The second difference is that we filter over diffusion condensation
iterations instead of distance thresholds\review{, necessitating the use
  of an additional weight function~(as opposed to using the distances
between points)}. Since diffusion condensation
results in changes of local distances, this filtration captures the
intrinsic behavior of the process. For now, we only add
\mbox{1-simplices} and \mbox{0-simplices} to every $\vietoris_t(\X, \zeta)$,
but the definition generalizes to higher-order simplices.
\review{We define \emph{condensation homology} to be the degree-0 persistent
homology of $\vietoris_t(\X, \zeta)$ under the weight function defined
above.}

Intuitively, we initially treat each data point~$x_i$ as a \mbox{0-simplex},
creating its own homology class and identify homology classes over
different time steps~$t$, i.e., the homology class of $x_t(i)$ and
$x_{t'}(i)$ for $t \neq t'$ is considered to be the same.
As the geometry of the underlying point cloud changes during
each iteration, points start to progressively cluster.  Whenever
a \emph{merge} event happens~(see line~\ref{lst:Merge event} in the
diffusion condensation algorithm), we let the homology class
corresponding to the vertex with the lower index continue, while we
\emph{destroy} the other homology class. Given our weight function, such
an event results in a tuple of the form $(0, t)$, where~$t$ denotes the diffusion
condensation iteration. This can also be considered as a persistence diagram
arising from a distance-based filtration of an abstract input dataset~(hence,
every tuple contains a~$0$; all homology classes---i.e., all
vertices---are present at the start of the diffusion condensation
process). Our weight function can be interpreted as a ``temporal
distance;'' the
distance between pairs of vertices~$(i, j)$ is given by calculating the
value of~$t$ for which their spatial distance falls below the merge
threshold~$\zeta$ for the first time, i.e., $\dist(i, j) := \min\{ t \mid
\dist\left(x_t(i), x_t(j)\right) \leq \zeta\}$.
A convenient representation can be obtained using
a \emph{persistence barcode}~\cite{Ghrist08a}, i.e., a representation in which the
lifespan of each homology class is depicted using a bar. Longer bars
indicate more prominent clusters or groupings in data~\cite{kuchroo_topological_2021}.
\autoref{fig:Double annulus intrinsic diffusion homology} illustrates this for
a ``double annulus'' dataset, which does not give rise to a complex set
of clusters, as indicated by the existence of few long bars in such
a barcode.
%

\begin{figure}[tbp]
  \centering
  \subcaptionbox{Gaussian kernel}{%
    \input{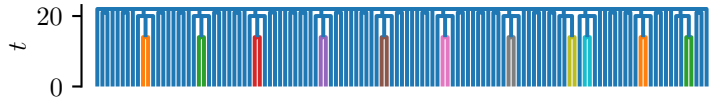}
  }\\
  \subcaptionbox{$\alpha$-decay kernel}{%
    \input{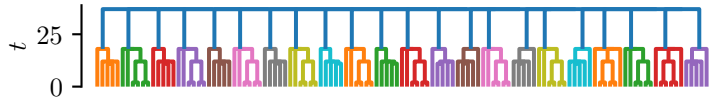}
  }
  \caption{%
    Two dendrograms obtained on the ``petals'' dataset. The different
    condensation behavior exhibited by different kernels~(see
    \autoref{fig:Example kernels}) also manifests itself in the
    dendrograms.
  }
  \label{fig:Dendrograms petals}
\end{figure}

Notice that the persistence pairing~$\mathcal{P}$ corresponding to the
condensation homology carries all the information about the
hierarchy of merges obtained during the diffusion condensation process. 
Specifically, $\mathcal{P}$ consists of pairs of the form~$(\{u\},
\{v, w\})$, where $\{u\}$ is a vertex and $\{v, w\}$
is an edge between two vertices. We can use these edges to construct
a tree of merges, i.e., a \emph{dendrogram}~(see \autoref{fig:Dendrograms petals}).
This perspective will be useful later on when we show how
diffusion condensation generalizes existing hierarchical clustering
methods.

\subsection{Persistent homology of the diffusion process}\label{sec:amb-homology}

\review{As a more expressive---but also more complicated---description
of topological features in the condensation process, we calculate
persistent homology of the input dataset~$\X$ at every condensation
iteration.} \review{To this end,} we calculate a Vietoris--Rips
complex for each point cloud $\X_t$ of the diffusion
condensation process, denoting the Vietoris--Rips complex
of~$\mathsf{X}$ at diffusion time~$t$ as
\reviewtwo{$\vietoris_{\zeta}(\mathsf{X}, t) := \left\{ \sigma \subseteq
  \Xt \mid \dist\left(x_t\left(i\right), x_t\left(j\right)\right) \leq
\zeta \text{ for all $x(i), x(j) \in \sigma$} \right\}$}~\review{(this notation was
chosen to contrast with $\vietoris_t(\mathsf{X}, \zeta)$ from
\eqref{eq:Condensation homology filtration}, in which~$t$ is varied as
the filtration parameter while $\zeta$ is kept fixed)}.
In the following, we will prove that the
topological features of \review{$\vietoris_{\zeta}(\mathsf{X}, t)$}
converge as the diffusion condensation process converges.
\review{To this end, we make use of the \emph{bottleneck
  distance}~$\bottleneck(\cdot, \cdot)$, a distance metric between
  persistence diagrams, defined as
  \begin{equation}
    \bottleneck(\diagram, \diagram') = \inf_{\eta\colon \diagram \to \diagram'}\sup_{x\in{}\diagram}\|x-\eta(x)\|_\infty,
    \label{eq:Bottleneck distance}
  \end{equation}
  where $\eta\colon\diagram \to \diagram'$ denotes a bijection between the
  point sets of both diagrams, and $\|\cdot\|_\infty$ refers to the
  $\mathrm{L}_\infty$ metric between two points in $\reals^2$.
}
Using the preceding theorem from \autoref{sec:geometric}, we can bound
the topological activity and prove convergence in terms of topological
properties. \review{Specifically, for the \mbox{0}-dimensional
  persistence diagram of our input dataset at diffusion time~$t$, which
  we subsequently denote by $\diagram_{\X_{t}}$,
  we prove that the bottleneck distance
  $\bottleneck(\diagram_{\X_t}, \diagram_{\X_{t'}})$ to another time
  step~$t'$ is upper-bounded by the respective
diameters of the point clouds.}
\begin{theorem}\label{thm:persistence}
  Let $t \leq t'$ refer to two iterations of the
  diffusion condensation process \review{with $\X_t, \X_{t'}$ denoting
  their corresponding point clouds. If $\diam(\X_t) \geq
\diam(\X_{t^\prime})$,} then the persistence diagrams
  corresponding to $\X_t$ and $\X_{t'}$ satisfy
  \begin{equation}
    \bottleneck(\diagram_{\X_t}, \diagram_{\X_{t'}}) \leq \diam\left(\mathsf{X}_t\right).
  \end{equation}
  %
\end{theorem}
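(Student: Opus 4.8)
The plan is to bound the bottleneck distance between the two persistence diagrams $\diagram_{\X_t}$ and $\diagram_{\X_{t'}}$ by controlling how much the underlying Vietoris--Rips filtrations differ, then invoking the stability theorem for persistence diagrams~\cite{Cohen-Steiner07}. The key observation is that, for the $0$-dimensional persistent homology of a Vietoris--Rips complex, every feature is born at filtration value~$0$ (all vertices are present at scale~$0$) and dies at the scale at which its connected component merges with an older one. Hence each diagram $\diagram_{\X_t}$ consists of points of the form $(0, \epsilon)$ with $0 \le \epsilon \le \diam(\X_t)$, since the whole point cloud is connected in $\vietoris_{\zeta}(\X_t, \cdot)$ once $\zeta$ reaches $\diam(\X_t)$. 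The same holds for $\diagram_{\X_{t'}}$, with all death times bounded by $\diam(\X_{t'}) \le \diam(\X_t)$ by hypothesis (and by the monotone decrease of the diameter from \cref{thm:diameter_conv}).

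First I would make precise that a persistence diagram arising from a $0$-dimensional Vietoris--Rips filtration on $N$ points has exactly $N$ points (counting the essential class, which one may either truncate at $\diam$ or treat as lying on the diagonal at infinity; I would adopt the convention that the essential class is paired with the diameter, consistent with the figures). Then I would exhibit an explicit bijection $\eta\colon \diagram_{\X_t} \to \diagram_{\X_{t'}}$: since both diagrams have the same number of off-diagonal points, simply match them in sorted order of death time. Under this matching, for any matched pair of points $(0,\epsilon)$ and $(0,\epsilon')$ we have $\|(0,\epsilon) - (0,\epsilon')\|_\infty = |\epsilon - \epsilon'| \le \max(\epsilon,\epsilon') \le \diam(\X_t)$, because both $\epsilon, \epsilon' \in [0, \diam(\X_t)]$. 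Taking the supremum over matched pairs and then the infimum over bijections in the definition~\eqref{eq:Bottleneck distance} yields $\bottleneck(\diagram_{\X_t}, \diagram_{\X_{t'}}) \le \diam(\X_t)$.

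The main obstacle, and the point deserving care, is the bookkeeping around the essential ($0$-dimensional) homology class and, more generally, ensuring that the crude bound $|\epsilon - \epsilon'| \le \max(\epsilon,\epsilon')$ is legitimate — this requires that all finite death times in \emph{both} diagrams are bounded by the \emph{same} quantity $\diam(\X_t)$, which is exactly where the hypothesis $\diam(\X_t) \ge \diam(\X_{t'})$ enters. A secondary subtlety is that the statement does not claim, and the proof does not need, any stability of the filtration values themselves across condensation steps (indeed local distances need not be monotone, per the remark before \cref{fig:Convex hull}); the argument is purely a containment-in-an-interval estimate, so no Gromov--Hausdorff or interleaving machinery is required. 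I would close by remarking that combining this theorem with the diameter convergence of \cref{thm:diameter_conv} (or the rate bound of \cref{thm:geo_rate}) immediately gives that the sequence $(\diagram_{\X_t})_{t}$ is Cauchy in the bottleneck distance and hence converges, which is the promised ``convergence in terms of topological properties.''
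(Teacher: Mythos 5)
Your proof is correct, but it takes a genuinely different route from the paper. The paper's proof is a two-line composition of off-the-shelf results: the stability bound $\bottleneck(\diagram_{\X_t}, \diagram_{\X_{t'}}) \leq 2\,\dgromovhausdorff(\X_t, \X_{t'})$ of Chazal et al., followed by M{\'e}moli's bound $\dgromovhausdorff(\X_t, \X_{t'}) \leq \tfrac{1}{2}\max\{\diam(\X_t), \diam(\X_{t'})\}$, after which the hypothesis $\diam(\X_t) \geq \diam(\X_{t'})$ collapses the maximum. You instead argue directly at the level of the diagrams: every point of a degree-$0$ Vietoris--Rips diagram has the form $(0,\epsilon)$ with $\epsilon$ bounded by the diameter, so any bijection between the (equinumerous) finite parts, with essential classes matched to each other, has cost at most $\max\{\diam(\X_t), \diam(\X_{t'})\} = \diam(\X_t)$. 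Both arguments are sound; yours is more elementary and self-contained, and your observation that no Gromov--Hausdorff or interleaving machinery is needed is apt --- the bound is really just a statement that both diagrams live in a box of side $\diam(\X_t)$, and in fact your construction can be sharpened to $\tfrac{1}{2}\diam(\X_t)$ by matching every finite point $(0,\epsilon)$ to the diagonal at cost $\epsilon/2$, which the paper's route does not reveal. What the paper's route buys in exchange is dimension-independence: the stability-plus-M{\'e}moli argument applies verbatim to persistence diagrams in any homological degree, whereas your argument leans on all births being zero (though it too extends, since for a finite point cloud every Vietoris--Rips feature in any degree is born and dies within $[0,\diam]$). Your closing remark --- that the bound combined with diameter convergence makes $(\diagram_{\X_t})_t$ Cauchy in bottleneck distance --- matches the discussion the paper gives immediately after the theorem, including its caveat that one may need to pass to a subsequence on which the diameter is non-increasing when the kernel is not strictly positive.
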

\begin{proof}
From Chazal et al.~\cite{Chazal09,
Chazal14}, we obtain
$\bottleneck(\diagram_{\X_t}, \diagram_{\X_{t'}}) \leq
2 \dgromovhausdorff(\X_t, \X_{t'})$, where $\dgromovhausdorff(\cdot,
  \cdot)$ denotes the Gromov--Hausdorff
distance.
According to M{\'e}moli~\cite[Proposition~5]{Memoli07}, we have
$\dgromovhausdorff(\X_t, \X_{t'}) \leq \nicefrac{1}{2} \max\{\diam(\X_t), \diam(\X_{t^\prime})\}$,
so we can simplify the bound to
$\bottleneck(\diagram_{\X_t}, \diagram_{\X_{t'}}) \leq \max\{\diam(\X_t), \diam(\X_{t^\prime})\}$.
As $\diam(\X_{t}) \geq \diam(\X_{t^\prime})$, we have $\bottleneck\left(\diagram_{\X_t}, \diagram_{\X_{t'}}\right) \leq \diam(\X_t)$. 
\end{proof}
\review{%
  Under the conditions of \cref{cor: spec_convergence}, i.e., for
  a large family of diffusion operators, we know that diffusion
  condensation converges to a point, thus implying $\lim_{t \to
  \infty}\diam(\X_t) = 0$.
  While we cannot guarantee that
  $\diam(\X_t) \geq \diam(\X_{t^\prime})$ for $t \leq t'$ holds in
  general~(in the setting of \cref{cor: spec_convergence},
  the diameter can increase; in the more restrictive setting of
  \cref{thm:diameter_conv}, diameters would also be non-increasing, but
  that theorem only applies to strictly pointwise positive kernels),
  we know that there exists a subsequence of
  condensation steps~$\{\widetilde{t}\}$ such that the diameter is non-increasing.
For this subsequence,} the bottleneck distance between
consecutive datasets, i.e., $\bottleneck(\diagram_{\X_{\widetilde{t}}},
\diagram_{\X_{\widetilde{t+1}}})$ also converges to~$0$. By contrast,
$\dgromovhausdorff(\X_{\widetilde{t}}, \X_{\widetilde{t+1}}) \geq \nicefrac{1}{2}
|\diam(\X_{\widetilde{t}}) - \diam(\X_{\widetilde{t+1}})|$~(the bound being tight in certain cases), implying that
the Bottleneck distance between consecutive time steps is never zero if
the diameter changes.
Since all point clouds are embedded into the same space, namely
$\reals^d$, all preceding statements apply with the Hausdorff distance
$\dhausdorff(\cdot, \cdot)$ replacing the Gromov--Hausdorff
distance~\cite{Chazal15}.
This distance has the advantage that we can easily evaluate it. We
require one auxiliary lemma to replace the diameter bound in the
previous proof.

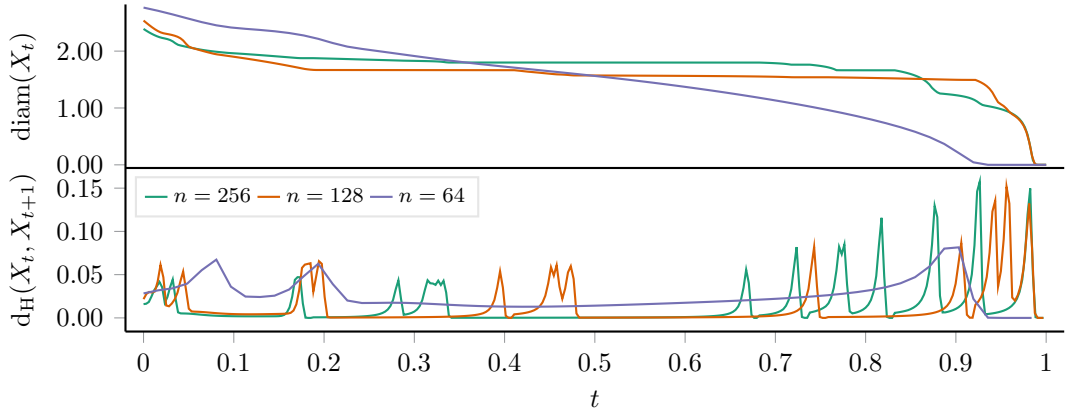
\begin{figure}[tbp]
  \centering
  \begin{tikzpicture}
    \begin{groupplot}[%
      lineplot,
      group style = {
        group size   = 1 by 2,
        vertical sep = 0pt,
      },
      group/x descriptions at = edge bottom,
    ]
      \nextgroupplot[
        ylabel = {$\diam(X_t)$},
        xtick  = \empty,
      ];

      \addplot+[%
        discard if not = {basename}{petals_n256}
      ] table[col sep=tab, x = t, y = diameter] {Data/Petals.csv};

      \addplot+[%
        discard if not = {basename}{petals_n128}
      ] table[col sep=tab, x = t, y = diameter] {Data/Petals.csv};

      \addplot+[%
        discard if not = {basename}{petals_n64}
      ] table[col sep=tab, x = t, y = diameter] {Data/Petals.csv};



      \nextgroupplot[
        ylabel           = {$\dhausdorff(X_t, X_{t+1})$},
        enlarge y limits = 0.1,
        legend style      = {%
          draw         = lightgrey,
          font         = \fontsize{8}{9}\selectfont,
          at           = {(0.01, 0.95)},
          anchor       = north west,
        },
      ];

      \addplot+[%
        discard if not = {basename}{petals_n256}
      ] table[col sep=tab, x = t, y = hausdorff_distance] {Data/Petals.csv};

      \addplot+[%
        discard if not = {basename}{petals_n128}
      ] table[col sep=tab, x = t, y = hausdorff_distance] {Data/Petals.csv};

      \addplot+[%
        discard if not = {basename}{petals_n64}
      ] table[col sep=tab, x = t, y = hausdorff_distance] {Data/Petals.csv};

      \legend{
        $n = 256$,
        $n = 128$,
        $n = 64$,
      }

    \end{groupplot}
  \end{tikzpicture}
  \caption{%
    \review2{Empirical convergence behavior of point cloud diameters~\review{%
      (for the ``petals'' dataset at various sample sizes~$n$)
    }
    and
    the Hausdorff distance between consecutive steps $i, i+1$ of the
    condensation process~(\review{used} as a proxy for the bottleneck distance).
    Convergence behavior with respect to the
    Hausdorff distance is not uniform and characterized by some
    ``jumps'', indicating that the datasets change considerably between
    certain time steps, before achieving a stable configuration.}
  }
  \label{fig:Empirical convergence}
\end{figure}

\begin{lemma}
  Let $\mathsf{X}, \mathsf{Y}$ be subsets of the same metric space, e.g., $\reals^d$,
  with $\conv(\mathsf{Y}) \subsetneq \conv(\mathsf{X})$. Then
  $\dhausdorff(\mathsf{X}, \mathsf{Y}) \leq \diam(\mathsf{X})$.
\end{lemma}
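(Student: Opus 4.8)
The plan is to reduce the claim to the elementary fact that a finite-dimensional set and its convex hull share the same diameter. First I would unpack the definition
\[
  \dhausdorff(\mathsf{X}, \mathsf{Y}) = \max\Bigl\{\, \sup_{x \in \mathsf{X}} \inf_{y \in \mathsf{Y}} \|x - y\|_2,\ \sup_{y \in \mathsf{Y}} \inf_{x \in \mathsf{X}} \|x - y\|_2 \,\Bigr\},
\]
working in $\reals^d$ so that $\conv(\cdot)$ is meaningful, and recording the two containments that the hypothesis provides: $\mathsf{Y} \subseteq \conv(\mathsf{Y}) \subseteq \conv(\mathsf{X})$, while trivially $\mathsf{X} \subseteq \conv(\mathsf{X})$. (I would note that $\mathsf{Y}$ must be taken nonempty for the Hausdorff distance to be finite; this holds in our application, where $\mathsf{Y} = \X_{t+1}$ is a finite point cloud.)

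The one substantive step is to show $\diam(\conv(\mathsf{X})) = \diam(\mathsf{X})$. The inequality ``$\geq$'' is immediate from $\mathsf{X} \subseteq \conv(\mathsf{X})$. For ``$\leq$'', I would take arbitrary points $u, v \in \conv(\mathsf{X})$, write them as finite convex combinations $u = \sum_i \lambda_i x_i$ and $v = \sum_j \mu_j x_j$ with $x_i, x_j \in \mathsf{X}$, and estimate
\[
  \|u - v\|_2 = \Bigl\| \sum_{i,j} \lambda_i \mu_j (x_i - x_j) \Bigr\|_2 \leq \sum_{i,j} \lambda_i \mu_j \|x_i - x_j\|_2 \leq \diam(\mathsf{X}),
\]
using the triangle inequality and $\sum_{i,j}\lambda_i\mu_j = 1$.

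With this in hand the conclusion follows directly, without even invoking the strictness of the inclusion: for every $x \in \mathsf{X}$ and every $y \in \mathsf{Y}$ both points lie in $\conv(\mathsf{X})$, so $\|x - y\|_2 \leq \diam(\conv(\mathsf{X})) = \diam(\mathsf{X})$. Fixing an arbitrary $y_0 \in \mathsf{Y}$ gives $\inf_{y \in \mathsf{Y}} \|x - y\|_2 \leq \diam(\mathsf{X})$ for each $x \in \mathsf{X}$, and symmetrically $\inf_{x \in \mathsf{X}} \|x - y\|_2 \leq \diam(\mathsf{X})$ for each $y \in \mathsf{Y}$; hence both terms in the maximum defining $\dhausdorff$ are at most $\diam(\mathsf{X})$. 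I do not anticipate a genuine obstacle here: the argument is short, and the only points requiring care are that ``metric space'' should be read as one with a compatible linear structure (e.g.\ $\reals^d$) so that convex hulls exist, and that the hypothesis $\conv(\mathsf{Y}) \subsetneq \conv(\mathsf{X})$ is in fact stronger than what the proof uses ($\conv(\mathsf{Y}) \subseteq \conv(\mathsf{X})$ would already suffice).
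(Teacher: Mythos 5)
Your proof is correct and follows essentially the same route as the paper's: both rest on the observations that $\mathsf{Y}\subseteq\conv(\mathsf{X})$ and that a set and its convex hull have the same diameter, so every point of either set is within $\diam(\mathsf{X})$ of every point of the other. The paper states this in one line without justifying the key step $\diam(\conv(\mathsf{X}))=\diam(\mathsf{X})$; your convex-combination estimate supplies that missing detail, and your remarks that strictness of the inclusion is not needed and that $\mathsf{Y}\neq\emptyset$ is required are both accurate.
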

\begin{proof}
  The Hausdorff distance is the smallest $r$-thickening required such
  that both $\mathsf{X}$ and $\mathsf{Y}$ become subsets of each other, \review{%
  i.e.,
  $\dhausdorff(\mathsf{X}, \mathsf{Y}) := \inf \{ r > 0 \mid \mathsf{Y} \subseteq \mathsf{X}^{(r)} \text { and } \mathsf{X} \subseteq \mathsf{Y}^{(r)} \}$.
  }
  Since $\conv(\mathsf{Y}) \subsetneq \conv(\mathsf{X})$, we have $r
  \leq \diam(\mathsf{X})$.
\end{proof}
As a consequence of this lemma and the preceding proof, we obtain
a bound in terms of the Hausdorff distance and the diameter. For $t \leq
t'$, we have
\begin{equation*}
  \bottleneck(\diagram_{\X_t}, \diagram_{\X_{t'}}) \leq 2 \dhausdorff(\X_t, \X_{t'}) \leq \diam(\X_t).
\end{equation*}
\cref{fig:Empirical convergence} shows empirical convergence behavior
between consecutive time steps, illustrating how different condensation
processes are characterized by different diameter shrinkages.

\subsection{Hierarchical clustering}\label{sec:clustering}

\reviewtwo{%
In contrast to existing work on multiscale diffusion-based
clustering~\cite{murphy22multiscale}, diffusion condensation changes the
underlying geometric--topological structure of the data to extract
hierarchical information. A topological perspective helps us elucidate connections
to hierarchical clustering, a clustering method based on measuring
dissimilarities between clusters via \emph{linkage methods}.
}
While there are many linkage methods for measuring the association
between clusters in agglomerative hierarchical clustering, we focus on
the centroid method as it is the most relevant to diffusion
condensation. Agglomerative clustering, and the centroid method
specifically, is widely applied in
phylogeny~\cite{durbin1998biological}, sequence
alignment~\cite{katoh_mafft_2002}, and analysis of other
types of data~\cite{kaufman2009finding}. In the centroid method, the
distance between any two clusters $a$ and $b$ is defined as the distance
between the centroids of the clusters. There are two natural definitions
of Euclidean centroids, leading to the unweighted
pair group method with centroid mean
(UPGMC)~\cite{sokal_statistical_1958} and to the weighted version
(WPGMC), also known as median linkage hierarchical
clustering~\cite{gower_comparison_1967}. The unweighted centroid
$C_{\textsc{UPGMC}}$ is the centroid of all points in the cluster: 
\begin{equation}
C_{\textsc{UPGMC}}(a) = \frac{1}{|a|} \sum_{x \in a} x .
\end{equation}
In contrast, the weighted version depends on the parent clusters:
suppose cluster $a$ is formed by the merging of clusters $b$ and $c$,
then the centroid of $a$ is defined as:
\begin{equation}
    C_{\textsc{WPGMC}}(a) = \frac{C_{\textsc{WPGMC}}(b) + C_{\textsc{WPGMC}}(c)}{2}.
\end{equation}
In either case, the distance between two clusters $a$ and $b$ is defined
as the squared Euclidean distance between their centroids, denoted by
$D(a,b) := \|C(a) - C(b)\|^2$. This algorithm is detailed in
\cref{alg:clustering}~(\review{with $\oplus$ referring to
sequence concatenation}).
%
\begin{algorithm}
\caption{Centroid Hierarchical Agglomerative Clustering}
\label{alg:clustering}
\begin{algorithmic}[1]
\STATE{Input: set of points $\X_0$}
\STATE{Output: the set of clusters at each level $(L_0, L_1, \ldots, L_{N-1})$}
\STATE $L_0 \gets \left \{\{x(1)\}, \{x(2)\}, \ldots, \{x(N)\} \right \}$ \COMMENT{Initially, every point is its own cluster}
\FOR{$t \in \{1,\dotsc,N-1\}$}
    \STATE $a^*, b^* \gets \argmin_{(a, b) \in (L_{t-1})^2} D(a,b)$ s.t. $a \neq b$ \COMMENT{Find centroids to merge}
    \STATE $L_t \gets (L_{t-1} \setminus a^*) \setminus b^* \oplus (a^* \cup b^*)$ \COMMENT{Add new cluster with $a^*, b^*$ merged}
\ENDFOR
\end{algorithmic}
\end{algorithm}
%
For a given dataset, the UPGMC and WPGMC algorithms give a unique
sequence of merges, provided that at each iteration there exists a unique
choice of \review{centroids} $a^*$ and $b^*$ that achieve the minimum
distance between clusters. These methods are similar to diffusion
condensation, and in certain situations, equivalent; our next theorem
makes this more precise.

\begin{theorem}\label{thm:cluster_converge}
Let $\zeta = 0$, $\epsilon_t = \min_{x,y \in \Xt} \| x - y\|_2 > 0$, and
$k_t(x,y)$ be the box kernel in \cref{def:box}. In this case, the
diffusion condensation produces equivalent topological features than
centroid agglomerative clustering~(UPGMC), in both diffusion homology,
and persistent homology~(for $\zeta = 0$), i.e., $\vietoris_0(\X, t) = L_t$
for all $t$. Further if $\zeta : 0 < \zeta < \epsilon_t$, then diffusion
condensation is \review{similarly equivalent in both condensation
homology and persistent homology} to median linkage agglomerative clustering (WPGMC).
\end{theorem}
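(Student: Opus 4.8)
The plan is to reduce the theorem to one observation about a single condensation step and then induct on the condensation time~$t$. The observation is that with $k_t$ the box kernel of \cref{def:box} at bandwidth $\epsilon_t=\min_{x\neq y\in\Xt}\|x-y\|_2$, the matrix $\mK_t$ is extremely rigid: off the diagonal, $\mK_t(i,j)=1$ holds exactly when $x_t(i)=x_t(j)$ or when $\{x_t(i),x_t(j)\}$ attains the minimum pairwise distance, and under the stated genericity assumption the latter occurs for a single pair. Hence $\Pt=\mD_t^{-1}\mK_t$ replaces each point by the degree-weighted average of the points coinciding with it, augmented --- for the two closest points only --- by the points of the other. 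I would record this as a structural lemma, since both halves of the theorem rest on it, and I would also note that the closest pair of distinct locations is exactly the pair of clusters minimizing $\|C(a)-C(b)\|^2$, hence the pair that UPGMC/WPGMC merge, with uniqueness transferring since $x\mapsto x^2$ is strictly increasing on $[0,\infty)$.

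For the UPGMC case ($\zeta=0$) the merge line of \cref{alg:condensation} never fires, so $\Xt$ retains $N$ rows throughout and a cluster of size~$n$ is carried by $n$ coincident rows. I would prove by induction on~$t$ the invariant: the partition of the $N$ rows of $\Xt$ by equality of coordinates is $L_t$, and every row of a cluster~$a$ lies at $C_{\textsc{UPGMC}}(a)$. In the inductive step, let $c_1,\dots,c_m$ be the distinct locations with multiplicities $n_1,\dots,n_m$ and let $c_1,c_2$ be the unique closest pair. Then a row of cluster~$1$ or~$2$ has degree $n_1+n_2$ while a row of cluster~$j\geq 3$ has degree $n_j$, so one application of $\Pt$ moves every row of clusters~$1$ and~$2$ to $\frac{n_1c_1+n_2c_2}{n_1+n_2}$ and fixes every other row (no untouched location can equal this barycenter without violating minimality of $\epsilon_t$); by the inductive hypothesis this common image is the mean of the union of clusters~$1$ and~$2$, i.e.\ $C_{\textsc{UPGMC}}$ of the merged cluster, so the new configuration realizes $L_{t+1}$ with the centroid invariant intact. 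From the invariant, the complex $\vietoris_0(\X,t)$ --- whose vertices are the rows and whose edges join coincident rows --- has connected components equal to the clusters of $L_t$, which is the assertion $\vietoris_0(\X,t)=L_t$; moreover exactly one merge occurs per step, so the dendrogram of $N-1$ merges coincides with that of UPGMC, whence the condensation-homology barcode of \cref{eq:Condensation homology filtration} and the $\zeta=0$ ambient persistence diagram encode the same topological features.

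For the WPGMC case ($0<\zeta<\epsilon_t$) the merge step does fire and collapses coincident rows, so each cluster is now carried by a single row; I would run the analogous induction with invariant ``$\Xt$ is the set of WPGMC centroids of the clusters of $L_t$''. In the step, the box-kernel graph on the $m$ current points has the single edge of the unique closest pair $q_1,q_2$; $\Pt$ sends both to $\tfrac12(q_1+q_2)$ and fixes the others, and the merge identifies $q_1$ with $q_2$. If $q_1,q_2$ are the centroids of the parents $b,c$, then $\tfrac12(q_1+q_2)=\tfrac12\bigl(C_{\textsc{WPGMC}}(b)+C_{\textsc{WPGMC}}(c)\bigr)=C_{\textsc{WPGMC}}(b\cup c)$, which is exactly the median-linkage update, and the rest proceeds as before. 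The main technical obstacle is checking that the merge step identifies \emph{only} the pair $q_1,q_2$: after diffusion the distance from $\tfrac12(q_1+q_2)$ to any untouched $q_j$ equals $\tfrac12\sqrt{2\|q_j-q_1\|^2+2\|q_j-q_2\|^2-\|q_1-q_2\|^2}\geq\tfrac{\sqrt3}{2}\,\epsilon_t$ by the median-length identity, so provided $\zeta$ stays below this value at each step --- which is what the hypothesis $\zeta<\epsilon_t$ secures up to the constant $\sqrt3/2$ --- no spurious merge occurs and $\epsilon_t>0$ holds until the dataset collapses to a point. With that bookkeeping settled, the remaining content is again just the matching of the merge sequences, which the induction supplies, so the equivalence of the clustering output with both the condensation homology and the persistent homology of the condensation process follows.
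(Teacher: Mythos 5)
Your proposal takes essentially the same route as the paper's proof: induction on the condensation time $t$ with the invariant that the distinct locations in $\Xt$ are exactly the UPGMC (resp.\ WPGMC) centroids of the clusters in $L_t$, so that the unique closest pair of locations is precisely the pair of clusters both procedures merge. You are in fact more explicit than the paper on two points it leaves implicit --- the degree computation showing that coincident rows turn the box-kernel average into the unweighted centroid $\frac{n_1c_1+n_2c_2}{n_1+n_2}$ of the merged cluster, and the Apollonius bound $\bigl\|q_j-\tfrac12(q_1+q_2)\bigr\|\geq\tfrac{\sqrt{3}}{2}\epsilon_t$ that rules out spurious merges (and keeps $\epsilon_t>0$) in the WPGMC case.
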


\begin{proof}
\review{
  To show the equivalence of diffusion condensation to UPGMC algorithms, we show that
  \begin{inparaenum}[(i)]
    \item centroids in the UPGMC algorithm correspond to points in the
      diffusion condensation algorithm, and
    \item the same clusters---represented by their respective
      centroids---are merged in each iteration, i.e. $\vietoris_0(\X, t)$ is a representation of the hierarchy of UPGMC at iteration $t$.
  \end{inparaenum}}

\review{
\begin{enumerate}[(i)]
    \item We show the first claim by induction. For the condensation algorithm at $t=0$, the claim is trivially true, since all points are singleton and therefore centroids. By the induction hypothesis, all points are centroids at time $t$, and we show it still holds at time $t+1$. Without loss of generality, we assume that only a single pair
      of points achieves the minimal pairwise distance at time $t$, say $(x_t(k),x_t(l))$.\footnote{%
      This is equivalent to
      assuming that all pairwise distances in the current diffusion
      condensation step are unique. Said assumption also ensures that the
      selection of $a^\ast$ and $b^\ast$ in \cref{alg:clustering} is
      unique, so it is a useful requirement. It does \emph{not} decrease the
      generality of our argumentation~(in fact, a consistent ordering of merges can
      always be achieved), but it simplifies notation and discussion.
      } Since $\epsilon_t = \min_{x,y \in \Xt} \| x - y\|_2 > 0$, and by construction of the box kernel $\mK_t(k,l) = \mK_t(l,k) =1$ and zero otherwise. Thus $x_{t+1}(k) = \mP_t(k,\cdot)\Xt = \mP_t(l,\cdot)\Xt = x_{t+1}(l)$, hence, only the two points with minimum distance will be merged at their midpoint (centroids), creating a new centroid. Since $\zeta = 0$, this will create a sequence of merges. 
      \item Just like UPGMC, only centroids with minimal distance are merged at every iteration. For each merge in the condensation algorithm, a tuple of the form $(0, t)$ is created in the condensation homology persistence diagram~(and the respective
      pairs are created in its persistence pairing).
      Therefore, points in the diffusion condensation process are equivalent
      to centroids in the UPGMC algorithm and the same merges happen in each
      iteration.
      Hence, $\vietoris_0(\X, t)$, the Vietoris--Rips
      complex of $\Xt$ at scale~$0$, is a representation of the hierarchy of
      UPGMC at iteration~$t$. 
\end{enumerate}
For the setting of $\zeta : 0 < \zeta < \epsilon_t$, the only difference is in the setting of the merge threshold. The proof follows the same logic as the previous theorem (assuming again pairwise distances in each step
are unique) except that the centroid locations are updated as the average of two
points, instead of a weighted average of all points in the two clusters, hence the equivalence with WPGMC.}
\end{proof}


\begin{remark}
  \review{With the conditions of \cref{thm:cluster_converge},} the theorem implies that diffusion condensation converges to a point in $N - 1$ iterations, as each iteration reduces the number of unique point locations by one. Extending this logic to more general settings (where the number of unique points might not strictly decrease in each iteration) is not trivial and is left to future work.
\end{remark}
\begin{remark} This theorem motivates interpreting diffusion
  condensation as a soft hierarchical clustering method, particularly
  with other kernels and in situations where the general
  position assumption does not hold. When points are equally spaced and
  not naturally clusterable, we find diffusion condensation
  more appealing: for instance, consider the corners of
  a $k$-dimensional simplex, with all distances between points being
  equal. The only two ``sensible'' clusterings are $k$ clusters of
  single points, or one cluster with $k$ points. Performing
  agglomerative clustering on this dataset will result in an arbitrary
  binary tree over the data, where all levels of the tree result in
  meaningless clusters. Diffusion condensation with any radial kernel,
  $\epsilon$ schedule, and merge threshold will result in exactly these
  two clusterings.
\end{remark}
\begin{remark}
  \review{This soft clustering interpretation also hints at
  a convergence result with potentially tighter bounds for general
kernels. The geometric results in \cref{sec:geometric} rely on
a pointwise lower bound of the kernel, this can lead to pessimistic
convergence results on kernels similar to the box kernel (for example
consider the $\alpha$-decay kernel with large $\alpha$), which act more
like hierarchical clustering but have poor tail bounds. An interesting
future direction would be to explore geometric convergence for general
kernels in terms of the number of unique points rather than the diameter
following the line of reasoning in \cref{thm:cluster_converge}.}
\end{remark}

\section{Discussion}


Diffusion condensation is a process that alternates between computing a data diffusion operator and applying the operator back on the data to gradually eliminate variation. In this paper, we analyzed the diffusion condensation process from two main perspectives -- its convergence and the evolution of its shape through condensation steps.  We found conditions guaranteeing the convergence of the process using both geometric and spectral arguments. The geometric argument shows that the convex hull of each iteration of data after condensation shrinks in comparison to the previous iteration. The spectral argument reasons that the second largest eigenvalues of the data graph bounds the result of any function multiplied by the diffusion operator. Our spectral results are of particular interest since they are valid for a broad family of diffusion operators creating a time-inhomogeneous process.

Further, we used and extended tools from topological data analysis to
characterize the evolution of the shape of the datasets during the
condensation process. In particular, we defined the condensation homology filtration that operates on the data manifold, and studied the resulting condensation homology. This provides us with a summary of the topological features during the entire process. Since the process is guaranteed to converge, the filtration will sweep through the different resolutions of the data, hence providing meaningful details. With the persistent diffusion homology, we studied the topological features for a given condensation step, resulting in snapshots of topological characteristics of the process.
Furthermore, we provided experiments showcasing the relevance of our analysis, specifically comparing the condensation and persistent homologies, and the usage of condensation for clustering purposes. 

We also showed that instances of diffusion condensation with the box kernel are equivalent to hierarchical clustering algorithms. \review{In future work we would like to extend this equivalence result to other ``softer'' kernels. This could potentially give a tighter convergence bound dependent on the concentration of a kernel and the number of points rather than its tail, which can lead to pessimistic bounds on k-nearest-neighbor random walks. Additionally, we would like to extend}
the definition of the intrinsic condensation filtration to
multidimensional filtrations, for instance by identifying the cycles or
considering path probabilities defined by the diffusion kernel.

\bibliographystyle{siamplain}
\bibliography{tidy}

\appendix
\section[Proof of lemma 3.3]{Proof of \cref{lem:geo_general_conv}}\label{sm:geo_general} 
  %
  %
We shall show that no point from $C \setminus \X$ can be extremal. Take an arbitrary point $v \in C \setminus \X$.
By definition of~$C$, $v$ can be written as a convex combination of all points, i.e.,
\begin{equation*}
    v = \sum_{i=1}^{N} \alpha_i x(i) = \alpha_1 x(1) + (1 - \alpha_1) \sum_{i=2}^{N} \frac{\alpha_i}{1 - \alpha_1} x(i).
\end{equation*}
In particular, since $v \notin \X$, each $\alpha_i$ satisfies $\alpha_i < 1$. Let $w := \sum_{i=2}^{N} \frac{\alpha_i}{1 - \alpha_1} x(i)$. This is a convex combination using points $x(2), \dots, x(N)$, so we may express~$v$ as $v = \alpha_1 x(1) + (1- \alpha_1) w$. Thus, $v$~can be placed on a line segment between two points of the polytope and it is neither the start point nor the end point. As a consequence, the point~$v$ is not extremal. \hfill \proofbox

\section[Proof of Lemma 3.4]{Proof of \cref{lemma:shrinkrate}}\label{sm:shrink}
   
\begin{proof}
    First we show the upper bound for TV distance between any two rows of matrix $\mP_t$. The $(i,j)$ entry of $\mP_t$ is given by
    \begin{equation*}
        \mP_t(i,j) = \frac{\Kt(i,j)}{\sum_j {\Kt(i,j)}},
    \end{equation*}
    where $\Kt$ is constructed using some kernel, and $
        1 \ge \Kt(i,j) \geq \delta > 0$.
    Thus a lower bound for $\Pt(i,j)$ is 
     \begin{equation*}
        \Pt(i,j) \ge \frac{\delta}{N}
    \end{equation*}
    
    The TV distance between any two rows of $\Pt$ 
    \begin{align*}
         d_{TV}(\Pt(i,\cdot), \Pt(j,\cdot)) &= \frac{1}{2} \sum_k | \Pt(i,k) - \Pt(j,k) |  \\
         &=\frac{1}{2} \sum_k \Pt(i,k) + \Pt(j,k) - 2 \min \{\Pt(i,k), \Pt(j,k)\}\\
         &\le 1 - \sum_k\min \{\Pt(i,k)\} \\
            &\le 1-\delta.\\
    \end{align*}
    
    After step $t$, two transformed data points:
  \begin{align*}
      x_{t+1}(i) &= \Pt(i,\cdot) \Xt \\
      x_{t+1}(j) &= \Pt(j,\cdot) \Xt.
  \end{align*}
  
  Consider a pair of random variables $Z_1$ and $Z_2$ of $\Pt(i, \cdot)$ and $\Pt(j, \cdot)$ respectively, and the joint distribution $\xi$ of $(Z_1,Z_2)$ on $[N]\times[N]$. Then $\xi$ satisfies $\sum_{z_2 \in [N]} \xi(z_1, z_2) = \Pt(i,\cdot)$ and $\sum_{z_1 \in [N]} \xi(z_1, z_2) = \Pt(j,\cdot)$. 
  
The distance between two points after step $t$
  \begin{align*}
       \| x_{t+1}(i) - x_{t+1}(j)\|_2 & = \|(\Pt(i,\cdot)-\Pt(j,\cdot)) \Xt\|_2 =\| \sum_{z_1, z_2} \xi(z_1, z_2)(x_t(z_1) -x_t(z_2) )\|_2 \\
       & \leq\sum_{z_1, z_2} \xi(z_1, z_2) \|x_t(z_1) -x_t(z_2)\|_2 \\
       & = \sum_{z_1 \neq z_2} \xi(z_1, z_2) \|x_t(z_1) -x_t(z_2)\|_2 \\
       &\leq \diam(\X_t) \sum_{z_1 \neq z_2} \xi(z_1, z_2).
  \end{align*}
\textcolor{blue}{From the coupling lemma~\cite{aldous1983random}}, we can choose the optimal coupling $(Z_1, Z_2)$,\\ so that $\sum_{z_1 \neq z_2} \xi(z_1, z_2) = d_{TV}(\Pt(i,\cdot),\Pt(j,\cdot))$. Then we obtain the upper bound for the distance between any two points after step $t$,
  \begin{equation*}
      \| x_{t+1}(i) - x_{t+1}(j)\|_2 \leq  d_{TV}(\Pt(i,\cdot), \Pt(j,\cdot)) 
      \diam(\X_t), \quad \forall (i,j)\in N^2.
  \end{equation*}
  Thus, $\diam(\X_{t+1})  \leq (1-\delta) \diam(\X_t)$.
\end{proof}

\section[Proof of Lemma]{Proof of \cref{lemma: bound Hpf_Hf}}\label{sm:bound Hpf-Hf}
\label{supp_sec:proof}
\begin{proof} 

Recall that we can write
\begin{equation*}
    H_t(\Pt f) = \sum_{k=2}^N \lambda_{t,k}\langle f,\psi_{t,k} \rangle_{d_t}\psi_{t,k}.
\end{equation*}
Hence, we have\begin{equation*}
    \norm{H_t(\Pt f)}_{d_t}^2 = \sum_{k=2}^N \lambda_{t,k}^2 |\langle f,\psi_{t,k} \rangle_{d_t}|^2\leq \lambda_{t,2}^2 \sum_{k=2}^N |\langle f,\psi_{t,k} \rangle_{d_t}|^2 = \lambda_{t,2}^2\norm{H_t(f)}_{d_t}^2,
\end{equation*}
which concludes the proof.
\end{proof}

\section[Proof of lemma 4.2]{Proof of \cref{lemma:change_of_measure}}\label{sm:change_of_measure}
\begin{proof}
We start by proving the first inequality. For two time indices $s$ and $t$ we have
\begin{align*}
    \norm{f}^2_{d_t} &= \sum_{i=1}^N d_t(i)|f(x(i))|^2 = \sum_{i=1}^N\frac{d_t(i)}{d_s(i)}d_s(i)|f(x(i))|^2\\ &\leq \norm{d_t/d_s}_\infty\sum_{i=1}^Nd_s(i)|f(x(i))|^2 = \norm{d_t/d_s}_\infty\norm{f}^2_{d_s}.
\end{align*}
For the proof of the second inequality, we first need to note that $1/d_t(i) \leq 1$ for all $i\in\{1,\dotsc,N\}$. Indeed, since we have 
\begin{align*}
    d_t(i) = \sum_{i=1}^N \mK_t(i,j) = \mK_t(i,i) + \sum_{j\neq i}\mK_t(i,j) = 1 + \sum_{j\neq i}\mK_t(i,j) \geq 1.
\end{align*}

Now we can write
\begin{align}
    \norm{d_t/d_s}_\infty = \max_{i}\frac{\abs{d_t(i)}}{\abs{d_s(i)}} &\leq \max_{i}\frac{\abs{d_t(i)-d_s(i)}+\abs{d_s(i)}}{d_s(i)}\notag\\
    &\leq \max_{i}\abs{d_t(i)-d_s(i)}\, \max_{i}\frac{1}{d_s(i)} + 1\notag\\
    \label{inproof: inv_norm_bound}
    &\leq \max_{i}\abs{d_t(i)-d_s(i)} + 1\\
    &\leq \norm{d_t-d_s}_2+1,\notag
\end{align}
where the inequality \eqref{inproof: inv_norm_bound} is obtained by the fact that $1/d_t(i) \leq 1$. Thus, we can conclude that $ \norm{f}^2_{d_t} \leq  \norm{d_t/d_s}_\infty\norm{f}^2_{d_s} \leq (\norm{d_t-d_s}_2+1)\norm{f}^2_{d_s}$.
\end{proof}

\section[Proof of lemma 4.3]{Proof of \cref{lemma: bound Ls-Lt}}\label{sm: bound ls-lt}
\begin{proof}
Since the first eigenvalue of $\Pt$ is $\lambda_{t,1} = 1$, we have $L_t(\Pt f) = L_t(f)$ and we can write
\begin{equation*}
    \mP_t f =\langle f,\mathds{1} \rangle_{\pi_t} \mathds{1} + \sum_{k=2}^N \lambda_{t,k} \langle f,\psi_{t,k} \rangle_{d_t} \psi_{t,k}.
\end{equation*}
Substituting the previous expression in $L_s(\Pt f)$ yields
\begin{align*}
    L_s(\Pt f) &= \big\langle \langle f,\mathds{1} \rangle_{\pi_t} \mathds{1} + \sum_{k=2}^N \lambda_{t,k} \langle f,\psi_{t,k} \rangle_{d_t} \psi_{t,k} ,\mathds{1} \big\rangle_{\pi_s} \mathds{1}\\
    &=  \big[\langle f,\mathds{1} \rangle_{\pi_t}\langle \mathds{1},\mathds{1} \rangle_{\pi_s} + \sum_{k=2}^N \lambda_{t,k} \langle f,\psi_{t,k} \rangle_{d_t} \langle\psi_{t,k},\mathds{1}\rangle_{\pi_s}\big] \mathds{1}\\
    &= \norm{d_s}_1^{-1}\big[\norm{d_t}_1^{-1}\langle f,\mathds{1} \rangle_{d_t} \langle \mathds{1},\mathds{1} \rangle_{d_s} + \sum_{k=2}^N \lambda_{t,k} \langle f,\psi_{t,k} \rangle_{d_t} \langle\psi_{t,k},\mathds{1}\rangle_{d_s}\big] \mathds{1}\\
    &= \big[\norm{d_t}_1^{-1}\langle f,\mathds{1} \rangle_{d_t} +  \norm{d_s}_1^{-1}\sum_{k=2}^N \lambda_{t,k} \langle f,\psi_{t,k} \rangle_{d_t} \langle\psi_{t,k},\mathds{1}\rangle_{d_s}\big] \mathds{1}, 
\end{align*}
where the last equality is due to the fact that $ \langle \mathds{1},\mathds{1} \rangle_{d_s} = \norm{d_s}_1$. We already observed that $\norm{d_t}_1^{-1}\langle f,\mathds{1} \rangle_{d_t} = L_t(f) = L_t(\Pt f)$, therefore 
\begin{equation*}
    L_t(\Pt f) - L_s(\Pt f) =  -\norm{d_s}_1^{-1}\sum_{k=2}^N \lambda_{t,k}\langle f,\psi_{t,k} \rangle_{d_t}\langle\psi_{t,k},\mathds{1},\rangle_{d_s}\mathds{1}.
\end{equation*}
This last equation can be upper bounded by observing that $\norm{c\mathds{1}}_{d_t} = \abs{c}\norm{d_t}_1^{1/2}$, and $N\leq \norm{d_t}_1\leq N^2$. Indeed, we can write 
\begin{align}
    \norm{L_t(\Pt f) - L_s(\Pt f)}_{d_t} &= \norm{d_s}_1^{-1}\norm{d_t}_1^{1/2}\abs{\sum_{k=2}^N \lambda_{t,k}\langle f,\psi_{t,k} \rangle_{d_t}\langle\psi_{t,k},\mathds{1},\rangle_{d_s}}\notag\\
    &\leq N^{-1} N \abs{\sum_{k=2}^N \lambda_{t,k}\langle f,\psi_{t,k} \rangle_{d_t}\langle\psi_{t,k},\mathds{1},\rangle_{d_s}}\notag\\
    &\leq \sum_{k=2}^N \lambda_{t,k}|\langle f,\psi_{t,k} \rangle_{d_t}|\, |\langle\psi_{t,k},\mathds{1},\rangle_{d_s}| \notag\\
    &\leq \bigg[ \sum_{k=2}^N \lambda_{t,k}|\langle f,\psi_{t,k} \rangle_{d_t}|^2 \bigg]^{1/2}\bigg[ \sum_{k=2}^N |\langle\psi_{t,k},\mathds{1},\rangle_{d_s}|^2 \bigg]^{1/2}\notag\\
    &\leq  \lambda_{t,2} \,\norm{H_t(f)}_{d_t}\bigg[ \sum_{k=2}^N |\langle\psi_{t,k},\mathds{1},\rangle_{d_s}|^2 \bigg]^{1/2}\label{eq_sup:proof_lemma4.3_bound}.
\end{align}
To finish the proof, we need to bound the term in bracket
\begin{align*}
    \sum_{k=2}^N |\langle\psi_{t,k},\mathds{1}\rangle_{d_s}|^2 &=  \sum_{k=2}^N\Big|\sum_{i=1}^N\psi_{t,k}(i)d_s(i)\Big|^2 = \sum_{k=2}^N\Big|\sum_{i=1}^N\psi_{t,k}(i)[d_s(i)-d_t(i) + d_t(i)]\Big|^2\\
    &= \sum_{k=2}^N |\langle\psi_{t,k},d_s-d_t\rangle + \langle\psi_{t,k},\mathds{1}\rangle_{d_t}|^2 = \sum_{k=2}^N\abs{\langle\psi_{t,k},d_s-d_t\rangle}^2\\
    &\leq \sum_{k=2}^N \norm{\psi_{t,k}}_2^2\norm{d_s-d_t}_2^2\\
    &\leq \norm{d_s-d_t}_2^2\norm{1/d_t}_\infty\sum_{k=2}^N \norm{\psi_{t,k}}_{d_t}^2\leq \norm{d_s-d_t}_2^2\sum_{k=2}^N \norm{\psi_{t,k}}_{d_t}^2\\
    &\leq N\norm{d_s-d_t}_2^2.
\end{align*}
Finally, we conclude by substituting the previous bound in the inequality \ref{eq_sup:proof_lemma4.3_bound}.
\end{proof}

\section{Topological data analysis~(TDA)}\label{sm:TDA}

This section provides
a brief introduction to the most relevant concepts in the emerging field of
topological data analysis, namely
\begin{inparaenum}[(i)]
  \item simplicial homology,
  \item persistent homology, and
  \item their calculation in the context of point clouds.
\end{inparaenum}
We refer readers to Edelsbrunner and Harer~\cite{Edelsbrunner10} for
a comprehensive description of these topics.

\paragraph{Simplicial homology}
%
Simplicial homology refers to a way of assigning connectivity
information to topological objects, such as manifolds, which are
represented by simplicial complexes.
A simplicial complex~$\simplicialcomplex$ is a set of \emph{simplices}
of some dimensions. These may be considered as subsets of an index set,
with nomenclature typically referring to vertices~(dimension~$0$),
edges~(dimension~$1$), and triangles~(dimension~$2$).
The subsets of a simplex~$\sigma \in \simplicialcomplex$ are referred to
as its \emph{faces}, and every face~$\tau$ needs to satisfy $\tau \in
\simplicialcomplex$. Moreover, any non-empty intersection of two
simplices also needs to be part of the simplicial complex, i.e.,
$\sigma \cap \sigma' \neq \emptyset$ for $\sigma, \sigma' \in
\simplicialcomplex$ implies $\sigma \cap \sigma' \in \simplicialcomplex$.
Therefore, $\simplicialcomplex$ is ``closed under
calculating the faces of a simplex.''

\paragraph{Chain groups}
%
To characterize simplicial complexes, it is necessary to imbue them with
additional algebraic structures.
For a simplicial complex $\simplicialcomplex$, let
$\chaingroup{d}(\simplicialcomplex)$ be the vector space generated over
$\mathds{Z}_2$~(the field with two elements), also known as the
\emph{chain group in dimension $d$}. The elements of
$\chaingroup{d}(\simplicialcomplex)$ are the $d$-simplices in
$\simplicialcomplex$ and their formal sums, with coefficients in
$\mathds{Z}_2$.
For instance, $\sigma + \tau$ is an
element of the chain group, also called a \emph{simplicial chain}.
Addition is well-defined and easy to implement since
a simplex can only be present or absent over~$\mathds{Z}_2$
coefficients.
The use of chain groups lies in providing the underlying vector space
to formalize boundary calculations over a simplicial complex, which in
turn are required for defining connectivity.

\paragraph{Boundary homomorphism and homology groups}
%
Given a \mbox{$d$-simplex} $\sigma = \{v_0,\dotsc,v_d\} \in
\simplicialcomplex$, its boundary is defined in terms of the boundary
operator 
$\boundaryop{d}\colon\chaingroup{d}(\simplicialcomplex)\to\chaingroup{d-1}(\simplicialcomplex)$,
with
\begin{equation}
  \boundaryop{d}(\sigma) := \sum_{i=0}^{d}(v_0,\dotsc, v_{i-1},v_{i+1},\dotsc, v_d),
\end{equation}
i.e., we leave out every vertex~$v_i$ of the simplex once. This is a map
between chain groups, and since only sum operations are involved, it is
readily seen to be a homomorphism. By linearity, we can extend this
calculation to $\chaingroup{d}(\simplicialcomplex)$.
The boundary homomorphism gives us a way to precisely define
connectivity by means of calculating its \emph{kernel} and
\emph{image}. Notice that the kernel~$\ker\boundaryop{d}$ contains
all \mbox{$d$-dimensional} simplicial chains that do not have
a boundary.
Finally, the $d$th homology group $\homologygroup{d}(\simplicialcomplex)$ of $\simplicialcomplex$ is defined as
the \emph{quotient group} $\homologygroup{d}(\simplicialcomplex) := \ker\boundaryop{d} / \im\boundaryop{d+1}$.
It contains all topological features---represented using simplicial
chains---that have no boundary while also not being the boundary of
a higher-dimensional simplex. Colloquially, the homology group therefore
measures the ``holes'' in $\simplicialcomplex$.

\paragraph{Betti numbers}
%
The \emph{rank} of the
$d$th homology group is an important
invariant of a simplicial complex, known as the $d$th Betti number~$\betti{d}$,
i.e., $\betti{d}(\simplicialcomplex) := \rank\homologygroup{d}(\simplicialcomplex)$.
The sequence of Betti numbers $\betti{0},\dotsc,\betti{d}$ of
a $d$-dimensional simplicial complex is commonly used to discriminate
between manifolds.
For example, a $2$-sphere has Betti numbers $(1,0,1)$, while a $2$-torus
has Betti numbers $(1,2,1)$.
Betti numbers are limited in expressivity when dealing with real-world
data sets because they are highly dependent on a specific choice of
simplicial complex~$\simplicialcomplex$. This limitation prompted the
development of persistent homology.

\paragraph{Persistent homology}
%
Persistent homology is an extension of simplicial homology. At its core,
it employs \emph{filtrations} to imbue a simplicial
complex~$\simplicialcomplex$ with scale information, resulting in
multi-scale topological information. We assume the existence of
a function $f\colon\simplicialcomplex\to\reals$, which only attains
a finite number of function values
$f^{(0)} \leq f^{(1)} \leq \dotsc \leq f^{(m-1)} \leq f^{(m)}$.
This permits us to sort~$\simplicialcomplex$ according
to~$f$, for example by extending~$f$ linearly to higher-dimensional
simplices via $f(\sigma) := \max_{v \in \sigma} f(v)$, leading to
a nested sequence of simplicial complexes
\begin{equation}
  \emptyset = \simplicialcomplex^{(0)} \subseteq \simplicialcomplex^{(1)} \subseteq \dots \subseteq \simplicialcomplex^{(m-1)} \subseteq \simplicialcomplex^{(m)} = \simplicialcomplex,
  \label{eq:Filtration}
\end{equation}
where $\simplicialcomplex^{(i)} := \left\{\sigma \in K \mid f(\sigma) \leq
f^{(i)} \right\}$. Each of these simplicial complexes therefore only
contains those simplices whose function value is less than or equal
to the threshold.
In contrast to simplicial homology, the filtration is more expressive,
because it permits us to track changes. For instance, a topological
feature might be \emph{created}~(a new connected component might arise)
or \emph{destroyed}~(two connected components might merge into one), as
we pass from $\simplicialcomplex^{(i)}$ to
$\simplicialcomplex^{(i+1)}$.
Persistent homology provides a principled way of tracking topological
features, representing each one by a creation and destruction value
$(f^{(i)}, f^{(j)}) \in \reals^2$ based on the filtration function, with $i \leq j$.
In case a topological feature is still present at the end of the
filtration, we refer to the feature as being \emph{essential}. These
features are the ones that are counted for the Betti number calculation.
It is also possible to obtain only tuples with finite persistence
values, a process known as \emph{extended
persistence}~\cite{Cohen-Steiner09}, but we eschew this concept in this
work for reasons of computational complexity.
Every filtration induces an inclusion homomorphism between
$\simplicialcomplex^{(i)} \subseteq \simplicialcomplex^{(i+1)}$. The
respective boundary homomorphisms in turn induce a homomorphism between corresponding homology
groups of the simplicial complexes of the filtration. These are maps of the form $\mathfrak{i}_d^{(i,j)} \colon \homologygroup{d}(\simplicialcomplex_i) \to
\homologygroup{d}(\simplicialcomplex_j)$.
This family of homomorphisms now gives rise to a sequence of homology groups
\begin{equation}
    \homologygroup{d}\left(\simplicialcomplex^{(0)}\right)
    \xrightarrow{\mathfrak{i}_d^{(0,1)}}
    \homologygroup{d}\left(\simplicialcomplex^{(1)}\right)
    \xrightarrow{\mathfrak{i}_d^{(1,2)}}  \dots
    \xrightarrow{\mathfrak{i}_d^{(m-2,m-1)}}
    \homologygroup{d}\left(\simplicialcomplex^{(m-1)}\right)\xrightarrow{\mathfrak{i}_d^{(m-1,m)}}\homologygroup{d}\left(\simplicialcomplex^{(m)}\right)
\end{equation}
for every dimension $d$, with $\homologygroup{d}\left(\simplicialcomplex^{(m)}\right)
= \homologygroup{d}\left(\simplicialcomplex\right)$.
For $i \leq j$, the $d$th persistent
homology group is defined as
\begin{equation}
  \persistenthomologygroup{d}{i,j} :=
  \ker\boundaryop{d}\left(\simplicialcomplex^{(i)}\right) / \left(
  \im\boundaryop{d+1}\left(\simplicialcomplex^{(j)}\right)\cap\ker\boundaryop{d}\left(\simplicialcomplex^{(i)}\right)\right).
\end{equation}
This group affords an intuitive description: it contains all homology classes \emph{created} in
$\simplicialcomplex^{(i)}$ that are \emph{still} present in
$\simplicialcomplex^{(j)}$.
We can now define a variant of the aforementioned Betti numbers, the $d$th
persistent Betti number, namely,
  $\persistentbetti{d}{i,j} := \rank \persistenthomologygroup{d}{i,j}$.
Since the persistent Betti numbers are indexed by $i$ and $j$, we can
consider persistent homology as a way of generating a sequences of Betti
numbers, as opposed to just calculating \emph{one} single number. This
sequence can be summarized in a \emph{persistence diagram}.

\paragraph{Persistence diagrams and pairings}
%
Given a filtration induced by a function $f\colon\simplicialcomplex\to\reals$ as described above, 
each tuple $(f^{(i)}, f^{(j)})$ is stored with multiplicity
\begin{equation}
  \mu_{i,j}^{(d)} := \left( \persistentbetti{d}{i,j-1} - \persistentbetti{d}{i,j} \right) - \left( \persistentbetti{d}{i-1,j-1} - \persistentbetti{d}{i-1,j} \right)
\end{equation}
in the $d$th persistence diagram $\diagram_d$, which is a multiset in
the extended Euclidean plane $\reals \times \reals \cup \{\infty\}$,
including all tuples of the form $(c, c)$ with infinite
multiplicity~(thus simplifying the calculation of distances).
Notice that for most pairs of indices, $\mu_{i,j}^{(d)} = 0$, so the
practical number of tuples is not quadratic in the number of function
values.
For a point $(x,y) \in \diagram_d$, we refer to the quantity
$\persistence(x,y) := |y-x|$ as its \emph{persistence}.
The idea of persistence arose in multiple contexts~\cite{Barannikov94,
Edelsbrunner02, Verri93}, but it is nowadays commonly used to analyze
functions on manifolds, where high persistence is seen to correspond to
\emph{features} of the function, while low persistence is typically
considered \emph{noise}.
Finally, we remark that persistence diagrams keep track of topological features by associating
them with tuples. In this perspective, the identity of topological
features, i.e., the pair of simplices involved in its creation or
destruction, is lost. A \emph{persistence pairing}~$\mathcal{P}$ rectifies this by
storing tuples of simplices~$(\sigma, \tau)$, where $\sigma$ is
a \mbox{$k$-simplex}~(the creator of the feature) and $\tau$ is
a \mbox{$(k+1)$-simplex}~(the destroyer of the feature). This pairing is
known to be unique in the sense that a simplex can either be a creator
or a destroyer, but not both~\cite{Edelsbrunner02}. The persistence
pairing and the persistence diagram are equivalent if and only if the
filtration is injective on the level of \mbox{$0$-simplices}, i.e.,
there are no duplicate filtration values.
In the intrinsic diffusion homology, we make use of the pairing to
track the hierarchical information created during the diffusion
condensation process.

\paragraph{Distances and stability}
%
Persistence diagrams can be endowed with a metric, known as the
\emph{bottleneck distance}. This metric is used to assess the stability
of persistence diagrams with respect to perturbations of their input
function. For two persistence diagrams $\diagram$ and $\diagram'$, their bottleneck distance is
calculated as
\begin{equation}
  \bottleneck(\diagram, \diagram') = \inf_{\eta\colon \diagram \to \diagram'}\sup_{x\in{}\diagram}\|x-\eta(x)\|_\infty,
\end{equation}\label{eq:supp:Bottleneck distance}
where $\eta\colon\diagram \to \diagram'$ denotes a bijection between the
point sets of both diagrams, and $\|\cdot\|_\infty$ refers to the
$\mathrm{L}_\infty$ metric between two points in $\reals^2$.
Calculating \eqref{eq:supp:Bottleneck distance} requires solving an optimal
assignment problem; recent work~\cite{Kerber17} discusses efficient
approximation strategies.
The primary appeal of the bottleneck distance is that it can be related
to the Hausdorff distance, thus building a bridge between geometry and
topology. A seminal stability theorem~\cite{Cohen-Steiner07} states that 
distances between persistence diagrams are bounded by the distance of
the functions that give rise to them: given a simplicial
complex~$\simplicialcomplex$ and two monotonic functions
$f,g\colon\simplicialcomplex\to\reals$, their corresponding persistence
diagrams $\diagram_f$ and $\diagram_g$ satisfy
\begin{equation}
  \bottleneck(\diagram_f, \diagram_g) \leq \|f-g\|_\infty,
  \label{eq:Bottleneck stability}
\end{equation}
where $\|f-g\|_\infty$ refers to the Hausdorff between the two
functions. In \cref{sec:topological}, we make use of \cref{eq:Bottleneck stability} and
a more generic stability bound when we characterize diffusion condensation in topological terms.

\paragraph{Vietoris--Rips complexes}
%
The formulation of persistent homology hinges on the generation of
a simplicial complex. We will use the Vietoris--Rips
complex, a classical construction~\cite{Vietoris27} that requires
a distance threshold $\delta$\footnote{%
  Usually, this threshold is referred to as $\epsilon$ in the TDA
  literature. We refrain from this in order to avoid confusing it
  with the kernel smoothing parameter.
} and a metric $\dist(\cdot, \cdot)$ such as the Euclidean distance.
The Vietoris--Rips complex at scale~$\delta$ of an input data set is defined as 
  $\vietoris_{\delta}\left(\mathsf{X}\right) := \{ \sigma \subseteq \mathsf{X} \mid \dist(x(i),
  x_(j)) \leq \delta \text{ for all } x(i), x(j) \in \sigma \}$,
i.e., $\vietoris_{\delta}\left(\mathsf{X}\right)$ contains all subsets
of the input space whose pairwise distances are less than or equal
to~$\delta$.
In this formulation, each simplex of $\vietoris_{\delta}$ is assigned
a weight according to the maximum distance of its vertices, leading to
$\mathrm{w}(\sigma) := \max_{\{x(i), x(j)\} \subseteq \sigma} \dist(x(i),
x(j))$ and $\mathrm{w}(\tau) = 0$ for \mbox{0-simplices}. Other weight
assignment strategies are also possible, but this distance-based
assignment enjoys stability properties~\cite{Chazal14}, similar
to~\eqref{eq:Bottleneck stability}. Letting $\vietoris_{\delta}\left(\mathsf{X}\right)$
and $\vietoris_{\delta}\left(\mathsf{Y}\right)$ refer to the
Vietoris--Rips complexes of two spaces $\mathsf{X}, \mathsf{Y}$, their
corresponding persistence diagrams $\diagram_{\mathsf{X}},
\diagram_{\mathsf{Y}}$ satisfy
\begin{equation}
  \bottleneck(\diagram_{\mathsf{X}}, \diagram_{\mathsf{Y}}) \leq 2 \dgromovhausdorff(\mathsf{X}, \mathsf{Y}),
  \label{eq:Bottleneck stability GH}
\end{equation}
with $\dgromovhausdorff(\cdot, \cdot)$ denoting the Gromov--Hausdorff
distance. This bound, originally due to Chazal et al.~\cite{Chazal09,
Chazal14}, is useful in relating geometrical and topological
properties of the diffusion condensation process in \cref{thm:persistence}.
\end{document}